\newtheorem{lem}{Lemma}
\newtheorem{thm}{Theorem}
\newtheorem{ass}{Assumption}
\newtheorem{cor}{Corollary}
\newtheorem{defn}{Definition}
\def\eps{\ensuremath{\epsilon}\xspace}
\def\expt{\ensuremath{\mathds{E}}}
\def\larrow{\ensuremath{\leftarrow}\xspace} 
\def\rarrow{\ensuremath{\rightarrow}\xspace} 
\def\expt{\ensuremath{\mathds{E}}}
\def\T{\ensuremath{\top}}  
\def\sig{\ensuremath{\sigma}\xspace}
\def\P{\ensuremath{\mathbb{P}}} 
\def\Pr{\ensuremath{\mbox{Pr}}} 
\def\eps{\ensuremath{\epsilon}\xspace}
\def\dt{{\ensuremath{\delta}\xspace} }
\newcommand{\what}[1]{ {\ensuremath{\widehat{#1}}} }
\def\barV{\ensuremath{\overline{V}}\xspace}
\def\sig{\ensuremath{\sigma}\xspace}
\def\lcl{\lceil}  
\def\rcl{\rceil}
\newcommand{\fr}[2]{ { \frac{#1}{#2} }}
\def\lt{\left}
\def\rt{\right}
\def\gam{{\ensuremath{\gamma}\xspace} }
\newcommand{\vast}{\bBigg@{3}}
\newcommand{\Vast}{\bBigg@{4}}
\def\dsR{{{\mathds{R}}}}
\def\calX{\ensuremath{\mathcal{X}}\xspace} 
\def\cX{\ensuremath{\mathcal{X}}\xspace} 
\def\th{\ensuremath{\boldsymbol{\theta}}\xspace} 
\def\x{{{\mathbf x}}}
\def\y{{{\mathbf y}}}
\def\z{{{\mathbf z}}}
\def\la{{\langle}}
\def\ra{{\rangle}}
\def\det{\ensuremath{\mbox{det}}}
\def\lam{\ensuremath{\lambda}}
\def\barV{\ensuremath{\overline{\mathbf{V}}}\xspace} 
\def\ibarV{\ensuremath{\overline{\mathbf{V}}^{-1}}\xspace} 
\def\hatth{\ensuremath{\widehat{\boldsymbol\theta}}\xspace} 
\def\tilth{\ensuremath{\tilde{\boldsymbol\theta}}\xspace}
\def\calL{\ensuremath{\mathcal{L}}\xspace}
\newcommand{\cmark}{\ding{51}}%
\newcommand{\gyes}{{\color[rgb]{0,.8,0}\cmark}}
\newcommand{\xmark}{\ding{55}}%
\newcommand{\rno}{{\color[rgb]{.8,0,0}\xmark}}
\def\p{{{\mathbf p}}}
\def\q{{{\mathbf q}}}
\def\cB{\ensuremath{\mathcal{B}}\xspace} 
\def\cA{\ensuremath{\mathcal{A}}\xspace}
\def\Regret{\ensuremath{\normalfont{\text{Regret}}}}
\def\ccH{\ensuremath{c_\text{\normalfont H}}}
\def\bfeta{\ensuremath{\boldsymbol{\eta}}} 
\def\X{\ensuremath{\mathbf{X}}} 
\def\I{\ensuremath{\mathbf{I}}} 
\def\til{\tilde} 
\def\A{\ensuremath{\mathbf{A}}}
\newcommand{\wbar}[1]{ {\ensuremath{\overline{#1}}} }
\def\barbeta{\wbar{\beta}}
\def\dsR{\ensuremath{{\mathds R}}\xspace}
\def\calX{\ensuremath{{\mathcal X}}\xspace}
\def\th{\ensuremath{\theta}\xspace}
\def\hth{\ensuremath{\what{\theta}}\xspace}
\def\th{{{\boldsymbol \theta}}} 
\def\bfeta{{{\boldsymbol \eta}}} 
\def\hth{{\what{\boldsymbol \theta}}} 
\def\X{\ensuremath{\mathbf{X}}\xspace}
\def\x{\ensuremath{\mathbf{x}}\xspace}
\def\lam{{\ensuremath{\lambda}\xspace} }
\def\a{\ensuremath{\mathbf{a}}}
\def\tL{\ensuremath{\mathbf{\text{L}}}}
\def\Z{\ensuremath{\mathbf{Z}}}
\def\lnorm{\left|\left|}
\def\rnorm{\right|\right|}
\newlength{\dhatheight}
\def\kap{{\ensuremath{\kappa}}}
\newcommand{\blue}[1]{{\color[rgb]{.1,.1,1}#1}}
\def\imagetop#1{\vtop{\null\hbox{#1}}}
\def\Var{{\ensuremath{\text{\normalfont Var}}}}
\def\ONS{{\ensuremath{\text{\normalfont ONS}}}}        
\def\GLOC{{\ensuremath{\text{\normalfont GLOC}}}}        
\def\QGLOC{{\ensuremath{\text{\normalfont QGLOC}}}}        
\def\GLOCTS{{\ensuremath{\text{\normalfont GLOC-TS}}}}        
\def\barW{\ensuremath{\overline{\mathbf{W}}}} 
\def\ibarW{\ensuremath{\overline{\mathbf{W}}^{-1}}} 
\def\vec{\ensuremath{{\text{vec}}}}
\def\bfxi{{{\boldsymbol\xi}}}
\def\cN{\ensuremath{\mathcal{N}}\xspace} 
\def\cH{\ensuremath{\mathcal{H}}\xspace}
\newcommand{\vct}[1]{\mathbf{#1}}
\newcommand{\vq}{\vct{q}}
\newcommand{\va}{\vct{a}}
\newcommand{\lv}{\left\vert}
\newcommand{\rv}{\right\vert}
\newcommand{\lV}{\left\Vert}
\newcommand{\rV}{\right\Vert}
\renewcommand{\Pr}{\mathbb{P}}
\def\dtH{\ensuremath{\dt_\text{\normalfont H}}}
\def\barm{\ensuremath{\overline{m}}}
\title{Scalable Generalized Linear Bandits: \\Online Computation and Hashing}
\author{
  \begin{tabular}{cc}
  Kwang-Sung Jun & Aniruddha Bhargava\\
  UW-Madison & UW-Madison \\
  \texttt{kjun@discovery.wisc.edu} & \texttt{aniruddha@wisc.edu} \\
   & \\
  Robert Nowak & Rebecca Willett\\
  UW-Madison & UW-Madison\\
  \texttt{rdnowak@wisc.edu} & \texttt{willett@discovery.wisc.edu} \\
  \end{tabular}
}
\begin{document}

\setlength{\abovedisplayskip}{5pt}
\setlength{\belowdisplayskip}{4pt}
\setlength{\abovedisplayshortskip}{5pt}
\setlength{\belowdisplayshortskip}{4pt}



\date{}
\maketitle

\begin{abstract}
  Generalized Linear Bandits (GLBs), a natural extension of the stochastic linear bandits, has been popular and successful in recent years.
  However, existing GLBs scale poorly with the number of rounds and the number of arms, limiting their utility in practice.
  This paper proposes new, scalable solutions to the GLB problem in two respects.
  First, unlike existing GLBs, whose per-time-step space and time complexity grow at least linearly with time $t$, we propose a new algorithm that performs online computations to enjoy a constant space and time complexity.
  At its heart is a novel Generalized Linear extension of the Online-to-confidence-set Conversion (GLOC method) that takes \emph{any} online learning algorithm and turns it into a GLB algorithm.
  As a special case, we apply GLOC to the online Newton step algorithm, which results in a low-regret GLB algorithm with much lower time and memory complexity than prior work.
  Second, for the case where the number $N$ of arms is very large, we propose new algorithms in which each next arm is selected via an inner product search.
  Such methods can be implemented via hashing algorithms (i.e., ``hash-amenable'') and result in a time complexity sublinear in $N$.
  While a Thompson sampling extension of GLOC is hash-amenable, its regret bound for $d$-dimensional arm sets scales with $d^{3/2}$, whereas GLOC's regret bound scales with $d$.
  Towards closing this gap, we propose a new hash-amenable algorithm whose regret bound scales with $d^{5/4}$.
  Finally, we propose a fast approximate hash-key computation (inner product) with a better accuracy than the state-of-the-art, which can be of independent interest.
  We conclude the paper with preliminary experimental results confirming the merits of our methods. 
  \vspace{-8pt}
\end{abstract}

\section{Introduction}
\label{sec:intro}

This paper considers the problem of making generalized linear bandits (GLBs) scalable.
In the stochastic GLB problem, a learner makes successive decisions to maximize her cumulative rewards.
Specifically, at time $t$ the learner observes a set of arms $\cX_t\subseteq\dsR^d$.
The learner then chooses an arm $\x_t\in\cX_t$ and receives a stochastic reward $y_t$ that is a noisy function of $\x_t$:
  $  y_t = \mu(\x_t^\T\th^*) + \eta_t $,
  where $\th^*\in\dsR^d$ is unknown, $\mu\hspace{-2pt}:\hspace{-2pt}\dsR\hspace{-2pt}\rarrow\hspace{-2pt}\dsR$ is a known nonlinear mapping, and $\eta_t\in\dsR$ is some zero-mean noise. 
This reward structure encompasses generalized linear models~\cite{mccullagh89generalized}; e.g., Bernoulli, Poisson, etc.

The key aspect of the bandit problem is that the learner does not know how much reward she would have received, had she chosen another arm.
The estimation on $\th^*$ is thus biased by the history of the selected arms, and one needs to mix in exploratory arm selections to avoid ruling out the optimal arm.
This is well-known as the exploration-exploitation dilemma. 
The performance of a learner is evaluated by its \emph{regret} that measures how much cumulative reward she would have gained additionally if she had known the true $\th^*$.
We provide backgrounds and formal definitions in Section~\ref{sec:prelim}.

A linear case of the problem above ($\mu(z)=z$) is called the (stochastic) linear bandit problem.
Since the first formulation of the linear bandits~\cite{auer02using}, there has been a flurry of studies on the problem~\cite{dani08stochastic,rusmevichientong10linearly,ay11improved,chu11contextual,agrawal13thompson}.
In an effort to generalize the restrictive linear rewards, \citet{filippi10parametric} propose the GLB problem and provide a low-regret algorithm, whose Thompson sampling version appears later in~\citet{abeille17linear}.
\citet{li12anunbiased} evaluates GLBs via extensive experiments where GLBs exhibit lower regrets than linear bandits for 0/1 rewards.
\citet{li17provable} achieves a smaller regret bound when the arm set $\cX_t$ is finite, though with an impractical algorithm.

\textit{However, we claim that all existing GLB algorithms~\cite{filippi10parametric,li17provable} suffer from two scalability issues that limit their practical use: (i) under a large time horizon and (ii) under a large number $N$ of arms.}

First, existing GLBs require storing all the arms and rewards appeared so far, $\{(\x_s,y_s)\}_{s=1}^t$, so the space complexity grows linearly with $t$.
Furthermore, they have to solve a batch optimization problem for the maximum likelihood estimation (MLE) at each time step $t$ whose per-time-step time complexity grows at least linearly with $t$.
While~\citet{zhang16online} provide a solution whose space and time complexity do not grow over time, they consider a specific 0/1 reward with the logistic link function, and a generic solution for GLBs is not provided.


Second, existing GLBs have linear time complexities in $N$.
This is impractical when $N$ is very large, which is not uncommon in applications of GLBs such as online advertisements, recommendation systems, and interactive retrieval of images or documents~\cite{li10acontextual,li12anunbiased,yue12hierarchical,hofmann11contextual,konyushkova13content} where arms are items in a very large database.
Furthermore, the interactive nature of these systems requires prompt responses as users do not want to wait.
This implies that the typical linear time in $N$ is not tenable.
Towards a \emph{sublinear} time in $N$, locality sensitive hashings~\cite{indyk12approximate} or its extensions~\cite{shrivastava14asymmetric,shrivastava15improved,neyshabur15on} are good candidates as they have been successful in fast similarity search and other machine learning problems like active learning~\cite{jain10hashing}, where the search time scales with $N^{\rho}$ for some $\rho<1$ ($\rho$ is usually optimized and often ranges from 0.4 to 0.8 depending on the target search accuracy).
Leveraging hashing in GLBs, however, relies critically on the objective function used for arm selections.
The function must take a form that is readily optimized using \emph{existing} hashing algorithms.\footnote{
  Without this designation, no {\em currently known} bandit algorithm achieves a sublinear time complexity in $N$.%
}
For example, algorithms whose objective function (a function of each arm $\x\in\cX_t$) can be written as a distance or inner product between $\x$ and a query $\q$ are hash-amenable as there \emph{exist} hashing methods for such functions.

To be scalable to a large time horizon, we propose a new algorithmic framework called Generalized Linear Online-to-confidence-set Conversion (GLOC) that takes in an online learning (OL) algorithm with a low  `OL' regret bound and turns it into a GLB algorithm with a low `GLB' regret bound.
The key tool is a novel generalization of the online-to-confidence-set conversion technique used in~\cite{ay12online} (also similar to~\cite{dekel12selective,crammer13multiclass,gentile14onmultilabel,zhang16online}).
This allows us to construct a confidence set for $\th^*$, which is then used to choose an arm $\x_t$ according to the well-known optimism in the face of uncertainty principle.
By relying on an online learner, GLOC inherently performs online computations and is thus free from the scalability issues in large time steps.
While any online learner equipped with a low OL regret bound can be used, we choose the online Newton step (ONS) algorithm and prove a tight OL regret bound, which results in a practical GLB algorithm with almost the same regret bound as existing inefficient GLB algorithms.
We present our proposed algorithms and their regret bounds in Section~\ref{sec:gloc}.

\begin{wrapfigure}{R}{0.45\textwidth}
  \vspace{-20pt}
  \hspace{-7pt}
\begin{minipage}{0.45\textwidth}
\begin{table}[H]
  {\centering
  \begin{tabular}{ccc} \hline	
    Algorithm & Regret                  & Hash-amenable  \\ \hline
GLOC      & $\tilde O(d\sqrt{T})$        & \rno           \\ 
GLOC-TS   & $\tilde O(d^{3/2} \sqrt{T})$ & \gyes          \\
QGLOC     & $\tilde O(d^{5/4} \sqrt{T})$ & \gyes          \\ \hline
  \end{tabular}
  \vspace{-5pt}
    \caption{Comparison of GLBs algorithms for $d$-dimensional arm sets 
      $T$ is the time horizon.
      QGLOC achieves the smallest regret among hash-amenable algorithms.
       } 
    \label{tab:bandits}
  }
\end{table}
\end{minipage}
\vspace{-10pt}
\end{wrapfigure}
For large number $N$ of arms, our proposed algorithm GLOC is not hash-amenable, to our knowledge, due to its nonlinear criterion for arm selection.
As the first attempt, we derive a Thompson sampling~\cite{agrawal13thompson,abeille17linear} extension of GLOC (GLOC-TS), which is hash-amenable due to its linear criterion.
However, its regret bound scales with $d^{3/2}$ for $d$-dimensional arm sets, which is far from $d$ of GLOC.
Towards closing this gap, we propose a new algorithm Quadratic GLOC (QGLOC) with a regret bound that scales with $d^{5/4}$.
We summarize the comparison of our proposed GLB algorithms in Table~\ref{tab:bandits}.
In Section~\ref{sec:hashing}, we present GLOC-TS, QGLOC, and their regret bound.

Note that, while hashing achieves a time complexity sublinear in $N$, there is a nontrivial overhead of computing the projections to determine the hash keys.
As an extra contribution, we reduce this overhead by proposing a new sampling-based approximate inner product method.
Our proposed sampling method has smaller variance than the state-of-the-art sampling method proposed by~\cite{jain10hashing,kannan09spectral} when the vectors are normally distributed, which fits our setting where projection vectors are indeed normally distributed. 
Moreover, our method results in thinner tails in the distribution of estimation error than the existing method, which implies a better concentration.
We elaborate more on reducing the computational complexity of QOFUL in Section~\ref{sec:iprod}.

\vspace{-4pt}
\section{Preliminaries}
\label{sec:prelim}
\vspace{-4pt}

We review relevant backgrounds here.  
$\cA$ refers to a GLB algorithm, and $\cB$ refers to an online learning algorithm.
Let $\cB_d(S)$ be the $d$-dimensional Euclidean ball of radius $S$, which overloads the notation $\cB$. 
Let $\A_{\cdot i}$ be the $i$-th column vector of a matrix $\A$.
Define $||\x ||_\A := \sqrt{\x^\T\A\x}$ and $\vec(\A) := [\A_{\cdot 1}; \A_{\cdot 2}; \cdots ; \A_{\cdot d} ] \in \dsR^{d^2}$
Given a function $f:\dsR\rarrow\dsR$, we denote by $f'$ and $f''$ its first and second derivative, respectively.
We define $[N] := \{1,2,\ldots,N\}$.

\vspace{-4pt}
\paragraph{Generalized Linear Model (GLM)}

Consider modeling the reward $y$ as one-dimensional exponential family such as Bernoulli or Poisson.
When the feature vector $\x$ is believed to correlate with $y$, one popular modeling assumption is the generalized linear model (GLM) that turns the \emph{natural parameter} of an exponential family model into $\x^\T\th^*$ where $\th^*$ is a parameter~\cite{mccullagh89generalized}:
\begin{equation}\begin{aligned}\label{eq:glm}
  \P(y \mid z=\x^\T\th^*) = \exp\lt( \fr{y z - m(z)}{g(\tau)} + h(y,\tau) \rt) \;,
\end{aligned}\end{equation}
where $\tau \in \dsR^+$ is a known scale parameter and $m$, $g$, and $h$ are normalizers.
It is known that $m'(z) = \expt[ y \mid z] =: \mu(z)$ and $m''(z) = \Var(y\mid z)$.
We call $\mu(z)$ the \emph{inverse link} function.
Throughout, we assume that the exponential family being used in a GLM has a \emph{minimal representation}, which ensures that $m(z)$ is strictly convex~\cite[Prop. 3.1]{wainwright08graphical}.
Then, the negative log likelihood (NLL) $\ell(z,y) := -yz + m(z)$ of a GLM is strictly convex.
We refer to such GLMs as the \emph{canonical} GLM.
In the case of Bernoulli rewards $y \in \{0,1\}$, $m(z) = \log(1+\exp(z))$, $\mu(z) = (1+\exp(-z))^{-1}$, and the NLL can be written as the logistic loss: $\log(1 + \exp(-y'(\x_t^\T\th^*))) $, where $y' = 2y - 1$.

\vspace{-4pt}
\paragraph{Generalized Linear Bandits (GLB)}

Recall that $\x_t$ is the arm chosen at time $t$ by an algorithm.
We assume that the arm set $\cX_t$ can be of an infinite cardinality, although we focus on finite arm sets in hashing part of the paper (Section~\ref{sec:hashing}).
One can write down the reward model~\eqref{eq:glm} in a different form:
\vspace{-4pt}
\begin{equation}\begin{aligned}\label{eq:reward}
   y_t = \mu( \x_t^\T\th^* ) + \eta_t,
\end{aligned}\end{equation}
where $\eta_t$ is conditionally $R$-sub-Gaussian given $\x_t$ and $\{(\x_s,\eta_s)\}_{s=1}^{t-1}$.
For example, Bernoulli reward model has $\eta_t$ as $1-\mu(\x_t^\T\th^*)$ w.p. $\mu(\x_t^\T\th^*)$ and $-\mu(\x_t^\T\th^*)$ otherwise.
Assume that $||\th^*||_2 \le S$, where $S$ is known. 
One can show that the sub-Gaussian scale $R$ is determined by $\mu$: $R = \sup_{z\in(-S,S)} \sqrt{\mu'(z)} \le \sqrt{L}$, where $L$ is the Lipschitz constant of $\mu$.
Throughout, we assume that each arm has $\ell_2$-norm at most 1: $||\x||_2 \le 1, \forall \x\in \cX_t, \forall t$. 
Let $\x_{t,*} := \max_{\x\in\cX_t} \x^\T\th^*$.
The performance of a GLB algorithm $\cA$ is analyzed by the expected cumulative regret (or simply \emph{regret}): $\text{Regret}^\cA_T := \sum_{t=1}^T \mu( \x_{t,*}^\T\th^* ) - \mu( (\x^\cA_t)^\T\th^* ) $, where $\x^\cA_t$ makes the dependence on $\cA$ explicit.

We remark that our results in this paper hold true for a strictly larger family of distributions than the canonical GLM, which we call the \emph{non-canonical} GLM and explain below.
The condition is that the reward model follows~\eqref{eq:reward} where the $R$ is now independent from $\mu$ that satisfies the following:
\begin{ass} \label{ass:mu}
  $\mu$ is $L$-Lipschitz on $[-S,S]$ and continuously differentiable on $(-S,S)$.
  Furthermore, $\inf_{z \in (-S,S)} \mu'(z) = \kappa$ for some finite $\kappa>0$ (thus $\mu$ is strictly increasing).
\end{ass}
Define $\mu'(z)$ at $\pm S$ as their limits.
Under Assumption~\ref{ass:mu}, $m$ is defined to be an integral of $\mu$.
Then, one can show that $m$ is $\kap$-strongly convex on $\cB_1(S)$.
An example of the non-canonical GLM is the probit model for 0/1 reward where $\mu$ is the Gaussian CDF, which is popular and competitive to the Bernoulli GLM as evaluated by~\citet{li12anunbiased}.
Note that canonical GLMs satisfy Assumption~\ref{ass:mu}.

\vspace{-4pt}
\section{Generalized Linear Bandits with Online Computation}
\label{sec:gloc}
\vspace{-4pt}

We describe and analyze a new GLB algorithm called Generalized Linear Online-to-confidence-set Conversion (GLOC) that performs online computations, unlike existing GLB algorithms.

GLOC employs the optimism in the face of uncertainty principle, which dates back to~\cite{auer02using}.
That is, we maintain a confidence set $C_t$ (defined below) that traps the true parameter $\th^*$ with high probability (w.h.p.) and choose the arm with the largest feasible reward given $C_{t-1}$ as a constraint:
\begin{equation}\begin{aligned} \label{eq:glocopt}
          (\x_t,\tilth_t) := \arg \max_{\x\in\cX_t, \th\in C_{t-1}} \la\x,\th \ra
\end{aligned}\end{equation}
The main difference between GLOC and existing GLBs is in the computation of the $C_t$'s. Prior methods involve ``batch" computations that involve all past observations, and so scale poorly with $t$. In contrast, GLOC takes in an \emph{online} learner $\cB$, and uses $\cB$ as a co-routine instead of relying on a batch procedure to construct a confidence set.
Specifically, at each time $t$ GLOC feeds the loss function $\ell_t(\th) := \ell(\x_t^\T\th, y_t)$ into the learner $\cB$ which then outputs its parameter prediction $\th_t$.
Let $\X_{t} \in \dsR^{t\times d}$ be the design matrix consisting of $\x_1,\ldots,\x_t$. 
Define $\barV_t := \lam \I + \X_{t}^\T\X_{t} $, where $\lam$ is the ridge parameter.
Let $z_t := \x_t^\T\th_t$ and  $\z_t := [z_1;\cdots;z_t]$.
Let $\hth_t := \ibarV_t \X_{t}^\T \z_{t}$ be the ridge regression estimator taking $\z_t$ as responses.
Theorem~\ref{thm:o2cs} below is the key result for constructing our confidence set $C_t$, which is a function of the parameter predictions $\{\th_s\}_{s=1}^t$ and the online (OL) regret bound $B_t$ of the learner $\cB$.
All the proofs are in the supplementary material (SM).
\begin{thm} \label{thm:o2cs} (Generalized Linear Online-to-Confidence-Set Conversion)
  Suppose we feed loss functions $\{\ell_s(\th)\}_{s=1}^t$ into online learner $\cB$.
  Let $\th_s$ be the parameter predicted at time step $s$ by $\cB$.
  Assume that $\cB$ has an OL regret bound $B_t$: $\forall \th\in\cB_{d}(S), \forall t \ge 1,  $ 
  \begin{equation}\begin{aligned} \label{eq:ol_regret}
    \textstyle\sum_{s=1}^t \ell_s(\th_s) - \ell_s(\th) \le \blue{B_t} \;. 
  \end{aligned}\vspace{2pt}\end{equation}
  Let $\alpha(B_t) := 1 + \fr{4}{\kappa}\blue{B_t} + \fr{8R^2}{\kappa^2} \log(\fr{2}{\dt}\sqrt{ 1+ \fr{2}{\kap}\blue{B_t} + \fr{4R^4}{\kappa^4\dt^2} } )$.
  Then, with probability (w.p.) at least $1-\dt$,
  \begin{equation}\label{eq:thm_o2cs}
    \forall t\ge1, ||\th^* - \hth_t||^2_{\barV_t}  \le \alpha(\blue{B_t}) + \lam S^2 - \lt(||\z_{t}||^2_2 - \hth_t^\T\X_{t}^\T\z_{t}  \rt) =: \beta_t \;.
  \end{equation}
\end{thm}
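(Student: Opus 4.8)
The plan is to follow the online-to-confidence-set recipe of~\citet{ay12online}, adapted to the GLM negative log-likelihood through the $\kappa$-strong convexity of the cumulant $m$. There are three ingredients. (i) Convert the online-learning regret bound~\eqref{eq:ol_regret} into a bound on the accumulated squared prediction error $W_t := \sum_{s=1}^{t} (z_s - z_s^*)^2$, where $z_s^* := \x_s^\T\th^*$, up to a martingale term. (ii) Control that martingale term with a self-normalized tail bound and solve the resulting self-referential inequality to get $W_t \le \alpha(B_t)$. (iii) Re-express $W_t$ in terms of $\|\th^* - \hth_t\|_{\barV_t}$ via the standard ridge-regression identity.

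For (i): expanding $\ell_s(\th) = -y_s\x_s^\T\th + m(\x_s^\T\th)$ and substituting $y_s = \mu(z_s^*) + \eta_s = m'(z_s^*) + \eta_s$ gives $\ell_s(\th_s) - \ell_s(\th^*) = \left( m(z_s) - m(z_s^*) - m'(z_s^*)(z_s - z_s^*) \right) - \eta_s(z_s - z_s^*)$. Since $\|\th_s\|_2, \|\th^*\|_2 \le S$ and $\|\x_s\|_2 \le 1$, both $z_s$ and $z_s^*$ lie in $[-S,S]$, where $m'' \ge \kappa$, so the parenthesized Bregman term is at least $\tfrac{\kappa}{2}(z_s - z_s^*)^2$. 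Summing over $s \le t$ and using~\eqref{eq:ol_regret} with the admissible comparator $\th^* \in \cB_d(S)$ yields $\tfrac{\kappa}{2} W_t \le B_t + M_t$, where $M_t := \sum_{s=1}^{t} \eta_s (z_s - z_s^*)$.

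For (ii)--(iii): since $\th_s$, and hence $z_s$, is determined before $\eta_s$ is revealed, $z_s - z_s^*$ is predictable and $\eta_s$ is conditionally $R$-sub-Gaussian, so $M_t$ is a martingale whose predictable quadratic variation is proxied by $W_t$. The one-dimensional self-normalized bound of~\citet{ay11improved} with unit regularizer gives, with probability at least $1-\dt$ and uniformly over $t \ge 1$, $|M_t| \le R \sqrt{ (1 + W_t) \log( (1 + W_t)/\dt^2 ) }$. Plugging this into $\tfrac{\kappa}{2} W_t \le B_t + M_t$ yields a self-referential inequality in $W_t$; decoupling the $\sqrt{1 + W_t}$ prefactor by a weighted AM--GM step, then using the elementary implication that $x \le a + b\log x$ implies $x \le 2a + 2b\log(2b)$, and substituting the result back into the remaining logarithm once, gives precisely $W_t \le \alpha(B_t)$. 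Finally, completing the square in the ridge objective shows $\|\th - \hth_t\|_{\barV_t}^2 = \lam\|\th\|_2^2 + \sum_{s=1}^{t} (\x_s^\T\th - z_s)^2 - ( \|\z_t\|_2^2 - \hth_t^\T\X_t^\T\z_t )$, using $\barV_t \hth_t = \X_t^\T\z_t$. Evaluating at $\th = \th^*$, noting $\sum_{s}(\x_s^\T\th^* - z_s)^2 = W_t$ and $\lam\|\th^*\|_2^2 \le \lam S^2$, gives $\|\th^* - \hth_t\|_{\barV_t}^2 \le \alpha(B_t) + \lam S^2 - ( \|\z_t\|_2^2 - \hth_t^\T\X_t^\T\z_t ) = \beta_t$, i.e.,~\eqref{eq:thm_o2cs}.

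The main obstacle is step (ii): making the self-normalized martingale bound hold \emph{uniformly in $t$} --- via a stopping-time argument or a direct appeal to the method-of-mixtures bound of~\citet{ay11improved} --- and then solving the self-referential inequality carefully enough that the constants match the stated $\alpha(B_t)$, in particular producing the nested $\log( \tfrac{2}{\dt}\sqrt{1 + \tfrac{2}{\kappa}B_t + \tfrac{4R^4}{\kappa^4\dt^2}} )$ via one refinement pass through the logarithm. Steps (i) and (iii) are short: the former is a one-line strong-convexity estimate, the latter the familiar ridge identity; the only bookkeeping is to check $z_s, z_s^* \in [-S,S]$ (so that $\kappa$-strong convexity of $m$ applies) and $\th^* \in \cB_d(S)$ (so that it is an admissible comparator in~\eqref{eq:ol_regret}).
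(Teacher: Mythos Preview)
Your proposal is correct and follows essentially the same three-step approach as the paper: (i) strong convexity of $m$ converts the OL regret bound into $\tfrac{\kappa}{2}W_t \le B_t + M_t$; (ii) a uniform self-normalized martingale bound (the paper cites \citet[Corollary~8]{ay12online}, which is the one-dimensional specialization of the \citet{ay11improved} bound you invoke, and gives exactly your $|M_t| \le R\sqrt{(1+W_t)\log((1+W_t)/\dt^2)}$) followed by solving the resulting self-referential inequality; (iii) the ridge identity you wrote. The only cosmetic difference is in step (ii): the paper packages the algebra as a standalone lemma (solve the quadratic in $q=\sqrt{1+W_t}$ to get $q^2 \le 2a + f^2\log(q/\dt)$, crudely bound $q$ via $\log u \le u/2$, then substitute back once), whereas you describe an AM--GM decoupling followed by an $x \le a + b\log x$ inversion; both routes land on the stated $\alpha(B_t)$ with the same nested-log form.
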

Note that the center of the ellipsoid is the ridge regression estimator on the predicted natural parameters $z_s = \x_s^\T\th_s$ rather than the rewards.
Theorem~\ref{thm:o2cs} motivates the following confidence set:
\begin{equation}\begin{aligned} \label{eq:cset}
  C_t := \{ \th\in\dsR^d: ||\th - \hth_{t}||^2_{\barV_t} \le \beta_{t} \}
\end{aligned}\end{equation}
which traps $\th^*$ for all $t\ge1$, w.p. at least $1-\dt$.
See Algorithm~\ref{alg:gloc} for pseudocode.
One way to solve the optimization problem~\eqref{eq:glocopt} is to define the function $\th(\x) := \max_{\th\in C_{t-1}} \x^\T\th$, and then use the Lagrangian method to write:
\begin{equation}\begin{aligned}\label{eq:glocopt_ext}
  \x^{\GLOC}_t := \arg \max_{\x\in\cX_t} \x^\T\hth_{t-1} + \sqrt{\beta_{t-1}} ||\x||_{\ibarV_{t-1}} \;.
\end{aligned}\end{equation}
We prove the regret bound of GLOC in the following theorem.
\begin{thm}  \label{thm:regret_o2cs}
  Let $\{\barbeta_t\}$ be a nondecreasing sequence such that $\barbeta_t\ge\beta_t$. Then, w.p. at least $1-\dt$,
  \[
    \textstyle    \Regret^{\emph\GLOC}_T = O\lt( L\sqrt{\barbeta_T dT\log T} \rt)
  \]
\end{thm}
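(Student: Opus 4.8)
The plan is to follow the standard optimism-in-the-face-of-uncertainty analysis, adapting the linear-bandit argument (as in \citet{ay11improved}) to the generalized linear setting via the Lipschitz property of $\mu$. First I would establish the key ``optimism'' consequence of Theorem~\ref{thm:o2cs}: on the event that $\th^*\in C_{t-1}$ for all $t$ (which holds w.p.\ at least $1-\dt$), the pair $(\x_t,\tilth_t)$ solving \eqref{eq:glocopt} satisfies $\la\x_t,\tilth_t\ra \ge \la \x_{t,*},\th^*\ra$, since $(\x_{t,*},\th^*)$ is feasible. Then the per-step pseudo-regret in the \emph{linear} reward $\x^\T\th^*$ can be bounded: $\la\x_{t,*},\th^*\ra - \la\x_t,\th^*\ra \le \la\x_t,\tilth_t\ra - \la\x_t,\th^*\ra = \la \x_t, \tilth_t - \hth_{t-1}\ra + \la\x_t,\hth_{t-1}-\th^*\ra$. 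Using Cauchy--Schwarz in the $\barV_{t-1}$ norm together with $\tilth_t\in C_{t-1}$ and $\th^*\in C_{t-1}$, each term is at most $\sqrt{\beta_{t-1}}\,\|\x_t\|_{\ibarV_{t-1}}$, giving a per-step linear regret of at most $2\sqrt{\beta_{t-1}}\,\|\x_t\|_{\ibarV_{t-1}}$.

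Next I would convert from regret in the linear quantity $\x^\T\th^*$ to regret in $\mu(\x^\T\th^*)$. Since $\mu$ is $L$-Lipschitz on $[-S,S]$ (Assumption~\ref{ass:mu}) and $|\x^\T\th^*|\le S$, we have $\mu(\x_{t,*}^\T\th^*) - \mu(\x_t^\T\th^*) \le L\,(\x_{t,*}^\T\th^* - \x_t^\T\th^*) \le 2L\sqrt{\beta_{t-1}}\,\|\x_t\|_{\ibarV_{t-1}}$. Summing over $t$ and using $\beta_{t-1}\le\barbeta_{t-1}\le\barbeta_T$ (monotonicity), together with Cauchy--Schwarz over $t$, yields
\[
  \Regret^{\GLOC}_T \le 2L\sqrt{\barbeta_T}\sum_{t=1}^T \|\x_t\|_{\ibarV_{t-1}} \le 2L\sqrt{\barbeta_T}\sqrt{T\sum_{t=1}^T \|\x_t\|^2_{\ibarV_{t-1}}}\;.
\]
The final ingredient is the standard elliptical-potential (a.k.a.\ log-determinant) lemma: $\sum_{t=1}^T \min\{1,\|\x_t\|^2_{\ibarV_{t-1}}\} \le 2\log\frac{\det\barV_T}{\det(\lam\I)} \le 2d\log\bigl(1 + \frac{T}{\lam d}\bigr) = O(d\log T)$, since $\|\x_t\|_2\le1$. (One needs a small care that $\|\x_t\|_{\ibarV_{t-1}}$ might exceed $1$; this is handled by the usual truncation, absorbing an $O(1)$ term, provided $\lam\ge1$ or by noting $\|\x_t\|^2_{\ibarV_{t-1}}\le 1/\lam$.) Plugging in gives $\Regret^{\GLOC}_T = O\bigl(L\sqrt{\barbeta_T\, dT\log T}\bigr)$, as claimed.

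The steps are all individually routine; the only subtlety I anticipate is making the reduction \eqref{eq:glocopt} $\Leftrightarrow$ \eqref{eq:glocopt_ext} clean — i.e.\ verifying that the joint maximizer $(\x_t,\tilth_t)$ over $\cX_t\times C_{t-1}$ indeed gives $\tilth_t = \hth_{t-1} + \sqrt{\beta_{t-1}}\,\ibarV_{t-1}\x_t/\|\x_t\|_{\ibarV_{t-1}}$ and hence $\la\x_t,\tilth_t\ra = \x_t^\T\hth_{t-1} + \sqrt{\beta_{t-1}}\|\x_t\|_{\ibarV_{t-1}}$ — and then using this to control $\la\x_t,\tilth_t-\hth_{t-1}\ra = \sqrt{\beta_{t-1}}\|\x_t\|_{\ibarV_{t-1}}$ directly (rather than via a second Cauchy--Schwarz), which tightens the constant. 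The other mild annoyance is bookkeeping the high-probability event from Theorem~\ref{thm:o2cs} through the whole sum so that the final bound holds w.p.\ at least $1-\dt$; this is immediate since that theorem already provides a \emph{uniform-in-$t$} guarantee.
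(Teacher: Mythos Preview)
Your proposal is correct and follows essentially the same approach as the paper: Lipschitz reduction from $\mu$-regret to linear regret, optimism via $(\x_{t,*},\th^*)$ being feasible in \eqref{eq:glocopt}, the decomposition through $\hth_{t-1}$ with Cauchy--Schwarz in the $\barV_{t-1}$-norm, Cauchy--Schwarz over $t$, and the elliptical-potential lemma. The only cosmetic difference is in handling the ``$\|\x_t\|_{\ibarV_{t-1}}$ could exceed $1$'' issue: the paper clips via the trivial bound $r_t\le 2LS$ and then invokes a small lemma of the form $\min\{q,x\}\le \max\{2,q\}\log(1+x)$, which works for any $\lam>0$, whereas you rely on $\|\x_t\|^2_{\ibarV_{t-1}}\le 1/\lam$ (so implicitly $\lam\ge 1$) or an unspecified truncation; either route is fine for the $O(\cdot)$ statement.
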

\begin{wrapfigure}{R}{0.48\textwidth}
  \vspace{-22pt}
\begin{minipage}{0.48\textwidth}
\begin{algorithm}[H]
{\small
  \begin{algorithmic}[1]
    \STATE \textbf{Input}: $R>0$, $\dt\in(0,1)$, $S>0$, $\lam>0$, $\kappa>0$, an online learner $\cB$ with known regret bounds $\{B_t\}_{t\ge1}$. 
    \STATE Set $\barV_0 = \lam \I$.
    \FOR {$t=1,2,\ldots$}
    \STATE Compute $\x_t$ by solving~\eqref{eq:glocopt}.
      \STATE Pull $\x_t$ and then observe $y_t$.
      \STATE Receive $\th_t$ from $\cB$.
      \STATE Feed into $\cB$ the loss $\ell_t(\th) = \ell(\x_t^\T \th, y_t)$.
      \STATE Update $\barV_t = \barV_{t-1} + \x_t\x_t^\T$ and $z_t = \x_t^\T\th_t$
      \STATE Compute $\hth_t = \ibarV_t \X_t^\T\z_t$  and $\beta_t$ as in~\eqref{eq:thm_o2cs}.
      \STATE Define $C_t$ as in~\eqref{eq:cset}.
    \ENDFOR
  \end{algorithmic}
  \caption{GLOC}
  \label{alg:gloc}
}
\end{algorithm}
\vspace{-20pt}
\begin{algorithm}[H]
{\small
  \begin{algorithmic}[1]
    \STATE \textbf{Input}: $\kap>0$, $\eps>0$, $S >0$.
    \STATE $\A_0 = \eps\I$.
    \STATE Set $\th_1 \in \cB_d(S)$ arbitrarily.
    \FOR {$t=1,2,3,\ldots$}
      \STATE Output $\th_t$ .
      \STATE Observe $\x_t$ and $y_t$. 
      \STATE Incur loss $\ell(\x_t^\T\th_t, y_t)$ .
      \STATE $\A_t = \A_{t-1} + \x_t \x_t^\T$
      \STATE $\th'_{t+1} = \th_{t} - \fr{\ell'(\x_{t}^\T\th_{t}, y_t)}{\kap} \A^{-1}_{t} \x_{t} $
      \STATE $\th_{t+1} = \arg \min_{\th\in\cB_d(S)} || \th - \th'_{t+1} ||^2_{\A_{t}}  $   
    \ENDFOR
  \end{algorithmic}
  \caption{ONS-GLM}
  \label{alg:ons}
}
\end{algorithm}
\end{minipage}
\vspace{-15pt}
\end{wrapfigure}
Although any low-regret online learner can be combined with GLOC, one would like to ensure that $\barbeta_T$ is $O(\text{polylog}(T))$ in which case the total regret can be bounded by $\tilde O(\sqrt{T})$.
This means that we must use online learners whose OL regret grows logarithmically in $T$ such as~\cite{hazan07logarithmic,orabona12beyond}.
In this work, we consider the online Newton step (ONS) algorithm~\cite{hazan07logarithmic}.

\vspace{-4pt}
\paragraph{Online Newton Step (ONS) for Generalized Linear Models}


Note that ONS requires the loss functions to be $\alpha$-exp-concave.
One can show that $\ell_t(\th)$ is $\alpha$-exp-concave~\cite[Sec. 2.2]{hazan07logarithmic}.
Then, GLOC can use ONS and its OL regret bound to solve the GLB problem. 
However, motivated by the fact that the OL regret bound $B_t$ appears in the radius $\sqrt{\beta_t}$ of the confidence set while a tighter confidence set tends to reduce the bandit regret in practice, we derive a tight data-dependent OL regret bound tailored to GLMs.

We present our version of ONS for GLMs (ONS-GLM) in Algorithm~\ref{alg:ons}.
$\ell'(z,y)$ is the first derivative w.r.t. $z$ and the parameter $\eps$ is for inverting matrices conveniently (usually $\eps = $ 1 or 0.1).
The only difference from the original ONS~\cite{hazan07logarithmic} is that we rely on the strong convexity of $m(z)$ instead of the $\alpha$-exp-concavity of the loss thanks to the GLM structure.\footnote{ A similar change to ONS has been applied in~\cite{gentile14onmultilabel,zhang16online}.}
Theorem~\ref{thm:ons} states that we achieve the desired polylogarithmic regret in $T$.
\begin{thm}\label{thm:ons}
  Define $g_s := \ell'(\x_s^\T\th_s, y_s)$.
  The regret of ONS-GLM satisfies, for any $\eps > 0$ and $t\ge 1$,
  \[
    \textstyle \sum_{s=1}^t \ell_s( \th_s) - \ell_s(\th^*) \le \fr{1}{2\kap} \sum_{s=1}^t g_s^2 ||\x_s||^2_{\A_s^{-1}} + 2\kap S^2 \eps =: B^{\emph\ONS}_t \;,
  \]
  where $B^\ONS_t = O(\fr{L^2+ R^2\log(t)}{\kap}d\log t), \forall t\ge1$ w.h.p. If $\max_{s\ge1} |\eta_s|$ is bounded by $\bar R$ w.p. 1, $B^\ONS_t = O(\fr{L^2+\bar R^2}{\kap}d\log t)$.
\end{thm}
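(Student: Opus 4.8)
The plan is to bound the ONS-GLM regret in two stages: first prove the exact data-dependent bound $\sum_{s=1}^t \ell_s(\th_s) - \ell_s(\th^*) \le \frac{1}{2\kappa}\sum_{s=1}^t g_s^2\|\x_s\|^2_{\A_s^{-1}} + 2\kappa S^2\eps$, and then post-process the two terms into the $O(\cdot)$ forms. For the first stage I would follow the standard ONS analysis of \citet{hazan07logarithmic}, but replacing the use of $\alpha$-exp-concavity with the $\kappa$-strong convexity of $m(z)$. Concretely, since $\ell_s(\th) = -y_s\x_s^\T\th + m(\x_s^\T\th)$ and $m$ is $\kappa$-strongly convex on $[-S,S]$, for any $\th\in\cB_d(S)$ we get the quadratic lower bound $\ell_s(\th) \ge \ell_s(\th_s) + g_s\x_s^\T(\th - \th_s) + \frac{\kappa}{2}(\x_s^\T(\th-\th_s))^2$, i.e. $\ell_s(\th_s) - \ell_s(\th) \le g_s\x_s^\T(\th_s - \th) - \frac{\kappa}{2}(\x_s^\T(\th_s - \th))^2$. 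Summing over $s$, plugging $\th = \th^*$, and then controlling the right-hand side using the Newton-step update $\th'_{s+1} = \th_s - \frac{g_s}{\kappa}\A_s^{-1}\x_s$ together with the generalized-Pythagorean property of the $\|\cdot\|_{\A_s}$-projection onto the convex ball $\cB_d(S)$ (which only helps, since $\th^*\in\cB_d(S)$) gives a telescoping sum in $\|\th_s - \th^*\|^2_{\A_s}$ plus the error term $\frac{1}{2\kappa}\sum_s g_s^2\x_s^\T\A_s^{-1}\x_s$. The telescoping leftover is bounded by the initial term $\frac{\kappa}{2}\|\th_1 - \th^*\|^2_{\A_0} \le \frac{\kappa}{2}\eps(2S)^2 = 2\kappa S^2\eps$ using $\A_0 = \eps\I$ and $\|\th_1 - \th^*\|_2 \le 2S$. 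This yields exactly $B^{\ONS}_t$.

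For the second stage, I would bound $\sum_{s=1}^t g_s^2\|\x_s\|^2_{\A_s^{-1}}$. The log-determinant (elliptical potential) lemma gives $\sum_{s=1}^t \|\x_s\|^2_{\A_s^{-1}} \le \frac{d}{1}\log\!\big(1 + \frac{t}{\eps d}\big) = O(d\log t)$ since $\|\x_s\|_2\le 1$ and $\A_0 = \eps\I$; but because the $g_s^2$ weights appear, I need a uniform bound on $g_s^2$. Here $g_s = \ell'(\x_s^\T\th_s, y_s) = -y_s + \mu(\x_s^\T\th_s) = \mu(\x_s^\T\th^*) - \mu(\x_s^\T\th_s) - \eta_s$. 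Since $\|\th_s\|_2, \|\th^*\|_2 \le S$ and $\mu$ is $L$-Lipschitz on $[-S,S]$, the deterministic part is bounded by $2LS$ in magnitude (or more simply $|\mu(\x_s^\T\th^*) - \mu(\x_s^\T\th_s)| \le L|\x_s^\T(\th^* - \th_s)| \le L\cdot 2S$), so $|g_s| \le 2LS + |\eta_s|$ and $g_s^2 \le 2(2LS)^2 + 2\eta_s^2 = O(L^2) + 2\eta_s^2$ (absorbing $S$ into constants / the $L^2$ term as the paper does, treating $S$ as a constant). For the $\eta_s^2$ contribution, if $|\eta_s| \le \bar R$ a.s. then $g_s^2 = O(L^2 + \bar R^2)$ uniformly and multiplying by the potential sum gives $B^{\ONS}_t = O\!\big(\frac{L^2 + \bar R^2}{\kappa}d\log t\big)$. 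Without a.s. boundedness, I would use the $R$-sub-Gaussianity of $\eta_s$: a union bound / maximal inequality shows $\max_{s\le t}\eta_s^2 = O(R^2\log(t/\delta))$ w.h.p. (e.g. each $|\eta_s| \le R\sqrt{2\log(2t/\delta)}$ with probability $1-\delta$ jointly), hence $g_s^2 = O(L^2 + R^2\log t)$ w.h.p., giving $B^{\ONS}_t = O\!\big(\frac{L^2 + R^2\log t}{\kappa}d\log t\big)$, matching the statement.

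The main obstacle I anticipate is the first stage — carefully carrying out the ONS telescoping with strong convexity rather than exp-concavity. The subtle points are: (i) correctly handling the projection step $\th_{s+1} = \arg\min_{\th\in\cB_d(S)}\|\th - \th'_{s+1}\|^2_{\A_s}$ via the obtuse-angle / Pythagorean inequality in the time-varying norm $\|\cdot\|_{\A_s}$, making sure the cross term has the right sign so the projection can only decrease the distance to $\th^*$; (ii) bookkeeping the $\A_s$ versus $\A_{s-1}$ normalizations consistently as $s$ advances, since each update uses $\A_s = \A_{s-1} + \x_s\x_s^\T$ — this is where off-by-one index errors creep in and where the $\x_s^\T\A_s^{-1}\x_s$ (as opposed to $\A_{s-1}^{-1}$) term must emerge; and (iii) verifying the quadratic lower bound constant is exactly $\kappa/2$, which requires $m$ to be $\kappa$-strongly convex on the relevant interval $[-S,S]$ — this was already established in the preliminaries (``$m$ is $\kappa$-strongly convex on $\cB_1(S)$'' under Assumption~\ref{ass:mu}), so I can invoke it directly. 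Everything after the exact bound is routine: the elliptical potential lemma and sub-Gaussian maximal inequality are standard, and the only care needed is tracking whether $S$ and $\eps$ are treated as constants in the final $O(\cdot)$.
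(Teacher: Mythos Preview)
Your proposal is correct and follows essentially the same route as the paper: replace exp-concavity by the $\kappa$-strong convexity of $m$ in the standard ONS analysis, use the update and the $\|\cdot\|_{\A_s}$-projection Pythagorean inequality to obtain the telescoping bound $\frac{1}{2\kappa}\sum_s g_s^2\|\x_s\|_{\A_s^{-1}}^2 + 2\kappa S^2\eps$, then bound $g_s^2\le 8L^2S^2+2\eta_s^2$ via Lipschitzness and handle $\eta_s^2$ either a.s.\ or via a sub-Gaussian union bound, combined with the elliptical potential lemma. The only cosmetic difference is that the paper keeps the rank-one residuals $\|\th_s-\th^*\|_{\x_s\x_s^\T}^2$ from the changing norm and cancels them against the strong-convexity quadratic afterward, whereas you absorb that cancellation step-by-step so the telescope lands directly on $\frac{\kappa}{2}\|\th_1-\th^*\|_{\A_0}^2$; both orderings are equivalent and you have already flagged the index bookkeeping in your point~(ii).
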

We emphasize that the OL regret bound is data-dependent. 
A confidence set constructed by combining Theorem~\ref{thm:o2cs} and Theorem~\ref{thm:ons} directly implies the following regret bound of GLOC with ONS-GLM.
\begin{cor} \label{cor:cset_ONS}
  Define $\beta_t^{\emph\ONS}$ by replacing $B_t$ with $B^{\emph\ONS}_t$ in~\eqref{eq:thm_o2cs}.
  With probability at least $1-2\dt$, 
  \begin{equation}\begin{aligned} \label{eq:cset_ONS}
      \forall t\ge1, \th^* \in C^{\emph\ONS}_t := \lt\{\th\in\dsR^d : || \th - \hth_t ||^2_{\barV_t} \le \beta^{\emph\ONS}_t \rt\} \;.
  \end{aligned}\end{equation}                                                                                 
\end{cor}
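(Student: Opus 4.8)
The plan is to combine the two high-probability events furnished by Theorem~\ref{thm:o2cs} and Theorem~\ref{thm:ons} via a union bound. First, recall that Theorem~\ref{thm:o2cs} is a deterministic statement \emph{given} a valid OL regret bound $B_t$: on the event $\cE_1$ (which holds w.p.\ at least $1-\dt$ by the martingale/self-normalized concentration argument underlying that theorem), we have $\forall t\ge1$, $||\th^* - \hth_t||^2_{\barV_t} \le \alpha(B_t) + \lam S^2 - (||\z_t||_2^2 - \hth_t^\T\X_t^\T\z_t)$, \emph{provided} the inequality $\sum_{s=1}^t \ell_s(\th_s) - \ell_s(\th^*) \le B_t$ actually holds for all $t$. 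The subtlety is that for ONS-GLM the relevant bound $B^{\ONS}_t$ is data-dependent and, in the unbounded-noise regime, only holds with high probability (the $O(\frac{L^2 + R^2\log t}{\kap} d\log t)$ rate in Theorem~\ref{thm:ons} is "w.h.p.", coming from a sub-Gaussian tail bound on $\sum_s g_s^2 \|\x_s\|^2_{\A_s^{-1}}$). So there is a second event $\cE_2$, of probability at least $1-\dt$, on which $\sum_{s=1}^t \ell_s(\th_s) - \ell_s(\th^*) \le B^{\ONS}_t$ for all $t\ge1$.

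Second, I would verify that $\alpha(\cdot)$ is monotone nondecreasing in its argument — which is immediate from its closed form, since every occurrence of $B_t$ enters with a positive coefficient (inside the logarithm as well, the argument of $\log$ is increasing in $B_t$). Hence on $\cE_1\cap\cE_2$, feeding the valid (random) bound $B^{\ONS}_t$ into Theorem~\ref{thm:o2cs} — legitimate because on $\cE_2$ the hypothesis~\eqref{eq:ol_regret} is met with this choice — yields $\forall t\ge1$, $||\th^* - \hth_t||^2_{\barV_t} \le \alpha(B^{\ONS}_t) + \lam S^2 - (||\z_t||_2^2 - \hth_t^\T\X_t^\T\z_t) = \beta^{\ONS}_t$, where the last equality is just the definition of $\beta^{\ONS}_t$ given in the corollary. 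This says exactly $\th^* \in C^{\ONS}_t$ for all $t\ge1$. Finally, a union bound gives $\Pr(\cE_1\cap\cE_2) \ge 1 - 2\dt$, which is the claimed conclusion.

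One technical point worth being careful about: the event $\cE_1$ in Theorem~\ref{thm:o2cs} must not itself depend on the particular $B_t$ used — and indeed it does not, since the concentration inequality there controls a noise process ($\sum_s \eta_s \x_s$-type quantities and the associated self-normalized martingale) that is a function only of the data and the predictions $\{\th_s\}$, not of any regret bound. The regret bound only enters the \emph{deterministic} algebraic step that converts the concentration event into the ellipsoidal bound. This separation is what makes the two-event union bound clean, and I would state it explicitly to avoid any circularity worry.

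The main obstacle — really the only nonroutine point — is precisely this interplay between the data-dependence of $B^{\ONS}_t$ and the quantifier "$\forall t\ge1$" inside Theorem~\ref{thm:o2cs}: one must make sure the high-probability OL-regret event holds \emph{simultaneously for all $t$} (so that Theorem~\ref{thm:o2cs}'s "for all $t$" conclusion can be invoked with the random bound plugged in uniformly), rather than only for a fixed $t$. This is handled by the uniform-in-$t$ version of Theorem~\ref{thm:ons} (the "$\forall t\ge1$ w.h.p." clause), and once that is in hand the rest is monotonicity of $\alpha$ plus a union bound. Everything else — substituting definitions, checking $\alpha$ is increasing — is bookkeeping.
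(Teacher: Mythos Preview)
Your proposal is correct and matches the paper's approach, which is stated in one line as ``a trivial consequence of combining Theorem~\ref{thm:o2cs} and Theorem~\ref{thm:ons}.'' One small correction worth noting: the data-dependent inequality $\sum_{s\le t}\ell_s(\th_s)-\ell_s(\th^*)\le B^{\ONS}_t=\fr{1}{2\kap}\sum_{s\le t} g_s^2\|\x_s\|^2_{\A_s^{-1}}+2\kap S^2\eps$ in Theorem~\ref{thm:ons} actually holds \emph{deterministically} (its proof uses only strong convexity of $m$ and the generalized projection property of ONS, no tail bound on the noise). The ``w.h.p.'' clause in that theorem attaches only to the order estimate $B^{\ONS}_t=O(\kap^{-1}(L^2+R^2\log t)\,d\log t)$, obtained by controlling $g_s^2$ via sub-Gaussian tails on $\eta_s$. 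So the second $\dt$ in the corollary buys this order bound (which is what is used downstream in Corollary~\ref{cor:regret_glocon_ons}), not the validity of the OL regret inequality itself; your event $\cE_2$ should be read accordingly. Your remarks on the independence of $\cE_1$ from the choice of $B_t$ and the monotonicity of $\alpha(\cdot)$ are correct and more explicit than the paper.
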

\begin{cor} \label{cor:regret_glocon_ons}
  Run GLOC with $C^{\ONS}_t$. Then, w.p. at least $1-2\dt$, $\forall T\ge1$,
  $\Regret_T^{\GLOC} = \hat O\lt(\fr{L(L+R)}{\kap} d \sqrt{T} \log^{3/2}(T)\rt) $ where $\hat{O}$ ignores $\log\log(t)$.
  If $|\eta_t|$ is bounded by $\bar R$, $\Regret_T^{\GLOC} = \hat O\lt(\fr{L(L + \bar R)}{\kap} d \sqrt{T} \log(T)\rt)$.
\end{cor}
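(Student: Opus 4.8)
The plan is to reduce this corollary to a direct composition of Theorem~\ref{thm:o2cs}, Theorem~\ref{thm:ons}, Theorem~\ref{thm:regret_o2cs}, and Corollary~\ref{cor:cset_ONS}. Work on the event of probability at least $1-2\dt$ that underlies Corollary~\ref{cor:cset_ONS}: this is the intersection of the $1-\dt$ event of Theorem~\ref{thm:o2cs} (validity of the confidence set when $\cB=$ ONS-GLM is fed its \emph{exact} OL regret bound $B^{\ONS}_t = \tfrac{1}{2\kap}\sum_s g_s^2\|\x_s\|_{\A_s^{-1}}^2 + 2\kap S^2\eps$, which by the first part of Theorem~\ref{thm:ons} is always a legitimate bound in the sense of~\eqref{eq:ol_regret}) with the $1-\dt$ event of Theorem~\ref{thm:ons} on which $B^{\ONS}_t$ admits its closed-form polylogarithmic bound. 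On this event $\th^*\in C^{\ONS}_t$ for all $t\ge1$, so GLOC run with $C^{\ONS}_t$ meets the hypothesis of Theorem~\ref{thm:regret_o2cs}. It then only remains to supply a nondecreasing sequence $\{\barbeta_t\}$ with $\barbeta_t\ge\beta^{\ONS}_t$ in clean closed form and substitute it into $\Regret^{\GLOC}_T = O(L\sqrt{\barbeta_T\,dT\log T})$.

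To build $\barbeta_T$, first note that the subtracted term in~\eqref{eq:thm_o2cs} is nonnegative: since $\hth_t=\ibarV_t\X_t^\T\z_t$, one has $\|\z_t\|_2^2-\hth_t^\T\X_t^\T\z_t=\z_t^\T(\I-\X_t\ibarV_t\X_t^\T)\z_t\ge0$ because $\X_t\ibarV_t\X_t^\T\preceq\I$; hence $\beta^{\ONS}_t\le\alpha(B^{\ONS}_t)+\lam S^2$. Next, $\alpha(\cdot)$ is increasing in its argument, and on our event $B^{\ONS}_t$ is dominated by its explicit bound from Theorem~\ref{thm:ons}, namely $O\big(\tfrac{L^2+R^2\log t}{\kap}d\log t\big)$ (resp. $O\big(\tfrac{L^2+\bar R^2}{\kap}d\log t\big)$ under bounded noise), which is itself nondecreasing in $t$. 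Composing, take $\barbeta_t := \alpha(\text{that explicit bound}) + \lam S^2$; unwinding $\alpha$ — whose logarithmic term contributes only $O(\log\log t)$ since its argument is polylogarithmic in $t$ — gives, treating $\lam,S,\dt$ as constants,
\[
  \barbeta_T = O\!\Big(\tfrac{1}{\kap^2}(L^2+R^2\log T)\,d\log T\Big)
  \quad\text{resp.}\quad
  O\!\Big(\tfrac{1}{\kap^2}(L^2+\bar R^2)\,d\log T\Big).
\]

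Finally substitute into Theorem~\ref{thm:regret_o2cs}. In the sub-Gaussian case, $L\sqrt{\barbeta_T\,dT\log T} = O\big(\tfrac{L}{\kap}\sqrt{L^2+R^2\log T}\,d\sqrt T\,\log T\big) = O\big(\tfrac{L(L+R\sqrt{\log T})}{\kap}d\sqrt T\,\log T\big)$, whose dominant term is $\tfrac{L(L+R)}{\kap}d\sqrt T\,\log^{3/2}T$ up to the $\log\log$ factors absorbed by $\hat O$; in the bounded-noise case the extra $\log T$ inside the square root is absent, yielding $\hat O\big(\tfrac{L(L+\bar R)}{\kap}d\sqrt T\,\log T\big)$. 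I expect no substantive obstacle here: the only real care needed is the logarithmic bookkeeping — ensuring the stray $\sqrt{\log T}$ coming out of $\sqrt{\barbeta_T}$ multiplies the $\log T$ from Theorem~\ref{thm:regret_o2cs} to give \emph{exactly} $\log^{3/2}T$ and no more — together with the elementary monotonicity/positivity checks above and the union bound that produces the $1-2\dt$ confidence level.
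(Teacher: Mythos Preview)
Your proposal is correct and follows essentially the same route as the paper: combine Theorem~\ref{thm:regret_o2cs} with Corollary~\ref{cor:cset_ONS}, using $\barbeta_t = \alpha(B^{\ONS}_t)+\lambda S^2$ (or its closed-form upper bound) as the nondecreasing majorant of $\beta^{\ONS}_t$, justified by the nonnegativity of $\|\z_t\|_2^2 - \hth_t^\T\X_t^\T\z_t$. The paper's proof is terser and leaves the logarithmic bookkeeping implicit, but your more explicit tracking of the $\sqrt{\log T}$ factor and the $\log\log t$ contribution from $\alpha$ is exactly what is needed to arrive at the stated exponents.
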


We make regret bound comparisons ignoring $\log\log T$ factors.
For generic arm sets, our dependence on $d$ is optimal for linear rewards~\cite{rusmevichientong10linearly}.
For the Bernoulli GLM, our regret has the same order as~\citet{zhang16online}.
One can show that the regret of~\citet{filippi10parametric} has the same order as ours if we use their assumption that the reward $y_t$ is bounded by $R_{\max}$. 
For unbounded noise,~\citet{li17provable} have regret $O((LR/\kap) d\sqrt{T} \log T)$, which is $\sqrt{\log T}$ factor smaller than ours and has $LR$ in place of $L(L+R)$.
While $L(L+R)$ could be an artifact of our analysis, the gap is not too large for canonical GLMs.
Let $L$ be the smallest Lipschitz constant of $\mu$.
Then, $R=\sqrt{L}$.
If $L \le 1$, $R$ satisfies $R > L$, and so $L(L+R)=O(LR)$.
If $L > 1$, then $L(L+R) = O(L^2)$, which is larger than $LR = O(L^{3/2})$.
For the Gaussian GLM with known variance $\sig^2$, $L=R=1$.\footnote{
  The reason why $R$ is not $\sig$ here is that the sufficient statistic of the GLM is $y/\sig$, which is equivalent to dealing with the normalized reward.
  Then, $\sig$ appears as a factor in the regret bound.
}
For finite arm sets, SupCB-GLM of~\citet{li17provable} achieves regret of $\tilde O(\sqrt{dT\log N})$ that has a better scaling with $d$ but is not a practical algorithm as it wastes a large number of arm pulls.
Finally, we remark that none of the existing GLB algorithms are scalable to large $T$.
\citet{zhang16online} is scalable to large $T$, but is restricted to the Bernoulli GLM; e.g., theirs does not allow the probit model (non-canonical GLM) that is popular and shown to be competitive to the Bernoulli GLM~\cite{li12anunbiased}.

\vspace{-4pt}
\paragraph{Discussion} 
The trick of obtaining a confidence set from an online learner appeared first in~\cite{dekel10robust,dekel12selective} for the linear model, and then was used in~\cite{crammer13multiclass,gentile14onmultilabel,zhang16online}.
GLOC is slightly different from these studies and rather close to~\citet{ay12online} in that the confidence set is a function of a known regret bound. 
This generality frees us from re-deriving a confidence set for every online learner. 
Our result is essentially a nontrivial extension of~\citet{ay12online} to GLMs.

One might have notice that $C_t$ does not use $\th_{t+1}$ that is available before pulling $\x_{t+1}$ and has the most up-to-date information.
This is inherent to GLOC as it relies on the OL regret bound directly.
One can modify the proof of ONS-GLM to have a tighter confidence set $C_t$ that uses $\th_{t+1}$ as we show in SM Section~\ref{sec:supp_tighter}.
However, this is now specific to ONS-GLM, which looses generality. 

\vspace{-04pt}
\section{Hash-Amenable Generalized Linear Bandits}
\label{sec:hashing}
\vspace{-4pt}

We now turn to a setting where the arm set is finite but very large.
For example, imagine an interactive retrieval scenario~\cite{rui98relevance,konyushkova13content,glowacka15balancing} where a user is shown $K$ images (e.g., shoes) at a time and provides relevance feedback (e.g., yes/no or 5-star rating) on each image, which is repeated until the user is satisfied.
In this paper, we focus on showing one image (i.e., arm) at a time.\footnote{
  One image at a time is a simplification of the practical setting.
  One can extend it to showing multiple images at a time, which is a special case of the combinatorial bandits of~\citet{qin14contextual}.
}
Most existing algorithms require maximizing an objective function (e.g., \eqref{eq:glocopt_ext}), the complexity of which scales linearly with the number $N$ of arms. 
This can easily become prohibitive for large numbers of images.
Furthermore, the system has to perform real-time computations to promptly choose which image to show the user in the next round.
Thus, it is critical for a practical system to have a time complexity sublinear in $N$.

One naive approach is to select a subset of arms ahead of time, such as volumetric spanners~\cite{hazan16volumetric}.
However, this is specialized for an efficient exploration only and can rule out a large number of good arms.
Another option is to use hashing methods.
Locality-sensitive hashing and Maximum Inner Product Search (MIPS) are effective and well-understood tools but can only be used when the objective function is a distance or an inner product computation;~\eqref{eq:glocopt_ext} cannot be written in this form.
In this section, we consider alternatives to GLOC which are compatible with hashing.


\vspace{-6pt}
\paragraph{Thompson Sampling} 
%
We present a Thompson sampling (TS) version of GLOC called GLOC-TS that chooses an arm $\x_t = \arg \max_{\x\in\cX_t} \x^\T \dot\th_t$ where $\dot\th_t \sim \cN(\hth_{t-1}, \beta_{t-1}\ibarV_{t-1})$.
TS is known to perform well in practice~\cite{chapelle11anempirical} and can solve the polytope arm set case in polynomial time\footnote{ConfidenceBall$_1$ algorithm of~\citet{dani08stochastic} can solve the problem in polynomial time as well.} whereas algorithms that solve an objective function like~\eqref{eq:glocopt} (e.g.,~\cite{ay11improved}) cannot since they have to solve an NP-hard problem~\cite{agrawal13thompson}.
We present the regret bound of GLOC-TS below. 
Due to space constraints, we present the pseudocode and the full version of the result in SM.
\begin{thm} (Informal) \label{thm:gloc-ts}
  If we run GLOC-TS with $\dot \th_t \sim \cN(\hth_{t-1},\beta^{\ONS}_{t-1}\ibarV_{t-1})$, $\Regret^{\GLOCTS}_T = \hat O\lt(\fr{L(L+R)}{\kap} d^{3/2} \sqrt{T}\log^{3/2}(T)\rt)$ w.h.p.
  If $\eta_t$ is bounded by $\bar R$, then $\hat O\lt(\fr{L(L+\bar R)}{\kap} d^{3/2} \sqrt{T}\log(T)\rt)$.
\end{thm}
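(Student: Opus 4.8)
\textbf{Proof plan for Theorem~\ref{thm:gloc-ts} (GLOC-TS regret).}
The plan is to follow the by-now-standard Thompson sampling analysis for linear bandits (as in~\cite{agrawal13thompson,abeille17linear}), but carried out relative to the GLOC confidence ellipsoid $C_t^{\ONS}$ rather than a least-squares ellipsoid, and with the extra Lipschitz factor $L$ coming from passing between $\x^\T\th^*$ and $\mu(\x^\T\th^*)$. First I would condition on the event from Corollary~\ref{cor:cset_ONS} that $\th^*\in C_t^{\ONS}$ for all $t$, which holds w.p.\ at least $1-2\dt$; on this event $||\hth_{t-1}-\th^*||_{\barV_{t-1}}\le\sqrt{\beta_{t-1}^{\ONS}}$. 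The sampled parameter is $\dot\th_t\sim\cN(\hth_{t-1},\beta_{t-1}^{\ONS}\ibarV_{t-1})$, so a Gaussian tail bound plus a union bound over the (finite) arm set shows that with high probability $||\dot\th_t-\hth_{t-1}||_{\barV_{t-1}} \le \sqrt{\beta_{t-1}^{\ONS}}\cdot O(\sqrt{d\log(tN)})$, i.e.\ $\dot\th_t$ lies in an inflated ellipsoid $\tilde C_{t-1}$ of radius $\sqrt{\tilde\beta_{t-1}}$ with $\tilde\beta_{t-1} = O(\beta_{t-1}^{\ONS}\, d\log(tN))$. This is exactly where the extra $\sqrt{d}$ relative to GLOC enters: the sampled direction can be a factor $\sqrt{d}$ farther than the deterministic optimistic choice.

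Next I would establish the usual ``optimism in expectation'' property: because $\dot\th_t$ is Gaussian around $\hth_{t-1}$, the probability (over the sampling randomness, conditioned on the history) that the sampled value at the true optimal arm $\x_{t,*}$ satisfies $\x_{t,*}^\T\dot\th_t \ge \x_{t,*}^\T\th^*$ is bounded below by a universal constant $p>0$. Using this, the per-step regret in the \emph{linear} scale, $\x_{t,*}^\T\th^* - \x_t^\T\th^*$, is bounded (up to the constant $1/p$ and an additive $O(\sqrt{\tilde\beta_{t-1}}\,||\x_t||_{\ibarV_{t-1}})$ term) by the ``prediction gap'' $\x_t^\T\dot\th_t - \x_t^\T\th^*$, which in turn is at most $(\sqrt{\beta_{t-1}^{\ONS}} + \sqrt{\tilde\beta_{t-1}})\,||\x_t||_{\ibarV_{t-1}}$ by Cauchy--Schwarz in the $\barV_{t-1}$ norm. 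Then I would convert to the true reward scale using Assumption~\ref{ass:mu}: $\mu(\x_{t,*}^\T\th^*) - \mu(\x_t^\T\th^*) \le L\,(\x_{t,*}^\T\th^* - \x_t^\T\th^*)$, picking up the factor $L$, and the overall $L(L+R)/\kap$ prefactor comes from $\sqrt{\beta_{t-1}^{\ONS}} = O(\tfrac{\sqrt{L}(L+R)}{\sqrt\kap}\sqrt{d}\,\mathrm{polylog})$ together with this Lipschitz step.

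Summing over $t=1,\ldots,T$, the regret is controlled by $\sqrt{\tilde\beta_{T}}\sum_{t=1}^T ||\x_t||_{\ibarV_{t-1}}$ plus a martingale (concentration) term handled by Azuma--Hoeffding; the elliptical potential / self-normalized bound gives $\sum_{t=1}^T ||\x_t||_{\ibarV_{t-1}} = O(\sqrt{dT\log T})$. Combining, $\Regret_T^{\GLOCTS} = \hat O\big(L\cdot \sqrt{\tilde\beta_T}\cdot\sqrt{dT\log T}\big) = \hat O\big(\tfrac{L(L+R)}{\kap}\, d^{3/2}\sqrt{T}\,\log^{3/2}T\big)$, with the $\log T$ power dropping to one when $|\eta_t|\le\bar R$ because then $\beta_t^{\ONS}$ loses its extra $\log t$ (by Theorem~\ref{thm:ons}). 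The main obstacle I anticipate is the careful bookkeeping of the two sources of randomness — conditioning correctly on the filtration so that $\hth_{t-1},\barV_{t-1},\beta_{t-1}^{\ONS}$ are measurable while $\dot\th_t$'s Gaussian fluctuation is handled freshly at each step — and in particular making the anti-concentration/optimism constant $p$ and the inflation $\sqrt{d\log(tN)}$ interact cleanly with the \emph{data-dependent} radius $\beta_{t-1}^{\ONS}$, since unlike the classical linear case $\beta_{t-1}^{\ONS}$ itself is random and only bounded w.h.p.; one must intersect the good events from Corollary~\ref{cor:cset_ONS}, the Gaussian concentration, and the martingale bound, and track that the failure probabilities still sum to something like $O(\dt)$.
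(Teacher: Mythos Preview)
Your plan is essentially the paper's approach: the paper simply observes that the analysis of~\cite{abeille17linear} (their Lemma~4) is modular in the confidence set, so one can swap in $C_t^{\ONS}$ from Corollary~\ref{cor:cset_ONS} for their Proposition~11 and read off the bound $L\big(\sqrt{\beta_T(\dt')}+\sqrt{\gamma_T(\dt')}(1+2/p)\big)\sqrt{2Td\log(1+T/\lam)}+\tfrac{2L}{p}\sqrt{\gamma_T(\dt')}\sqrt{(8T/\lam)\log(4/\dt)}$ with $\gamma_t(\dt)=2d\log(2d/\dt)\,\beta_t(\dt)$ and $p=1/(4\sqrt{e\pi})$; the decomposition into a concentration ellipsoid and an inflated sampling ellipsoid, the constant-probability optimism, and the elliptical-potential sum are exactly as you describe.

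One small correction worth noting: the extra $\sqrt{d}$ in the inflated radius does \emph{not} come from a union bound over the $N$ arms. Since $\dot\th_t-\hth_{t-1}=\sqrt{\beta_{t-1}^{\ONS}}\,\barV_{t-1}^{-1/2}\bfxi_t$ with $\bfxi_t\sim\cN(0,\I_d)$, one has $||\dot\th_t-\hth_{t-1}||_{\barV_{t-1}}=\sqrt{\beta_{t-1}^{\ONS}}\,||\bfxi_t||_2$, and a chi-squared tail bound gives $||\bfxi_t||_2\le\sqrt{2d\log(2d/\dt')}$ directly---this is where the paper's $\gamma_t=2d\log(2d/\dt)\beta_t$ comes from, and it is $N$-free (so the result holds even for infinite $\cX_t$). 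Your $O(\sqrt{d\log(tN)})$ with a union over arms would instead give $O(\sqrt{\log(tN)})$ per arm and is the Agrawal--Goyal route, which also works for finite $\cX_t$ but yields a slightly different logarithmic dependence.
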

Notice that the regret now scales with $d^{3/2}$ as expected from the analysis of linear TS~\cite{agrawal14thompson}, which is higher than scaling with $d$ of GLOC.
This is concerning in the interactive retrieval or product recommendation scenario since the relevance of the shown items is harmed, which makes us wonder if one can improve the regret without loosing the hash-amenability.

\vspace{-6pt}
\paragraph{Quadratic GLOC} 
We now propose a new hash-amenable algorithm called Quadratic GLOC (QGLOC).
Recall that GLOC chooses the arm $\x^{\GLOC}$ by~\eqref{eq:glocopt_ext}.
Define $r = \min_{\x\in\calX} ||\x||_2$ and 
\begin{equation}\begin{aligned} \label{def-m}
    \barm_{t-1} := \min_{\x: ||\x||_2 \in [r,1]} ||\x||_{\ibarV_{t-1}} \;,
\end{aligned}\end{equation}
which is $r$ times the square root of the smallest eigenvalue of $\ibarV_{t-1}$.
It is easy to see that $\barm_{t-1} \le ||\x||_{\ibarV_{t-1}}$ for all $\x\in\cX$ and that $\barm_{t-1}\ge r/\sqrt{t+\lam}$ using the definition of $\barV_{t-1}$.
There is an alternative way to define $\barm_{t-1}$ without relying on $r$, which we present in SM.

Let $c_0 > 0$ be the exploration-exploitation tradeoff parameter (elaborated upon later).
At time $t$, QGLOC chooses the arm
\begin{equation}\begin{aligned}    \label{eq:qglocopt} 
  \x_{t}^{\QGLOC} :=&  \arg \max_{\x\in\calX_t} \la \hatth_{t-1}, \x \ra  + \fr{\beta_{t-1}^{1/4}}{4 c_0 \barm_{t-1}} || \x ||^2_{\barV^{-1}_{t-1}} 
= \arg \max_{\x\in\cX_t} \lt\langle \q_t,  \phi(\x) \rt\rangle \;, 
\end{aligned}\end{equation}
where $\q_t = [ \hth_{t-1}; \vec( \fr{\beta^{1/4}_{t-1} }{4c_0\barm_{t-1}} \ibarV_{t-1} )] \in \dsR^{d+d^2}$ and $\phi(\x) := [\x; \vec(\x\x^\T)]$.
The key property of QGLOC is that the objective function is now quadratic in $\x$, thus the name \emph{Quadratic} GLOC, and can be written as an inner product.
Thus, QGLOC is hash-amenable.
We present the regret bound of QGLOC~\eqref{eq:qglocopt} in Theorem~\ref{thm:x_t9}.
The key step of the proof is that the QGLOC objective function~\eqref{eq:qglocopt} plus $c_0\beta^{3/4} \barm_{t-1}$ is a tight upper bound of the GLOC objective function~\eqref{eq:glocopt_ext}.
\begin{thm}\label{thm:x_t9}
  Run QGLOC with $C^{\ONS}_t$.
  Then, w.p. at least $1-2\dt$,
  \[
    \Regret_T^{\emph{\text{QGLOC}}} = O\lt(  \lt(\fr{1}{c_0}\lt( \fr{L+R}{\kap} \rt)^{1/2} + c_0\lt( \fr{L+R}{\kap} \rt)^{3/2} \rt) Ld^{5/4} \sqrt{T} \log^2(T)\rt)\;.
    \]
   By setting $c_0 = \lt( \fr{L+R}{\kap} \rt)^{-1/2}$, the regret bound is $O(\fr{L(L+R)}{\kap} d^{5/4} \sqrt{T} \log^2(T)) $.
\end{thm}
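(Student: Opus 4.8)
The plan is to run the usual optimism‑based analysis while treating QGLOC's quadratic objective as a controlled surrogate for the GLOC objective~\eqref{eq:glocopt_ext}. Work on the event $\calE$ that $\th^*\in C^{\ONS}_t$ for all $t\ge1$, which by Corollary~\ref{cor:cset_ONS} has probability at least $1-2\dt$; everything below is deterministic on $\calE$. Since $\mu$ is $L$‑Lipschitz and increasing and $\x_{t,*}$ maximizes $\x^\T\th^*$ over $\cX_t$, the per‑round regret is at most $L\big((\x_{t,*})^\T\th^* - \x_t^\T\th^*\big)$ with $\x_t:=\x^{\QGLOC}_t$, so it suffices to bound $\sum_{t=1}^T\big((\x_{t,*})^\T\th^* - \x_t^\T\th^*\big)$.

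\textbf{Key step (quadratic surrogate).} Write $f_t(\x):=\x^\T\hth_{t-1}+\sqrt{\beta_{t-1}}\,\|\x\|_{\ibarV_{t-1}}$ (GLOC objective) and $h_t(\x):=\x^\T\hth_{t-1}+\tfrac{\beta_{t-1}^{1/4}}{4c_0\barm_{t-1}}\|\x\|^2_{\ibarV_{t-1}}$ (QGLOC objective). Applying AM--GM to $\|\x\|^2_{\ibarV_{t-1}}$ and $\beta_{t-1}$ with the weight $a=\beta_{t-1}^{1/4}/(2c_0\barm_{t-1})$ chosen so the resulting quadratic term has \emph{exactly} QGLOC's coefficient, one gets $\sqrt{\beta_{t-1}}\|\x\|_{\ibarV_{t-1}}\le \tfrac{\beta_{t-1}^{1/4}}{4c_0\barm_{t-1}}\|\x\|^2_{\ibarV_{t-1}}+c_0\beta_{t-1}^{3/4}\barm_{t-1}$, i.e. $f_t(\x)\le h_t(\x)+c_0\beta_{t-1}^{3/4}\barm_{t-1}$ for all $\x$. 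Chaining optimism ($\th^*\in C_{t-1}$ so $(\x_{t,*})^\T\th^*\le f_t(\x_{t,*})$), this inequality, and the optimality of $\x_t$ for $h_t$, gives $(\x_{t,*})^\T\th^*\le h_t(\x_t)+c_0\beta_{t-1}^{3/4}\barm_{t-1}$. Then $\x_t^\T\hth_{t-1}=\x_t^\T\th^*+\x_t^\T(\hth_{t-1}-\th^*)\le \x_t^\T\th^*+\sqrt{\beta_{t-1}}\|\x_t\|_{\ibarV_{t-1}}$ by Cauchy--Schwarz and $\th^*\in C_{t-1}$, so
\[
  (\x_{t,*})^\T\th^* - \x_t^\T\th^* \;\le\; \sqrt{\beta_{t-1}}\,\|\x_t\|_{\ibarV_{t-1}} \;+\; \tfrac{\beta_{t-1}^{1/4}}{4c_0\barm_{t-1}}\,\|\x_t\|^2_{\ibarV_{t-1}} \;+\; c_0\,\beta_{t-1}^{3/4}\,\barm_{t-1}.
\]

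\textbf{Summation and plugging in.} Replace $\beta^{\ONS}_{t-1}$ by a nondecreasing envelope $\barbeta_t$ as in Theorem~\ref{thm:regret_o2cs} and sum each of the three terms. For the first, the elliptical potential lemma gives $\sum_{t\le T}\|\x_t\|^2_{\ibarV_{t-1}}=O(d\log T)$ (for $\lam\ge1$), hence $\sum_{t\le T}\|\x_t\|_{\ibarV_{t-1}}=O(\sqrt{dT\log T})$ by Cauchy--Schwarz in $t$; contribution $O(\sqrt{\barbeta_T}\,\sqrt{dT\log T})$. For the second, the lower bound $\barm_{t-1}\ge r/\sqrt{t+\lam}$ (or its $r$‑free analogue) yields $\barm_{t-1}^{-1}\le\sqrt{T+\lam}/r$, so the contribution is $O\big(\tfrac{\barbeta_T^{1/4}\sqrt{T}}{c_0 r}\,d\log T\big)$. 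For the third, $\barm_{t-1}\le\|\x_t\|_{\ibarV_{t-1}}$ (since $\|\x_t\|_2\in[r,1]$) gives contribution $O\big(c_0\,\barbeta_T^{3/4}\,\sqrt{dT\log T}\big)$. Now substitute $\barbeta_T=\hat O\big(\tfrac{(L+R)^2}{\kap^2}\,d\log^2 T\big)$ from Theorem~\ref{thm:ons}/Corollary~\ref{cor:cset_ONS} and multiply by the $L$ from Lipschitzness: the first sum becomes $\hat O\big(\tfrac{L(L+R)}{\kap}d\sqrt{T}\log^{3/2}T\big)$, which is lower order in $d$ and is absorbed (by AM--GM the bracket in the theorem is $\gtrsim \tfrac{L+R}{\kap}$ for every $c_0$); the second sum produces $\tfrac1{c_0}\big(\tfrac{L+R}{\kap}\big)^{1/2}Ld^{5/4}\sqrt T\log^2 T$ (the extra $d^{1/4}$ coming from $\barbeta_T^{1/4}$), and the third produces $c_0\big(\tfrac{L+R}{\kap}\big)^{3/2}Ld^{5/4}\sqrt T\log^2 T$ (the extra $d^{1/4}$ from $\barbeta_T^{3/4}$ times $\sqrt d$). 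Adding these gives the stated bound, and optimizing over $c_0$ (the minimizer is $c_0=(\tfrac{L+R}{\kap})^{-1/2}$) equalizes the two and collapses the bracket to $O(\tfrac{L+R}{\kap})$, yielding $O\big(\tfrac{L(L+R)}{\kap}d^{5/4}\sqrt T\log^2 T\big)$.

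\textbf{Main obstacle.} The delicate part is the surrogate step: the AM--GM weight must be tuned so that the upper bound's quadratic term carries precisely the coefficient $\tfrac{\beta_{t-1}^{1/4}}{4c_0\barm_{t-1}}$ used by QGLOC (only then can its maximizer be plugged in), while the additive slack $c_0\beta_{t-1}^{3/4}\barm_{t-1}$ remains summable. This $\beta^{1/4}$ versus $\beta^{3/4}$ split is exactly what turns GLOC's $d$ dependence into $d^{5/4}$, and it forces $\barm_{t-1}$ into the analysis from both sides — its lower bound $\barm_{t-1}\gtrsim1/\sqrt{t}$ controls the middle sum and its upper bound $\barm_{t-1}\le\|\x_t\|_{\ibarV_{t-1}}$ controls the slack sum — so the remaining work is careful bookkeeping to verify the two $d^{5/4}$ terms, and their polylog factors, come out matching.
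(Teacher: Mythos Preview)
Your proposal is correct and follows essentially the same route as the paper: the same three-term decomposition of the per-round regret (the $\sqrt{\beta_{t-1}}\|\x_t\|_{\ibarV_{t-1}}$ term, the quadratic term, and the $c_0\beta_{t-1}^{3/4}\barm_{t-1}$ slack), each summed via the elliptical potential lemma together with the two-sided control of $\barm_{t-1}$. The only cosmetic differences are that the paper derives the surrogate inequality $f_t(\x)\le h_t(\x)+c_0\beta_{t-1}^{3/4}\barm_{t-1}$ via Lagrangian duality rather than your AM--GM (the two are equivalent here), and the paper handles general $\lam$ by capping each term with $\min\{2S,\cdot\}$ and invoking a $\min\{q,x\}\le\max\{2,q\}\log(1+x)$ lemma, whereas you sidestep this by taking $\lam\ge1$ so that $\|\x_t\|^2_{\ibarV_{t-1}}\le1$ directly.
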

Note that one can have a better dependence on $\log T$ when $\eta_t$ is bounded (available in the proof).
The regret bound of QGLOC is a $d^{1/4}$ factor improvement over that of GLOC-TS; see Table~\ref{tab:bandits}.
Furthermore, in~\eqref{eq:qglocopt} $c_0$ is a free parameter that adjusts the balance between the exploitation (the first term) and exploration (the second term).
Interestingly, the regret guarantee \emph{does not break down} when adjusting $c_0$ in Theorem~\ref{thm:x_t9}.
Such a characteristic is not found in existing algorithms but is attractive to practitioners, which we elaborate in SM.

\vspace{-6pt}
\paragraph{Maximum Inner Product Search (MIPS) Hashing}

While MIPS hashing algorithms such as~\cite{shrivastava14asymmetric,shrivastava15improved,neyshabur15on} can solve~\eqref{eq:qglocopt} in time sublinear in $N$, these necessarily introduce an approximation error.
Ideally, one would like the following guarantee on the error with probability at least $1-\dtH$:
\vspace{-3pt}
\begin{defn}\label{def:cmips}
 Let $\cX \subseteq \dsR^{d'}$ satisfy $|\cX| < \infty$.
 A data point $\til\x \in \cX$ is called $\ccH$-MIPS w.r.t. a given query $\q$ if it satisfies $\la \q, \til\x \ra \ge \ccH \cdot \max_{\x\in\cX}\la \q, \x \ra$ for some $\ccH<1$.
 An algorithm is called $\ccH$-MIPS if, given a query $\q\in {\dsR}^{d'}$, it retrieves $\x \in \cX$ that is $\ccH$-MIPS w.r.t. $\q$.
\end{defn}
\vspace{-5pt}
Unfortunately, existing MIPS algorithms do not directly offer such a guarantee, and one must build a series of hashing schemes with varying hashing parameters like~\citet{indyk12approximate}.
Under the fixed budget setting $T$, we elaborate our construction that is simpler than~\cite{indyk12approximate} in SM.

\vspace{-6pt}
\paragraph{Time and Space Complexity} 

%
Our construction involves saving Gaussian projection vectors that are used for determining hash keys and saving the buckets containing pointers to the actual arm vectors.
The time complexity for retrieving a $\ccH$-MIPS solution involves determining hash keys and evaluating inner products with the arms in the retrieved buckets.
Let $\rho^*<1$ be an optimized value for the hashing (see~\cite{shrivastava14asymmetric} for detail).
The time complexity for $d'$-dimensional vectors is $O\lt(\log\lt(\fr{\log(dT)}{\log(\ccH^{-1})}\rt) N^{\rho^*} \log(N) d'\rt)$, and the space complexity (except the original data) is $O\lt(\fr{\log(dT)}{\log(\ccH^{-1})} N^{\rho^*}(N + \log(N)d')\rt)$.
While the time and space complexity grows with the time horizon $T$, the dependence is mild; $\log \log(T)$ and $\log(T)$, respectively.
QGLOC uses $d'=d+d^2$,\footnote{
  Note that this does not mean we need to store $\text{vec}(\x\x^\T)$ since an inner product with it is structured.
}
  and GLOC-TS uses $d'=d'$.
While both achieve a time complexity sublinear in $N$, the time complexity of GLOC-TS scales with $d$ that is better than scaling with $d^2$ of QGLOC.
However, GLOC-TS has a $d^{1/4}$-factor worse regret bound than QGLOC.

\vspace{-6pt}
\paragraph{Discussion}

While it is reasonable to incur small errors in solving the arm selection criteria like~\eqref{eq:qglocopt} and sacrifice some regret in practice, the regret bounds of QGLOC and GLOC-TS do not hold anymore. 
Though not the focus of our paper, we prove a regret bound under the presence of the hashing error in the fixed budget setting for QGLOC; see SM.
Although the result therein has an inefficient space complexity that is linear in $T$, it provides the first low regret bound with time sublinear in $N$, to our knowledge.

\vspace{-4pt}
\section{Approximate Inner Product Computations with L1 Sampling}
\label{sec:iprod}
\vspace{-4pt}

While hashing allows a time complexity sublinear in $N$, it performs an additional computation for determining the hash keys.
Consider a hashing with $U$ tables and length-$k$ hash keys. 
Given a query $\q$ and projection vectors $\a^{(1)}, \ldots, \a^{(Uk)}$, the hashing computes $\q^\T \a^{(i)}$, $\forall i \in [Uk]$ to determine the hash key of $\q$. 
To reduce such an overhead, approximate inner product methods like~\cite{jain10hashing,kannan09spectral} are attractive since hash keys are determined by discretizing the inner products; small inner product errors often do not alter the hash keys. 

In this section, we propose an improved approximate inner product method called \emph{L1 sampling} which we claim is more accurate than the sampling proposed by~\citet{jain10hashing}, which we call \emph{L2 sampling}.
Consider an inner product $\q^\T \a$.
The main idea is to construct an unbiased estimate of $\q^\T \a$.
That is, let $\p \in \dsR^d$ be a probability vector.
Let
\begin{equation}\begin{aligned} \label{eq:def_iprod} 
  i_k \stackrel{\text{i.i.d.}}{\sim} \text{Multinomial}(\p) \quad \text{ and } \quad
  G_k := q_{i_k} a_{i_k} / p_{i_k}, \; k \in [m]  \;.
\end{aligned}\end{equation}
It is easy to see that $\expt G_k = \q^\T \a$.
By taking $\tfrac{1}{m}\sum_{k=1}^m G_k$ as an estimate of $\q^\T\a$, the time complexity is now $O(mUk)$ rather than $O(d'Uk)$.
The key is to choose the right $\p$.
L2 sampling uses $\p^{(\tL2)} := [q_i^2 / ||\q||_2^2]_i$.
Departing from L2, we propose $\p^{(\tL1)}$ that we call L1 sampling and define as follows:
\begin{align}
  \p^{(\tL1)} := [|q_1|;\cdots;|q_{d'}|] / || \q ||_1 \;.
\end{align}
We compare L1 with L2 in two different point of view.
Due to space constraints, we summarize the key ideas and defer the details to SM.

The first is on their concentration of measure.
Lemma~\ref{lem:p_l1} below shows an error bound of L1 whose failure probability decays exponentially in $m$.
This is in contrast to decaying polynomially of L2~\cite{jain10hashing}, which is inferior.\footnote{
  In fact, one can show a bound for L2 that fails with exponentially-decaying probability. However, the bound introduces a constant that can be arbitrarily large, which makes the tails thick. We provide details on this in SM.
}       
\begin{lem}\label{lem:p_l1}
  Define $G_k$ as in~\eqref{eq:def_iprod} with $\p=\p^{(\emph\tL1)}$.
  Then, given a target error $\eps>0$,
  \begin{equation}\begin{aligned} \label{eq:lem-p_l1}
      \textstyle   \P\lt( \lt|\fr{1}{m}\sum_{k=1}^m G_k - \q^\T\a \rt| \ge \eps \rt) \le  2\exp\lt(-\fr{m\eps^2}{2||\q||_1^2||\a||_{\max}^2}\rt)
  \end{aligned}\end{equation}
\end{lem}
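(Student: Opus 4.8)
The plan is to prove the concentration bound for L1 sampling via a standard Hoeffding-type argument, exploiting the fact that the L1 sampling distribution $\p^{(\tL1)}$ makes each summand $G_k$ \emph{bounded} almost surely. First I would observe that for $\p = \p^{(\tL1)}$, we have $p_{i_k} = |q_{i_k}|/\|\q\|_1$, so that
\[
  |G_k| = \frac{|q_{i_k}|\,|a_{i_k}|}{p_{i_k}} = \frac{|q_{i_k}|\,|a_{i_k}|\,\|\q\|_1}{|q_{i_k}|} = \|\q\|_1\,|a_{i_k}| \le \|\q\|_1\,\|\a\|_{\max}
\]
almost surely (treating coordinates with $q_i = 0$ as contributing nothing, since they get zero sampling probability and zero value). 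This uniform bound on $|G_k|$ is exactly what L2 sampling lacks: under $\p^{(\tL2)}$ the ratio $q_{i_k}a_{i_k}/p_{i_k} = \|\q\|_2^2 a_{i_k}/q_{i_k}$ can blow up when $|q_{i_k}|$ is small, which is why L2 only yields polynomial tail decay.

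The key steps, in order, are: (1) record $\expt G_k = \q^\T\a$, which is immediate from $\expt G_k = \sum_i p_i (q_i a_i/p_i) = \sum_i q_i a_i$; (2) establish the almost-sure bound $|G_k| \le \|\q\|_1 \|\a\|_{\max}$ as above, hence $G_k \in [-M, M]$ with $M := \|\q\|_1\|\a\|_{\max}$; (3) note that $G_1, \dots, G_m$ are i.i.d., so $\frac{1}{m}\sum_k G_k$ is an average of $m$ i.i.d.\ bounded random variables with mean $\q^\T\a$; (4) apply Hoeffding's inequality, which gives
\[
  \P\!\left( \left|\tfrac{1}{m}\textstyle\sum_{k=1}^m G_k - \q^\T\a\right| \ge \eps \right)
  \le 2\exp\!\left( -\frac{2m^2\eps^2}{\sum_{k=1}^m (2M)^2} \right)
  = 2\exp\!\left( -\frac{m\eps^2}{2M^2} \right)
  = 2\exp\!\left( -\frac{m\eps^2}{2\|\q\|_1^2\|\a\|_{\max}^2} \right),
\]
which is exactly~\eqref{eq:lem-p_l1}. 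This is a short proof once the boundedness observation is in hand.

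There is essentially no serious obstacle here; the only point requiring a little care is the handling of coordinates with $q_i = 0$ (so that $p_i^{(\tL1)} = 0$) — one should note the multinomial never selects such an index, so $G_k$ is well-defined, and the bound $|G_k| \le \|\q\|_1\|\a\|_{\max}$ holds on the support. If one instead wanted to avoid any degeneracy entirely, one could restrict all sums to $\{i : q_i \neq 0\}$ from the outset. The substantive content of the lemma — and the reason it is worth stating — is really the \emph{contrast} with L2 sampling: the plan would also include a short remark (or defer to the SM, as the excerpt does) explaining that the boundedness of $G_k$ under L1 is what upgrades the tail from polynomial (L2) to sub-Gaussian/exponential (L1), at the modest cost of replacing $\|\q\|_2$ by $\|\q\|_1$ and $\|\a\|_2$ by $\|\a\|_{\max}$ in the rate; whether this is an improvement depends on the geometry of $\q$ and $\a$, and the SM presumably quantifies when L1 wins (e.g., for normally distributed projection vectors, as claimed in the introduction).
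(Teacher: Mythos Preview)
Your proposal is correct and follows essentially the same approach as the paper: compute $|G_k| \le \|\q\|_1\|\a\|_{\max}$ from the definition of $\p^{(\tL1)}$, then apply Hoeffding's inequality to the i.i.d.\ bounded summands. The paper's proof is slightly terser (it omits the $q_i=0$ discussion and the L2 comparison, handling the latter separately), but the argument is identical.
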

To illustrate such a difference, we fix $\q$ and $\a$ in 1000 dimension and apply L2 and L1 sampling 20K times each with $m=5$ where we scale down the L2 distribution so its variance matches that of L1.
Figure~\ref{fig:iprod}(a) shows that L2 has thicker tails than L1.
Note this is not a pathological case but a typical case for Gaussian $\q$ and $\a$.
This confirms our claim that L1 is safer than L2. 

\begin{wrapfigure}{R}{0.51\textwidth}
  \vspace{-21pt}
\begin{minipage}{0.51\textwidth}
\begin{figure}[H]
  \begin{center}
{  \centering \footnotesize
\begin{tabular}{cc}
  \hspace{-12pt}
  \includegraphics[width=.50\textwidth]{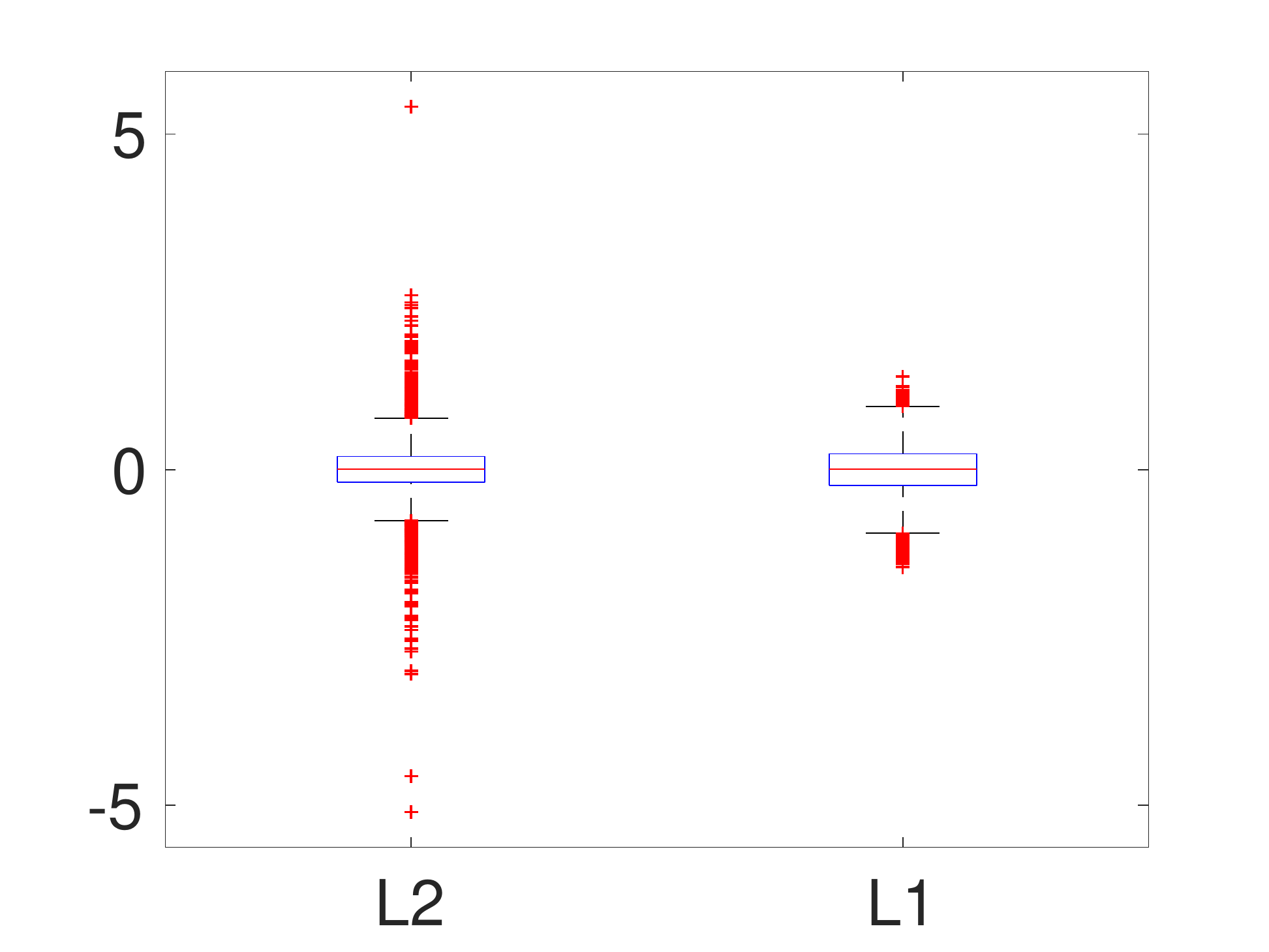} & \hspace{-10pt}
  \includegraphics[width=.50\textwidth]{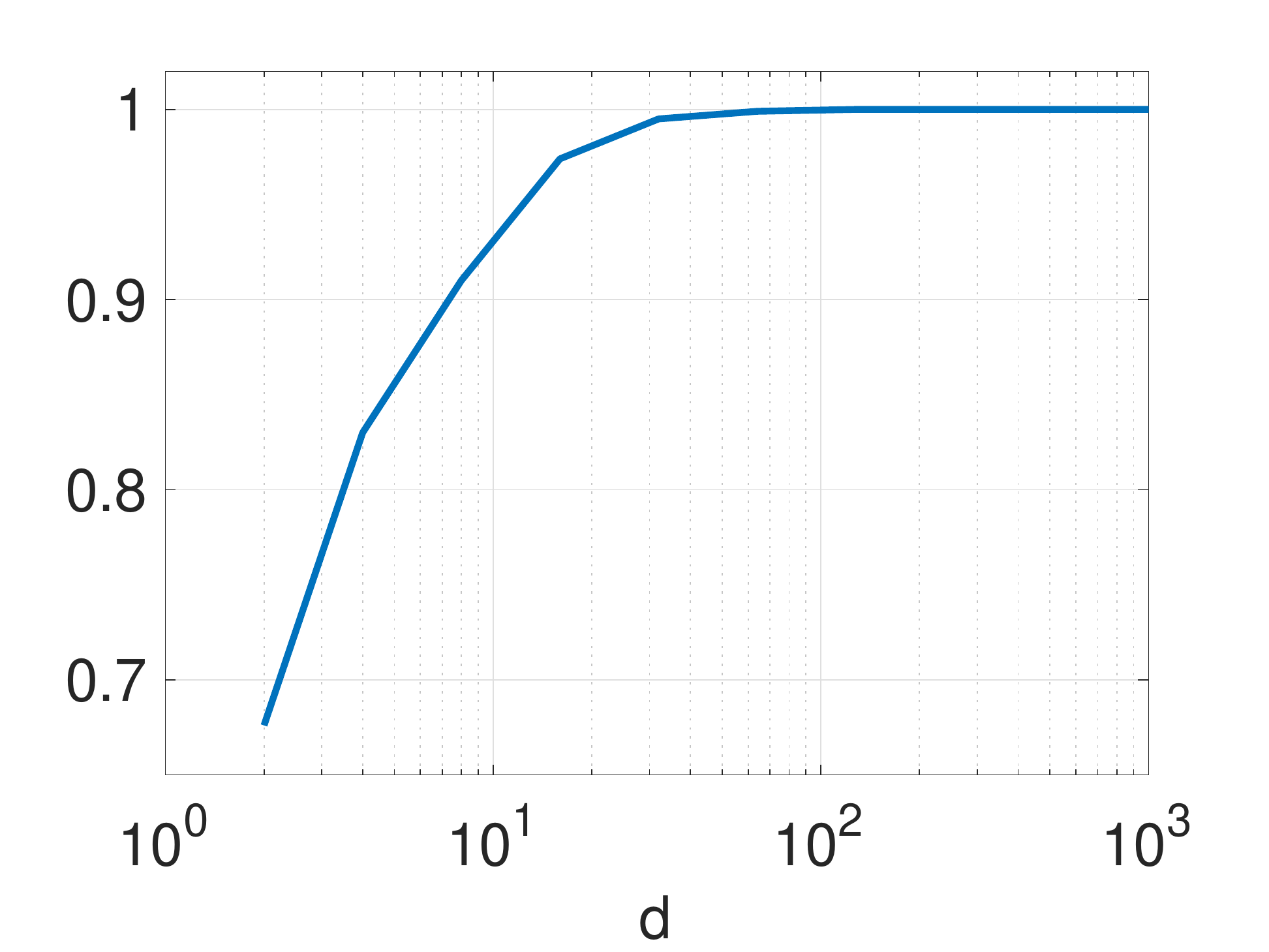} \vspace{-3pt}\\
  (a) & (b) \\
\end{tabular}
\vspace{-10pt}
}
\end{center}
\end{figure}
\end{minipage}
\vspace{-8pt}
\caption{
(a) A box plot of estimators.
L1 and L2 have the same variance, but L2 has thicker tails.
(b) The frequency of L1 inducing smaller variance than L2 in 1000 trials. After 100 dimensions, L1 mostly has smaller variance than L2. 
}
\label{fig:iprod}
\vspace{-5pt}
\end{wrapfigure}

Another point of comparison is the variance of L2 and L1.
We show that the variance of L1 may or may not be larger than L2 in SM; there is no absolute winner.
However, if $\q$ and $\a$ follow a Gaussian distribution, then L1 induces smaller variances than L2 for large enough $d$; see Lemma~\ref{lem:l1-var} in SM.
Figure~\ref{fig:iprod}(b) confirms such a result.
The actual gap between the variance of L2 and L1 is also nontrivial under the Gaussian assumption.
For instance, with $d=200$, the average variance of $G_k$ induced by L2 is 0.99 whereas that induced by L1 is 0.63 on average.
Although a stochastic assumption on the vectors being inner-producted is often unrealistic, in our work we deal with projection vectors $\a$ that are truly normally distributed.

\vspace{-4pt}
\section{Experiments}
\label{sec:expr}
\vspace{-4pt}

\begin{figure}
  \begin{center}
{  \centering \footnotesize
\begin{tabular}{ccc}
  \hspace{-5pt}\imagetop{\includegraphics[width=.29\textwidth,valign=top]{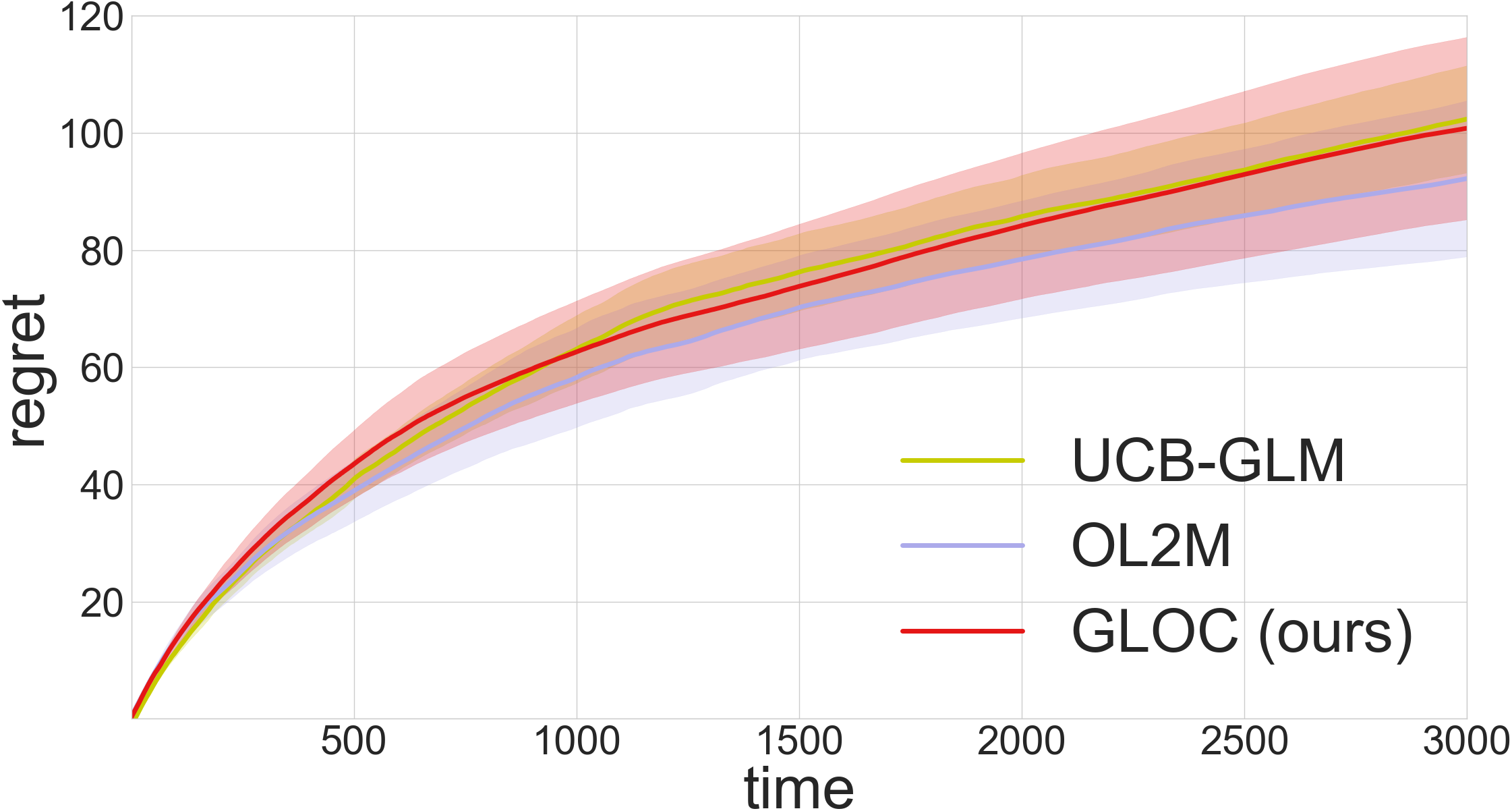}} &
  \imagetop{\includegraphics[width=.29\textwidth,valign=top]{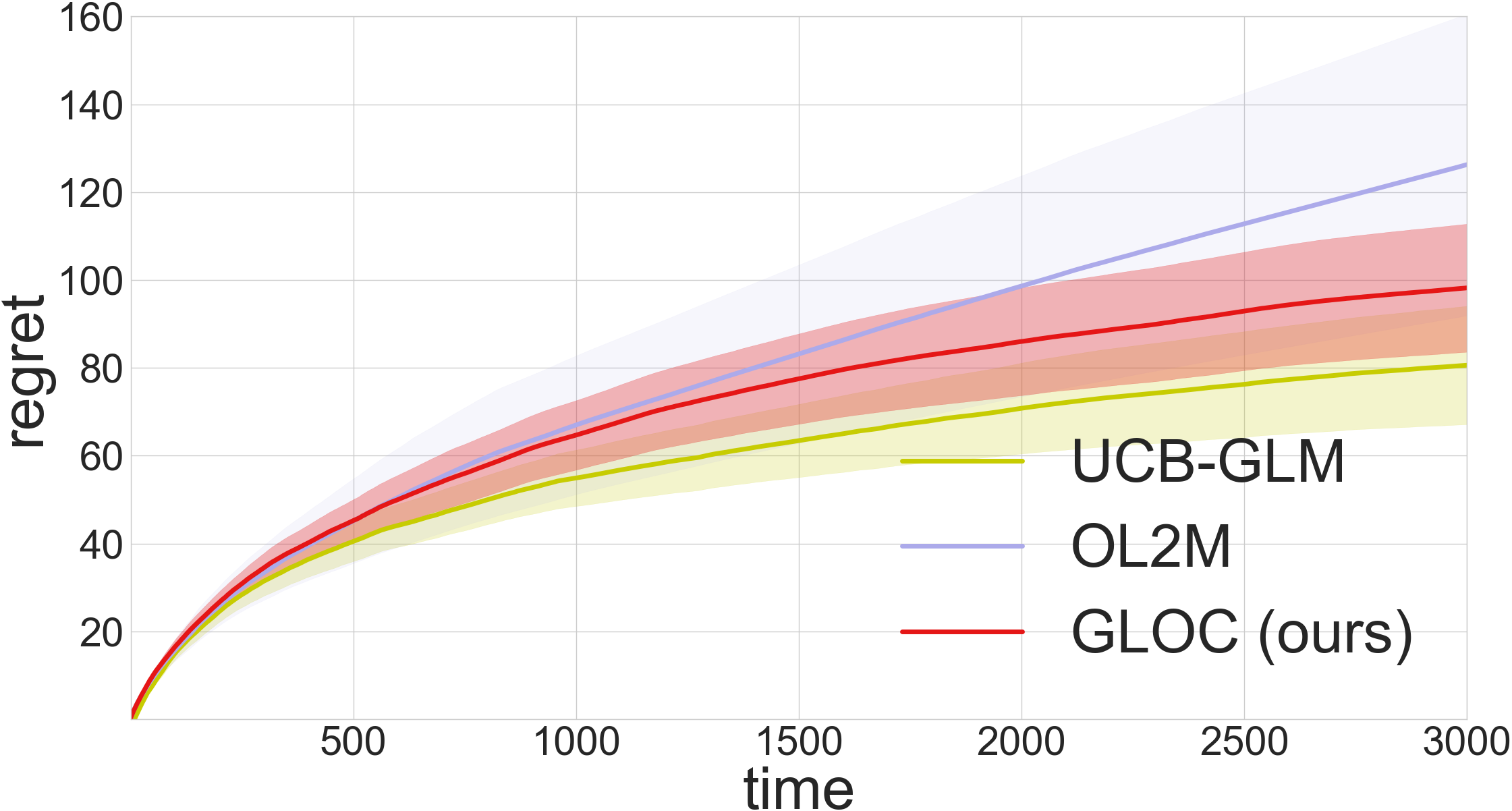}} \vspace{5pt} &
  \imagetop{\begin{tabular}{lc} \hline
    Algorithm     & Cum. Regret \\ \hline
    QGLOC         & 266.6 ($\pm$19.7)  \\
    QGLOC-Hash    & 285.0 ($\pm$30.3)  \\
    GLOC-TS       & 277.0 ($\pm$36.1)  \\
    GLOC-TS-Hash  & 289.1 ($\pm$28.1)  \\ \hline
  \end{tabular}} 
\\ (a) & (b) & (c)
\end{tabular}
}
\end{center}
\vspace{-18pt}
\caption{
  Cumulative regrets with confidence intervals under the (a) logit and (b) probit model.
  (c) Cumulative regrets with confidence intervals of hash-amenable algorithms.
}
\label{fig:true_regret} 
\vspace{-13pt}
\end{figure}

We now show our experiment results comparing GLB algorithms and hash-amenable algorithms.

\vspace{-6pt}
\paragraph{GLB Algorithms}
We compare GLOC with two different algorithms: UCB-GLM~\cite{li17provable} and Online Learning for Logit Model (OL2M)~\cite{zhang16online}.\footnote{We have chosen UCB-GLM over GLM-UCB of~\citet{filippi10parametric} as UCB-GLM has a lower regret bound.} 
For each trial, we draw $\th^*\in\dsR^d$ and $N$ arms ($\cX$) uniformly at random from the unit sphere.
We set $d=10$ and $\cX_t = \cX$, $\forall t\ge1$.
Note it is a common practice to scale the confidence set radius for bandits~\cite{chapelle11anempirical,li12anunbiased}.
Following~\citet{zhang16online}, for OL2M we set the squared radius $\gamma_t = c \log(\det(\Z_t)/\det(\Z_1))$, where $c$ is a tuning parameter.
For UCB-GLM, we set the radius as $\alpha = \sqrt{c d\log t}$.
For GLOC, we replace $\beta^\ONS_t$ with $c \sum_{s=1}^t g_s^2 ||\x_s||^2_{\A_s^{-1}}$.
While parameter tuning in practice is nontrivial, for the sake of comparison we tune $c \in \{10^1, 10^{0.5}, \ldots, 10^{-3}\}$ and report the best one.
We perform 40 trials up to time $T=3000$ for each method and compute confidence bounds on the regret.

We consider two GLM rewards: $(i)$ the logit model (the Bernoulli GLM) and $(ii)$ the probit model (non-canonical GLM) for 0/1 rewards that sets $\mu$ as the probit function.
Since OL2M is for the logit model only, we expect to see the consequences of model mismatch in the probit setting. 
For GLOC and UCB-GLM, we specify the correct reward model.
We plot the cumulative regret under the logit model in Figure~\ref{fig:true_regret}(a).
All three methods perform similarly, and we do not find any statistically significant difference based on paired t test.
The result for the probit model in Figure~\ref{fig:true_regret}(b) shows that OL2M indeed has higher regret than both GLOC and UCB-GLM due to the model mismatch in the probit setting.
Specifically, we verify that at $t=3000$ the difference between the regret of UCB-GLM and OL2M is statistically significant.
Furthermore, OL2M exhibits a significantly higher variance in the regret, which is unattractive in practice.
This shows the importance of being generalizable to \emph{any} GLM reward. 
Note we observe a big increase in running time for UCB-GLM compared to OL2M and GLOC.

\vspace{-6pt}
\paragraph{Hash-Amenable GLBs}\quad
To compare hash-amenable GLBs, we use the logit model as above but now with $N$=100,000 and $T$=5000.
We run QGLOC, QGLOC with hashing (QGLOC-Hash), GLOC-TS, and GLOC-TS with hashing (GLOC-TS-Hash), where we use the hashing to compute the objective function (e.g.,~\eqref{eq:qglocopt}) on just 1\% of the data points and save a significant amount of computation.
Details on our hashing implementation is found in SM.
Figure~\ref{fig:true_regret}(c) summarizes the result. 
We observe that QGLOC-Hash and GLOC-TS-Hash increase regret from QGLOC and GLOC-TS, respectively, but only moderately, which shows the efficacy of hashing.

\vspace{-4pt}
\section{Future Work}
\label{sec:conclusion}
\vspace{-4pt}

In this paper, we have proposed scalable algorithms for the GLB problem: $(i)$ for large time horizon $T$ and $(ii)$ for large number $N$ of arms.
There exists a number of interesting future work.
First, we would like to extend the GLM rewards to the single index models~\cite{kalai09theisotron} so one does not need to know the function $\mu$ ahead of time under mild assumptions.
Second, closing the regret bound gap between QGLOC and GLOC without loosing hash-amenability would be interesting: i.e., develop a hash-amenable GLB algorithm with $O(d\sqrt{T})$ regret.
In this direction, a first attempt could be to design a hashing scheme that can directly solve~\eqref{eq:glocopt_ext} approximately.




\section*{Acknowledgments}
This work was partially supported by the NSF grant IIS-1447449 and the MURI grant 2015-05174-04. The authors thank Yasin Abbasi-Yadkori and Anshumali Shrivastava for providing constructive feedback and Xin Hunt for her contribution at the initial stage.
{\small
\bibliographystyle{icml2017_kwang}
\bibliography{library-shared}
}

\clearpage

\begin{center}
{\Large\bf Supplementary Material (Appendix)}
\end{center}
\renewcommand\thesection{\Alph{section}}
\setcounter{section}{0}

\vspace{-5pt}
\section{Proof of Theorem~\ref{thm:o2cs}}
\vspace{-5pt}
%
We first describe the sketch of the proof.
We perform a generalized version of the online-to-confidence-set conversion of~\citet[Theorem 1]{ay12online} that was strictly for linear rewards. 
Unlike that work, which deals with the squared loss, we now work with the negative log likelihood loss.
The key is to use the strong-convexity of the loss function to turn the OL regret bound~\eqref{eq:ol_regret} into a quadratic equation in $\th^*$. 
Then, it remains to bound random quantities with a Martingale concentration bound.
\vspace{-5pt}
\begin{proof}
  Let $\ell'(z,y)$ and $\ell''(z,y)$ be the first and second derivative of the loss $\ell$ w.r.t. $z$.
  We lower bound the LHS of~\eqref{eq:ol_regret} using Taylor's theorem with some $\xi_s$ between $\x_s^\T\th^*$ and $\x_s^\T\th_s$:
  \vspace{-4pt}
  \begin{equation*}\begin{aligned}
      B_t 
      \ge \sum_{s=1}^t \ell_s(\th_s) - \ell_s(\th^*) 
      &= \sum_{s=1}^t \ell'(\x_s^\T\th^*, y_s) \x_s^\T(\th_s - \th^*) + \fr{\ell''(\xi_s,y_s)}{2} (\x_s^\T(\th_s - \th^*))^2
  \\&\stackrel{\text{(\text{Assumption~\ref{ass:mu}})}}{\ge} 
         \sum_{s=1}^t \ell'(\x_s^\T\th^*,y_s) \x_s^\T(\th_s - \th^*) + \fr{\kappa}{2} (\x_s^\T(\th_s - \th^*))^2  \;.
  \end{aligned} \end{equation*}
  Since $\ell'(\x_s^\T\th^*,y_s) = -y_s + \mu(\x_s^\T\th^*) = -\eta_s$,
  \vspace{-4pt}
  \begin{equation*}\begin{aligned}
    \sum_{s=1}^t (\x_s^\T(\th_s - \th^*))^2  \le \fr{2}{\kappa} B_t + \fr{2}{\kappa}\sum_{s=1}^t \eta_s \lt(\x_s^\T(\th_s - \th^*)\rt) \;.
  \end{aligned}
  \end{equation*}
  Note that the second term in the RHS involves $\eta_t$ that is unknown and random, which we bound using~\citet[Corollary 8]{ay12online}. 
  That is, w.p. at least $1-\dt$, for all $t\ge1$,
  \vspace{-4pt}
  \begin{equation*}
    \sum_{s=1}^t \eta_s  \lt(\x_s^\T(\th_s - \th^*)\rt) 
    \le R\sqrt{\lt(2 + 2\sum_{s=1}^t (\x_s^\T(\th_s-\th^*))^2\rt) \cdot\log\lt( \fr{1}{\dt} \sqrt{1+\sum_{s=1}^t (\x_s^\T(\th_s-\th^*))^2 } \rt)} .
  \end{equation*}
  Then,
  \vspace{-4pt}
  \begin{equation*}\begin{aligned}
      \sum_{s=1}^t \lt(\x_s^\T(\th_s - \th^*) \rt)^2 \le \fr{2}{\kappa}B_t + \fr{2 R}{\kappa} \sqrt{\lt(2 + 2\sum_{s=1}^t (\x_s^\T(\th_s-\th^*))^2\rt) \cdot\log\lt( \fr{1}{\dt} \sqrt{1+\sum_{s=1}^t (\x_s^\T(\th_s-\th^*))^2 } \rt)} . 
  \end{aligned}\end{equation*}
  Define $q := \sqrt{1 + \sum_{s=1}^t (\x_s^\T(\th_s-\th^*))^2}$. Then, the inequality above can be written as $ q^2 \le 1+ \fr{2}{\kappa}B_t + \fr{2\sqrt{2} R}{\kappa} q \sqrt{\log(q/\dt)} $.
  The following Lemma is useful. See Section~\ref{sec:proof_lem_arith} for a proof.
  \vspace{-4pt}
  \begin{lem}\label{lem:arith}
    Let $\dt\in(0,1), a \ge 0, f \ge 0, q \ge 1$.
    Then, 
    \[
      q^2 \le a + f q \sqrt{\log\lt(\fr{q}{\dt}\rt)} 
      \implies q^2 \le 2a + f^2\log\lt(\fr{\sqrt{4a + f^4/(4\dt^2)}}{\dt}\rt)
    \]
  \end{lem}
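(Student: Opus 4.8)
The statement to prove is the elementary arithmetic Lemma~\ref{lem:arith}: from $q^2 \le a + f q \sqrt{\log(q/\dt)}$ with $\dt\in(0,1)$, $a\ge 0$, $f\ge 0$, $q\ge 1$, deduce $q^2 \le 2a + f^2 \log\big(\sqrt{4a + f^4/(4\dt^2)}/\dt\big)$. The overall strategy is a standard ``bootstrap / self-improvement'' argument: first obtain a crude polynomial-type upper bound on $q$, plug it into the logarithm (where only a loose bound is needed since $\log$ grows slowly), and then solve the resulting genuinely quadratic inequality in $q$.

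\textbf{Step 1 (crude bound).} I would first dispose of the logarithm by the inequality $\sqrt{\log(q/\dt)} \le$ something polynomially small in $q$. A clean route: use $\log x \le x$ for $x>0$, hence $\sqrt{\log(q/\dt)} \le \sqrt{q/\dt}$, so the hypothesis gives $q^2 \le a + f q^{3/2}/\sqrt{\dt}$. This already forces $q$ to be bounded in terms of $a, f, \dt$; e.g. either $q^2 \le 2a$ or $q^2 \le 2 f q^{3/2}/\sqrt\dt$, the latter giving $q^{1/2} \le 2f/\sqrt\dt$, i.e. $q \le 4f^2/\dt \cdot(\text{something})$ — in any case a bound of the form $q \le c(a,f,\dt)$. (Alternatively, and perhaps more in the spirit of getting the stated constant, I would aim directly for a bound like $q^2 \le 4a + f^4/(4\dt^2)$ by a slightly more careful split, since that is exactly the quantity appearing under the square root in the conclusion.) Let me use $q^2 \le 4a + f^4/(4\dt^2) =: Q^2$ as the target of this step; this should follow by treating $q^2 \le a + fq\sqrt{q/\dt}$ — actually the cleanest is: from $q^2 \le a + fq\sqrt{\log(q/\dt)}$ and $\log(q/\dt) \le q/\dt$ we get $q \le $ (positive root of $q^2 - (f/\sqrt\dt) q^{3/2} - a = 0$), and a routine estimate of that root yields $q^2 \le 4a + f^4/(4\dt^2)$.

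\textbf{Step 2 (substitute and solve).} Now that $q \le Q := \sqrt{4a + f^4/(4\dt^2)}$, monotonicity of $\log$ gives $\log(q/\dt) \le \log(Q/\dt)$, so the hypothesis becomes the honest quadratic inequality $q^2 \le a + f\sqrt{\log(Q/\dt)}\cdot q$. Writing $b := f\sqrt{\log(Q/\dt)}$, this is $q^2 - bq - a \le 0$, so $q \le \tfrac{1}{2}(b + \sqrt{b^2 + 4a}) \le \tfrac{1}{2}(b + b + \sqrt{4a})$ wait — better: $q^2 \le bq + a$, and then $q^2 \le bq + a \le b\sqrt{bq+a} + a$; or simplest, use the general fact that $q^2 \le bq + a \implies q^2 \le b^2 + 2a$ (since $q \le \tfrac{b + \sqrt{b^2+4a}}{2}$ and $(\cdot)^2 \le \tfrac{b^2 + b^2 + 4a + 2b\sqrt{b^2+4a}}{4}$... one checks $q^2 \le b^2 + 2a$ directly by $q^2 \le bq + a$, hence $q^2 - bq \le a$, hence $(q - b/2)^2 \le a + b^2/4$, hence $q \le b/2 + \sqrt{a + b^2/4}$, hence $q^2 \le b^2/2 + 2a$ after squaring and using $2xy\le x^2+y^2$ — actually $q^2 \le (b/2 + \sqrt{a+b^2/4})^2 \le 2(b^2/4 + a + b^2/4) = b^2 + 2a$). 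Plugging back $b^2 = f^2\log(Q/\dt)$ gives exactly $q^2 \le 2a + f^2\log\big(\sqrt{4a+f^4/(4\dt^2)}/\dt\big)$, as claimed.

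\textbf{Main obstacle.} There is no deep difficulty here; the only thing requiring care is Step~1 — getting a crude bound on $q$ whose form matches the $\sqrt{4a + f^4/(4\dt^2)}$ that the author wants inside the log, rather than some sloppier expression. I would handle the edge cases $f = 0$ (then $q^2 \le a \le 2a$, trivially) and $a=0$ separately if needed, and I would use $q \ge 1$ only to keep $\log(q/\dt) \ge \log(1/\dt) > 0$ so that all square roots are of nonnegative quantities. Also worth a sentence: the bound $\log x \le x$ is wasteful but perfectly adequate since it only feeds into a logarithm at the end. The whole proof is two short displays plus bookkeeping.
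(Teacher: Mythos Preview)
Your bootstrap strategy is exactly the paper's, and your Step~2 computation (from $q^2\le bq+a$ to $q^2\le b^2+2a$ via $(u+v)^2\le 2(u^2+v^2)$) is identical to what the paper does. The gap is in Step~1: using $\log x\le x$ on the \emph{original} hypothesis produces $q^2\le a + (f/\sqrt{\dt})\,q^{3/2}$, and your ``routine estimate of that root'' does not actually give $q^2\le 4a+f^4/(4\dt^2)$. For instance, your case split yields $q^2\le\max\{2a,\,16f^4/\dt^2\}$, which is off by a factor of $64$ in the $f^4$ term; any reasonable handling of the $q^{3/2}$ inequality with $\log x\le x$ will land you at $q^2\le 4a+f^4/\dt^2$ at best, not $f^4/(4\dt^2)$. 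So as written you would prove a variant of the lemma, not the lemma as stated.

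The paper's fix is to reverse the order of your two steps. First apply your Step~2 argument with $b=f\sqrt{\log(q/\dt)}$ (this is legitimate even though $b$ depends on $q$: the quadratic $q^2-bq-a\le 0$ still gives $q\le\tfrac12(b+\sqrt{b^2+4a})$), yielding the intermediate inequality
\[
  q^2 \;\le\; 2a + f^2\log(q/\dt).
\]
Now the log appears \emph{linearly}, so applying $\log u\le u/2$ (not $\le u$) gives the genuinely quadratic $q^2\le 2a + f^2 q/(2\dt)$, and solving this exactly produces $q^2\le 4a+f^4/(4\dt^2)$. Substituting back into the displayed intermediate inequality gives the claim. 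In short: solve the quadratic first to kill the $\sqrt{\log}$, then use the crude $\log$ bound; doing it in your order leaves a $q^{3/2}$ term that costs you the constant.
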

  \vspace{-4pt}
  Applying Lemma~\ref{lem:arith} with $a := 1+ \fr{2}{\kappa}B_t$ and $f := 2\sqrt{2}R/\kappa$, we have
  \vspace{-4pt}
  \begin{equation*}\begin{aligned}
      \sum_{s=1}^t(\x_s^\T(\th_s-\th^*))^2 \le 1 + \fr{4}{\kappa}B_t + \fr{8R^2}{\kappa^2} \log\lt(\fr{1}{\dt}\sqrt{ 4+ \fr{8}{\kappa}B_t + \fr{64R^4}{\kappa^4\cdot 4\dt^2} } \rt)  = \beta'_t
  \end{aligned}\end{equation*} 
  Then, one can rewrite the above as
  \begin{equation*}
    || \z_{t} - \X_{t}\th^* ||^2_2 \le \beta'_t \;.
  \end{equation*}
  Let $\lam >0$.
  We add $\lam ||\th^*||$ to the both sides. 
  \begin{equation*}\begin{aligned}
      \lam ||\th^*||^2_2 + || \z_{t} - \X_{t}\th^* ||^2_2 \le \lam ||\th^*||^2_2 + \beta'_t \le \lam S^2 + \beta'_t \;.
  \end{aligned}\end{equation*}
  Hereafter, we omit $t$ from $\X_t $ and $\z_t $ for brevity.
  Since the LHS is quadratic in $\th^*$, we can rewrite it as an ellipsoid centered at $\hth_t := \arg\min_{\th} \lam ||\th||^2_2 + ||\z  - \X \th||^2_2 = \ibarV_t\X^\T \z$ as follows:
  \begin{equation*}\begin{aligned}
     ||\th^* - \hth_t||^2_{\barV_t} + \underbrace{\lam ||\hth_t||^2_2 + ||\z  - \X \hth_t ||^2_2}_{= ||\z ||^2_2 - \hth_t^\T\X^\T\z } \le \lam S^2 + \beta'_t \;,
  \end{aligned}\end{equation*}
  which concludes the proof.
\end{proof}

\vspace{-8pt}
\subsection{Proof of Lemma~\ref{lem:arith}}
\label{sec:proof_lem_arith}

\begin{proof}
Let $c := f\sqrt{\log(q/\dt)}$. Then, $q^2 \le a + cq \implies q^2 -cq -a \le 0$. Solving it for $q$, we get $q \le \fr{c + \sqrt{c^2 + 4a}}{2}$.
Then, using $(u+v)^2 \le 2(u^2 + v^2)$,
\vspace{-4pt}
\begin{equation}\begin{aligned} \label{eq:lem_arith}
  q^2 &\le \lt(\fr{c + \sqrt{c^2 + 4a}}{2}\rt)^2 \le  \fr{2(c^2 + c^2 + 4a)}{4}  = c^2 + 2a  
\\\iff q^2 &\le 2a + f^2 \log(q/\dt)
\end{aligned}\end{equation}
One might suspect that $c$ has $q$ in it, which might cause a problem.
To be assured, one can prove the contrapositive: $q > \fr{c + \sqrt{c^2 + 4a}}{2} \implies q^2-cq-a>0$.
To see this, $q^2-cq-a = (q - \fr{c+\sqrt{c^2+4a}}{2})(q - \fr{c-\sqrt{c^2+4a}}{2}) $ and since $ q > \fr{c+\sqrt{c^2+4a}}{2}$ it suffices to show that $q - \fr{c-\sqrt{c^2+4a}}{2}  > 0$.
Then, since $q - \fr{c-\sqrt{c^2+4a}}{2} \ge q - \fr{c+\sqrt{c^2+4a}}{2} > 0$.

Using $\log u \le \fr{1}{2} u$, 
\vspace{-4pt}
\[
  q^2 \le 2a + f^2 \log(q/\dt) \le 2a + \fr{f^2 }{2\dt}q  \iff q^2 - \fr{f^2}{2\dt}q - 2a \le 0\;.
\]
Solving the quadratic inequality for $q$, we have $q \le \fr{f^2/(2\dt) + \sqrt{(f^4/(4\dt^2)) + 8a}}{2}$.
This implies that 
\vspace{-4pt}
\[
  q^2 \le \fr{2 (f^4/(4\dt^2) + f^4/(4\dt^2) + 8a)}{4} = \fr{f^4}{4\dt^2} + 4a \;.
\]
Now, applying this inequality on $q$ in the RHS of~\eqref{eq:lem_arith},
\[
  q^2 \le 2a + f^2\log\lt(\fr{\sqrt{4a + f^4/(4\dt^2)}}{\dt}\rt)
\]
\end{proof}         
\vspace{-10pt}


\vspace{-5pt}
\section{Proof of Theorem~\ref{thm:regret_o2cs}}
\vspace{-5pt}
\begin{proof}
  Our proof closely follow a standard technique (cf.~\citet{ay11improved}).
  Define $\x_{t,*} = \arg\max_{\x\in\cX_t} \la\x,\th^*\ra$.
  Let $r_t := \mu(\x_{t,*}^\T \th^*) - \mu(\x_t^\T\th^*)$ be the instantaneous regret.
  Using $\mu(\x_{t,*}^\T \th^*) - \mu(\x_t^\T\th^*) \le L(\x_{t,*}^\T \th^* - \x_t^\T\th^*)$,
  \begin{equation*}\begin{aligned}
      \fr{r_t}{L} &\le \x_{t,*}^\T \th^* - \x_t^\T\th^*
  \\&\le \x_t^\T\tilth_t - \x_t^\T\th^*
  \\&=   \x_t^\T(\tilth_t - \hth_{t-1}) + \x_t^\T(\hth_{t-1} - \th^*)
  \\&\le ||\x_t||_{\ibarV_{t-1}} || \tilth_t - \hth_{t-1} ||_{\barV_{t-1}} + ||\x_t||_{\ibarV_{t-1}} || \th^* - \hth_{t-1} ||_{\barV_{t-1}}
  \\&\le 2\sqrt{\bar\beta_t}||\x_t||_{\ibarV_{t-1}} \;.
  \end{aligned}\end{equation*}

  Note that
  \begin{equation}\begin{aligned}
  \sum_{t=1}^T \log\lt(1+ ||\x_t||^2_{\ibarV_{t-1}}\rt) 
     =   \log\lt(\fr{\det(\barV_T)}{\det(\lam \I)}\rt) 
     \le d \log\lt( 1+T/(d\lam)\rt) \;, \label{eq:bound_log_1_x}
  \end{aligned}\end{equation}
  which is due to~\citet[Lemma 11]{ay11improved}. 

The following lemmas become useful.
\begin{lem}\label{lem:x_logx}
  For any $q, x \ge 0$,
  $$\min \{q, x\} \le \max\{2, q\} \log (1+x)$$
\end{lem}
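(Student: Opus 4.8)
The plan is to reduce the two-variable inequality to a one-variable one and then dispatch the latter by a short convexity argument. Writing $u := \min\{q,x\}$, the left-hand side of Lemma~\ref{lem:x_logx} is exactly $u$, and since $u \le q$ we have $\max\{2,u\} \le \max\{2,q\}$ by monotonicity of $t \mapsto \max\{2,t\}$, while $u \le x$ gives $\log(1+u) \le \log(1+x)$ by monotonicity of $t \mapsto \log(1+t)$. Hence it suffices to prove the single-variable claim
\[
  u \le \max\{2,u\}\,\log(1+u), \qquad \forall u \ge 0,
\]
and then chain these two monotonicity bounds to recover the statement.

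For the single-variable claim I would split on $u > 2$ versus $0 \le u \le 2$. When $u > 2$ we have $\max\{2,u\} = u$ and $\log(1+u) > \log 3 > 1$, so $\max\{2,u\}\log(1+u) = u\log(1+u) > u$. When $0 \le u \le 2$ we have $\max\{2,u\} = 2$, and concavity of $v \mapsto \log(1+v)$ gives $\log(1+u) \ge \tfrac{u}{2}\log 3$ (the graph lies above the chord joining $(0,0)$ and $(2,\log 3)$); multiplying by $2$ and using $\log 3 > 1$ yields $\max\{2,u\}\log(1+u) = 2\log(1+u) \ge u\log 3 \ge u$. This settles both ranges, and the degenerate cases ($u=0$, or $q=0$ with $x$ arbitrary) are immediate since the right-hand side is then nonnegative and the left-hand side is $0$.

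I do not anticipate a real obstacle: the only mildly delicate point is the behavior on $[0,2]$, where the naive bound $\log(1+u) \le u$ goes the wrong way, so one genuinely needs the chord/concavity estimate (equivalently, one can check that $u \mapsto 2\log(1+u)-u$ vanishes at $0$, is unimodal on $[0,2]$ with $2\log 3 - 2 > 0$ at the right endpoint, hence stays nonnegative). Everything else is monotonicity bookkeeping.
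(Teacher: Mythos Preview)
Your proof is correct. Both your argument and the paper's rest on the same analytical fact---the chord bound for the concave function $v \mapsto \log(1+v)$, which in the paper is stated as $x \le \tfrac{a}{\log(1+a)}\log(1+x)$ for $x \in [0,a]$---so the approaches are essentially the same. The organizational difference is that the paper keeps $q$ and $x$ separate and runs a four-way case split (two cases on whether $q \le 2$, each with two sub-cases on $x$), invoking the chord bound with $a=2$ or $a=q$ as needed; you instead collapse to the single variable $u = \min\{q,x\}$ first via monotonicity of both factors on the right, and then need only the chord bound with $a=2$ plus a trivial case for $u>2$. Your reduction is a modest but genuine streamlining: it halves the case analysis and makes it transparent why the constant $2$ appears.
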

\begin{proof}
  It is not hard to see that
\begin{equation}\begin{aligned} \label{x_logx_bound} 
    x \in [0,a] \implies x \le \fr{a}{\log(1+a)} \log(1+x)
\end{aligned}\end{equation}
  We consider the following two cases.

  \textbf{Case 1. } $q \le 2$ \\
  If $x \le 2$, by~\eqref{x_logx_bound}, $\min\{2, x\}  = x \le \fr{2}{\log(3)} \log(1+x) \le 2 \log(1+x)$.
  If $x > 2$, $\min\{2, x\} = 2 \le 2 \log(1+2) \le 2 \log(1+x) $.
  Thus, for any $x$, $\min\{q, x\} \le \min\{2, x\} \le 2 \log(1+x)$.

  \textbf{Case 2. } $q > 2$ \\
  If $x \le q$, by~\eqref{x_logx_bound}, $\min\{q,x\} = x \le \fr{q}{\log(1+q)} \log(1+x) < q \log(1+x)$.
  If $x > q$, $\min\{q,x\} = q \le q \log(1 + 2) \le q \log(1+x)$.

  Combining both cases to complete the proof.
\end{proof}

\begin{lem}\label{lem:regret_useful}
  If $A$ is a value independent of $t$,
  \[
    \sum_{t=1}^T \min \{A, ||\x_t||^2_{\ibarV_{t-1}}\} \le \max\{2,A\} d\log(1 + T/(d\lam)) \;.
  \]
\end{lem}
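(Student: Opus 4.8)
The plan is to reduce the claim to two facts already in hand: the elementary inequality of Lemma~\ref{lem:x_logx} and the log-determinant (potential) bound~\eqref{eq:bound_log_1_x}. First I would dispose of the trivial case: if $A<0$, then since $\barV_{t-1}=\lam\I+\X_{t-1}^\T\X_{t-1}\succeq\lam\I\succ0$ we have $\|\x_t\|^2_{\ibarV_{t-1}}\ge0>A$, so each summand equals $A$ and the left side is $TA<0$, while the right side is $2\,d\log(1+T/(d\lam))>0$; the inequality holds. So assume $A\ge0$.

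Next, fix $t$ and apply Lemma~\ref{lem:x_logx} with $q:=A$ and $x:=\|\x_t\|^2_{\ibarV_{t-1}}$ (both nonnegative, as just noted), obtaining
\[
  \min\{A,\,\|\x_t\|^2_{\ibarV_{t-1}}\} \;\le\; \max\{2,A\}\,\log\!\big(1+\|\x_t\|^2_{\ibarV_{t-1}}\big).
\]
Now sum over $t=1,\dots,T$. Here I would use the hypothesis that $A$ is independent of $t$, so the constant $\max\{2,A\}$ pulls out of the sum:
\[
  \sum_{t=1}^T \min\{A,\,\|\x_t\|^2_{\ibarV_{t-1}}\} \;\le\; \max\{2,A\}\sum_{t=1}^T \log\!\big(1+\|\x_t\|^2_{\ibarV_{t-1}}\big).
\]
Finally, I would invoke~\eqref{eq:bound_log_1_x} (itself a consequence of Lemma~11 of~\citet{ay11improved}), which gives $\sum_{t=1}^T\log(1+\|\x_t\|^2_{\ibarV_{t-1}})\le d\log(1+T/(d\lam))$, and substitute to conclude.

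There is essentially no obstacle in this argument: it is a one-line termwise estimate followed by a citation. The only points needing a word of care are the verification of the hypotheses of Lemma~\ref{lem:x_logx} (nonnegativity of both arguments, from $\barV_{t-1}\succ0$) and the use of the $t$-independence of $A$ to factor $\max\{2,A\}$ out of the summation; if anything, the modest subtlety is simply recalling that~\eqref{eq:bound_log_1_x} already packages the telescoping-determinant computation, so nothing further about the matrices $\barV_{t-1}$ is required here.
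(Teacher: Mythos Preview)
Your proposal is correct and follows exactly the approach the paper takes: its proof is the single line ``Combine Lemma~\ref{lem:x_logx} and~\eqref{eq:bound_log_1_x},'' and you have simply spelled out that combination in detail (with the harmless extra case $A<0$).
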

\begin{proof}
  Combine Lemma~\ref{lem:x_logx} and~\eqref{eq:bound_log_1_x}.
\end{proof}
%
Since $r_t$ cannot be bigger than $2LS$,
\begin{equation*}\begin{aligned}
    \sum_{t=1}^T r_t 
  &\le \sum_{t=1}^T \min\{ 2LS, 2L \sqrt{\barbeta_t}||\x_t||_{\ibarV_t}  \}
\\&\le  2L\sqrt{\barbeta_T}\sum_{t=1}^T \min\{ {S}/\sqrt{\barbeta_T}, ||\x_t||_{\ibarV_t}  \} 
\\&\stackrel{\text{(C.-S.)}}{\le} 2L \sqrt{\barbeta_T}\sqrt{T \sum_{t=1}^T \min\lt\{ \fr{S^2}{\barbeta_T}, ||\x_t||^2_{\ibarV_t}  \rt\} }
\\&\stackrel{(\text{Lem.~\ref{lem:regret_useful}})}{\le}  2L\sqrt{\barbeta_T}\sqrt{T \max\{2,S^2/\barbeta_T\} d\log(1+T/(d\lam))  }
\\&= O\lt( L\sqrt{\barbeta_T}\sqrt{T}\cdot \sqrt{d\log T} \rt)
\end{aligned}\end{equation*}
where C.-S. stands for the Cauchy-Schwartz inequality.
%
\end{proof}

\vspace{-5pt}
\section{Proof of Theorem~\ref{thm:ons}}
\vspace{-5pt}
\begin{proof}
  We closely follow the proof of~\citet{hazan07logarithmic}.
  Since $\ell(z,y)$ is $\kap$-strongly convex w.r.t. $z \in \cB_1(S)$,
  \begin{equation}\label{eq:thm_ons_0}
    \ell(\x_s^\T \th_s, y_s) - \ell(\x_s^\T\th^*,y_s) \le \ell'(\x_s^\T\th_s, y_s) \cdot \x_s^\T(\th_s - \th^*) - \fr{\kap}{2} (\x_s^\T(\th_s-\th^*))^2  \;.
  \end{equation}
  Define $g_s := \ell'(\x_s^\T\th_s, y_s)$.
  Note that by the update rule of Algorithm~\ref{alg:ons},
  \begin{align}    
    \th'_{s+1} - \th^* &= \th_s - \th^* - \fr{g_s}{\kap}\A_s^{-1}\x_s \notag
\\\implies ||\th'_{s+1} - \th^* ||^2_{\A_s} &= ||\th_s - \th^*||^2_{\A_s} - \fr{2g_s}{\kap}\x_s^\T(\th_s-\th^*) + \fr{g_s^2}{\kap^2}||\x_s||^2_{\A_s^{-1}}  \label{eq:thm_ons_1}
  \end{align}
  By the property of the generalized projection (see \citet[Lemma 8]{hazan07logarithmic})
  \[
    ||\th'_{s+1} - \th^*||^2_{\A_s} \ge || \th_{s+1} - \th^* ||^2_{\A_s} \;.
  \]
  Now, together with~\eqref{eq:thm_ons_1}, 
  \begin{equation*}\begin{aligned}
   ||\th_{s+1} - \th^* ||^2_{\A_s} &\le ||\th_s - \th^*||^2_{\A_s} - \fr{2g_s}{\kap}\x_s^\T(\th_s-\th^*) + \fr{g_s^2}{\kap^2}||\x_s||^2_{\A_s^{-1}}
  \\\implies \sum_{s=1}^t  g_s\x_s^\T(\th_s-\th^*) &\le \sum_{s=1}^t\fr{g_s^2}{2\kap}||\x_s||^2_{\A_s^{-1}} + \fr{\kap}{2} \underbrace{ \sum_{s=1}^t ||\th_s - \th^*||^2_{\A_s} - ||\th_{s+1} - \th^* ||^2_{\A_s} }_{=:D_1}  \;.
  \end{aligned}\end{equation*}
  Note 
  \begin{equation*}\begin{aligned}
  D_1  &= ||\th_1 - \th^* ||^2_{\A_1} + \lt(\sum_{s=2}^t || \th_s - \th^* ||^2_{\A_s} - \sum_{s=2}^t ||\th_s - \th^*||^2_{\A_{s-1}}\rt) - ||\th_{t+1} - \th^*||^2_{\A_{t}} 
  \\&\le ||\th_1 - \th^* ||^2_{\A_1} + \lt(\sum_{s=2}^t || \th_s - \th^* ||^2_{\blue{\A_s}} - \sum_{s=2}^t ||\th_s - \th^*||^2_{\blue{\A_{s-1}}}\rt) 
  \\&\stackrel{(a)}{=} ||\th_1 - \th^* ||^2_{\A_1} + \lt(- ||\th_1-\th^*||^2_{\x_1\x_1^\T} + \sum_{s=1}^t ||\th_s-\th^*||^2_{\x_s\x_s^\T}\rt)
  \\&=  ||\th_1 - \th^* ||^2_{\eps\I} + \sum_{s=1}^t ||\th_s-\th^*||^2_{\x_s\x_s^\T}
     \le  4\eps S^2 + \sum_{s=1}^t ||\th_s-\th^*||^2_{\x_s\x_s^\T}
  \end{aligned}\end{equation*}
  where $(a)$ is due to $\A_s - \A_{s-1} = \x_s\x_s^\T$.
  Therefore,
  \begin{equation*}\begin{aligned}
    \sum_{s=1}^t  g_s\x_s^\T(\th_s-\th^*) 
  &\le \sum_{s=1}^t\fr{g_s^2}{2\kap}||\x_s||^2_{\A_s^{-1}} + 2\eps \kap S^2 + \fr{\kap}{2}\sum_{s=1}^t ||\th_s - \th^*||^2_{\x_s\x_s^\T}  \;.
  \end{aligned}\end{equation*}
  Move the rightmost sum in the RHS to the LHS to see that the LHS now coincide with the RHS of~\eqref{eq:thm_ons_0}.
  This leads to
  \begin{equation*}\begin{aligned}
    \sum_{s=1}^t \ell(\x_s^\T \th_s, y_s) - \ell(\x_s^\T\th^*, y_s) \le \fr{1}{2\kap} \sum_{s=1}^t g_s^2||\x_s||^2_{\A_s^{-1}} + 2\eps \kap S^2 = B^\ONS_t \;.
  \end{aligned}\end{equation*}
  This yields the statement of the theorem.

  For characterizing the order of $B^\ONS_t$, notice that $g_s$ is a random variable:
  \begin{equation*}\begin{aligned}
    g_s^2 
    &=   ( -y_s + \mu(\x_s^\T\th_s) )^2 = ( -\mu(\x_s^\T\th^*) - \eta_s + \mu(\x_s^\T\th_s) )^2
  \\&\le 2(\mu(\x_s^\T\th_s) - \mu(\x_s^\T\th^*))^2 + 2\eta_s^2
  \\&\le 2(L\cdot\x_s^\T(\th_s - \th^*))^2 + 2\eta_s^2
  \\&\le 2L^2\cdot 4S^2 + 2\eta_s^2
  \end{aligned}\end{equation*}
  Let $\dt<1$ be the target failure rate.
  By the sub-Gaussianity of $\eta_s$, 
  \[
    \P\lt( \forall s\ge1, |\eta_s|^2 \ge 2R^2\log(4s^2/\dt)\rt) \le \sum_{s\ge1} \P\lt(|\eta_s|^2 \ge 2R^2\log(4s^2/\dt)\rt) \le \sum_{s\ge1} \dt/(2s^2) \le \dt \;.
  \]
  Thus, w.p. at least $1-\dt$, $\max_{s\le t} g_s^2 \le 8L^2S^2 + 4R^2\log(4 t^2/\dt) = O(L^2+R^2\log(t/\dt))$.
  Furthermore, $\sum_{s=1}^t ||\x_s||^2_{\A_s^{-1}} \le d\log ( 1+ (t/\eps)) $ by~\citet[Lemma 11]{hazan07logarithmic}.
  Thus,  w.p. at least $1-\dt$, $\forall t\ge1, B^\ONS_t = O\lt(\fr{L^2 + R^2\log(t/\dt)}{\kap}d\log t\rt)$.

  For the case where $|\eta_s|$ is bounded by $\bar R$ w.p. 1 (e.g., $\bar R=\fr{1}{2}$ for Bernoulli), $\max_{s\le t}g_s^2 \le 8L^2 S^2 + 2 \bar{R}^2 $, which leads to $B^\ONS_t = O\lt(\fr{L^2 + {\bar R}^2}{\kap}d\log t\rt)$. 
\end{proof}

\vspace{-5pt}
\section{Proof of Corollaries~\ref{cor:cset_ONS} and ~\ref{cor:regret_glocon_ons}}
\vspace{-5pt}

The proof of Corollary~\ref{cor:cset_ONS} a trivial consequence of combining Theorem~\ref{thm:o2cs} and Theorem~\ref{thm:ons}.

Corollary~\ref{cor:regret_glocon_ons} is simply a combination of Theorem~\ref{thm:regret_o2cs} and Corollary~\ref{cor:cset_ONS}.
Note that $\barbeta^\ONS_t = \alpha(B^\ONS_t) + \lambda S^2 \ge \beta^\ONS_t $ by noticing that $||\z_{t}||^2_2 - \hth_t^\T\X_{t}^\T \z_{t} $ is nonnegative (from the proof of Theorem~\ref{thm:regret_o2cs}).
This concludes the proof.

\vspace{-5pt}
\section{A Tighter Confidence Set}
\label{sec:supp_tighter}
\vspace{-5pt}

While the confidence set constructed by Theorem~\ref{thm:regret_o2cs} is generic and allows us to rely on any online learner with a known regret bound, one can find a tighter confidence set by analyzing the online learner directly.
We show one instance of such for ONS.
A distinctive characteristic of our new confidence set, denoted by $C^{\ONS_+}_t$, is that it now depends on $y_t$ (note that $C^{\ONS}_t$ depends on $y_1,\ldots,y_{t-1}$ only).

We deviate from the proof of Theorem~\ref{thm:ons}.
Recall that
\begin{equation*}\begin{aligned}
D_1  &= ||\th_1 - \th^* ||^2_{\A_1} + \lt(\sum_{s=2}^t || \th_s - \th^* ||^2_{\A_s} - \sum_{s=2}^t ||\th_s - \th^*||^2_{\A_{s-1}}\rt) - ||\th_{t+1} - \th^*||^2_{\A_{t}} 
\end{aligned}\end{equation*}
We previously dropped the term $||\th_{t+1} - \th^*||^2_{\A_t}$.
We we now keep it, which leads to:
\begin{equation*}\begin{aligned}
  D_1 &\le  4\eps S^2 + \sum_{s=1}^t ||\th_s-\th^*||^2_{\x_s\x_s^\T} - ||\th_{t+1} - \th^*||^2_{\A_{t}}  
\end{aligned}\end{equation*}
Following the same argument,
\begin{equation*}\begin{aligned}
    \lt(\sum_{s=1}^t \ell_s(\x_s^\T \th_t) - \ell_s(\x_s^\T\th^*) \rt)  + \fr{\kap}{2} ||\th_{t+1} - \th^*||^2_{\A_{t}} 
   &\le \sum_{s=1}^t\fr{g_s^2}{2\kap}||\x_s||^2_{\A_s^{-1}} + 2\eps \kap S^2
    = B_t^{\ONS}
\\\implies  \lt(\sum_{s=1}^t \ell_s(\x_s^\T \th_t) - \ell_s(\x_s^\T\th^*) \rt)   
   &\le B_t^{\ONS} - \fr{\kap}{2} ||\th_{t+1} - \th^*||^2_{\A_{t}}\;,
\end{aligned}\end{equation*}

Combining the above with the proof of Theorem~\ref{thm:o2cs},
\begin{equation*}\begin{aligned}
&    \sum_{s=1}^t(\x_s^\T(\th_s-\th^*))^2 
\\&\le 1 + \fr{4}{\kappa}(B^{\ONS}_t - \fr{\kap}{2} ||\th_{t+1} - \th^*||^2_{\A_t} ) + \fr{8R^2}{\kappa^2} \log\lt(\fr{2}{\dt}\sqrt{ 1+ \fr{2}{\kappa}B^{\ONS}_t + \fr{4R^4}{\kappa^4\dt^2} } \rt)
\\&\iff\sum_{s=1}^t(\x_s^\T(\th_s-\th^*))^2 + 2||\th_{t+1} - \th^*||^2_{\A_t} 
      \le 1 + \fr{4}{\kappa}B^{\ONS}_t + \fr{8R^2}{\kappa^2} \log\lt(\fr{2}{\dt}\sqrt{ 1+ \fr{2}{\kappa}B^{\ONS}_t + \fr{4R^4}{\kappa^4\dt^2} } \rt)
\end{aligned}\end{equation*} 
Define $\z_t = [ \x_s^\T\th_s ]_{s\in[t]}$, $\z^*_t = [\x_s^\T\th^* ]_{s\in[t]}$ and $\z'_s = [\x_s^\T\th_{t+1}]_{s\in[t]}$.
Then, the LHS above is
\begin{equation*}\begin{aligned}
  &||\z_t - \z^*_t||_2^2 + 2||\z'_t - \z^*_t||_2^2 + 2||\th_{t+1}-\th^*||^2_{\eps\I}
\\& = 3 \lnorm \fr{\z_t + 2\z'_t}{3} - \z^*_t \rnorm^2_2 - \fr{1}{3} ||\z_t + 2\z'_t||^2_2 + ||\z_t||^2_2 + 2||\z'_t||^2_2 + 2||\th_{t+1}-\th^*||^2_{\eps\I}
\\& = 3 \lnorm \fr{\z_t + 2\z'_t}{3} - \z^*_t \rnorm^2_2 + \fr{2}{3}||\z_t-\z'_t||^2_2 + 2||\th_{t+1}-\th^*||^2_{\eps\I}\;.
\end{aligned}\end{equation*}
Let $\bar\z_s = (\z_s+2\z'_s)/3$. Then,
\begin{equation*}\begin{aligned}
  \lnorm \bar\z_t - \z^*_t\rnorm^2_2 + ||\th_{t+1}-\th^*||^2_{(2\eps/3)\I} 
  &\le -\fr{2}{9}||\z_t - \z'_t||^2_2 + \fr{1}{3} + \fr{4}{3\kap}B^{\ONS}_t 
  \\&\quad + \fr{8R^2}{3\kap^2}\log\lt(\fr{2}{\dt}\sqrt{ 1+ \fr{2}{\kappa}B^{\ONS}_t + \fr{4R^4}{\kappa^4\dt^2} } \rt)
\end{aligned}\end{equation*}
We lower bound the LHS with $\lnorm \bar\z_t - \z^*_t\rnorm^2_2 + \fr{2\eps}{3}||\th^*||^2_2 - \fr{4\eps}{3} S^2 =: D_2 $.
Define $\barW_t := \X_t^\T\X_t + (2\eps/3)\I$ and $\hth^+_t = \ibarW_t\X_t^\T\bar\z_t$.
Then,
\begin{equation*}\begin{aligned}
D_2 
  &= ||\th^* - \hth^+_t||^2_{\barW_t} + \underbrace{||\bar\z_t - \X_t \hth^+_t||^2_2 + \fr{2\eps}{3}||\hth^+_t||^2_2}_{ = ||\bar\z_t||^2_2 - \bar\z_t^\T\X_t\hth^+_t } -\fr{4\eps}{3} S^2 \;.
\end{aligned}\end{equation*}
Thus,
\begin{equation*}\begin{aligned}
||\th^* - \hth^+_t||^2_{\barW_t} 
  &\le -||\bar\z_t||^2_2 + \bar\z_t^\T\X_t\hth^+_t + \fr{4\eps}{3} S^2 -\fr{2}{9}||\z_t - \z'_t||^2_2 + \fr{1}{3} + \fr{4}{3\kap}B^{\ONS}_t 
  \\&\quad + \fr{8R^2}{3\kap^2}\log\lt(\fr{2}{\dt}\sqrt{ 1+ \fr{2}{\kappa}B^{\ONS}_t + \fr{4R^4}{\kappa^4\dt^2} } \rt) 
\\&=:\beta^{\ONS_+}_t \;.
\end{aligned}\end{equation*}
This leads to the following confidence set:
\begin{equation*}
  C^{\ONS_+}_t = \{ \th\in\dsR^d: ||\th - \hth^+_t||^2_{\barW_t} \le \beta^{\ONS_+}_t  \} \;.
\end{equation*}

\vspace{-5pt}
\section{Details on GLOC-TS}
\vspace{-5pt}

For simplicity, we present the algorithm and the analysis when GLOC-TS is combined with the confidence set $C^{\ONS}_t$.
To present the algorithm, we use the definition $\beta_t^{\ONS}$ from Corollary~\ref{cor:cset_ONS}. 
We use the notation $\beta_{t}^{\ONS}(\dt)$ to show the dependence on $\dt$ explicitly.
We present the algorithm of GLOC-TS in Algorithm~\ref{alg:gloc_ts}.
\begin{algorithm}[h]
  \begin{algorithmic}[1]
    \STATE \textbf{Input}: time horizon $T$, $\dt \in(0,1)$, $\lam>0$, $S>0$ , $\kap>0$.
    \STATE Let $\dt' = \dt/(8T)$.
    \FOR {$t=1,2,\ldots,T$}
      \STATE Sample $\bfxi_t \sim \cN(0, \I)$.
      \STATE Compute parameter $\dot\th_t = \hth_{t-1} + \sqrt{\beta^{\ONS}_{t-1}(\dt')}\cdot\barV_{t-1}^{-1/2} \bfxi_t $
      \STATE Solve $\x_t = \arg \max_{\x\in\cX} \x^\T\dot\th_t$.
      \STATE Pull $\x_t$ and then observe $y_t$.
      \STATE Feed into $\cB$ the loss function $\ell_t(\th) = \ell(\x_t^\T \th, y_t)$.
      \STATE Update $\hth_t$ and $\barV_t$.
    \ENDFOR
  \end{algorithmic}
  \caption{GLOC-TS (GLOC - Thompson Sampling)}
  \label{alg:gloc_ts}
\end{algorithm}

Define $\gam_t(\dt) = \beta_t(\dt) 2 d \log(2d/\dt)$ and $p=\fr{1}{4\sqrt{e\pi}}$.
We present the full statement of Theorem~\ref{thm:gloc-ts} as follows.

\paragraph{Theorem~\ref{thm:gloc-ts}}
Let $\dt' = \dt/(8T)$.
The cumulative regret of GLOC-TS over $T$ steps is bounded as, w.p. at least $1-\dt$,
\begin{equation*}\begin{aligned}
    \text{Regret}_T &\le L\lt(\sqrt{\beta_T(\dt')} + \sqrt{\gam_T(\dt')}(1+2/p)\rt)\sqrt{2Td\log(1+T/\lam)} + \fr{2L}{p} \sqrt{\gam_T(\dt')} \sqrt{\fr{8T}{\lam}\log(4/\dt)} \\
  \\&= O\lt(\fr{L(L + R)}{\kap} d^{3/2} \sqrt{\log(d) T} \log^{3/2} T \rt)
\end{aligned}\end{equation*}
\begin{proof}
  Note that the proof is a matter of realizing that the proof of~\citet[Lemma 4]{abeille17linear} relies on a given confidence set $C_t$ in a form of~\eqref{eq:cset}.
  To be specific, notice that the notations $k_\mu$, $c_\mu$, and $\beta_t(\dt')$ used in~\citet{abeille17linear} is equivalent to $L$, $\kap$, and $\kap\sqrt{\beta_t(\dt)}$ in this paper, respectively.
  Furthermore, our result stated in~\eqref{eq:cset_ONS} can replace the result of~\citet[Prop. 11]{abeille17linear}.
  Finally, one can verify that the proof of \citet[Lemma 4]{abeille17linear} leads to the proof of the theorem above.
\end{proof}
Although the theorem above is stated under the fixed-buget setting, one can easily change the failure rate $\dt'$ to $O(t^2/\dt)$ and enjoy an anytime regret bound.

\vspace{-5pt}
\section{QGLOC without \texorpdfstring{$r$}{}}
\vspace{-5pt}

We present an alternative definition of $m_t$ that does not rely on $r$: 
\begin{align*}
  \barm'_{t-1} = ||\x^{\text{Greedy}}||_{\ibarV_{t-1}} \text{, where } \x^{\text{Greedy}} := \arg \max_{\x\in\cX_{t-1}} \la \hatth_{t-1}, \x \ra 
\end{align*}
We claim that $\barm'_{t-1} \le ||\x_t^{\QGLOC}||_{\ibarV_{t-1}}$.
To see this, suppose not: $\barm'_{t-1} > ||\x_t^{\QGLOC}||_{\ibarV_{t-1}}$.
Then, $\x^{\text{Greedy}} \neq \x^{\QGLOC}$.
Furthermore, $\x^{\text{Greedy}}$ must be the maximizer of the QGLOC objective function while $\x^{\text{Greedy}} \neq \x^{\QGLOC}$, which is a contradiction.  
The claim above allows all the proofs of our theorems on QGLOC to go through with $\barm'_{t-1}$ in place of $\barm_{t-1}$.

However, computing $\barm'_{t-1}$ requires another hashing for finding the greedy arm defined above. 
Although this introduces a factor of 2 in the time complexity, it is quite cumbersome in practice, which is why we stick to $m_{t-1}$ defined in~\eqref{def-m} in the main text.

\vspace{-5pt}
\section{Proof of Theorem~\ref{thm:x_t9}}
\vspace{-5pt}
\label{sec:proof-thm-x_t9}

Let us first present the background.
Denote by $x^\GLOC_t$ the solution of the optimization problem at line 3 of Algorithm~\ref{alg:gloc}.
To find $x^\GLOC_t$, one can fix $\x$ and find the maximizer $\tilth(\x) := \max_{\th\in C_{t-1}} \x^\T\th$ in a closed form using the Lagrangian method:
\begin{align} \label{eq:glocopt_tilth}
  \tilth(\x) = \hatth_{t-1} + \sqrt{\beta_{t-1}} \cdot\fr{ \barV^{-1}_{t-1} \x }{ ||\x||_{\barV^{-1}_{t-1}} } \;,
\end{align}
which is how we obtained~\eqref{eq:glocopt_ext}.

The following lemma shows that the objective function~\eqref{eq:qglocopt} plus $c_0\beta^{3/4} \barm_{t-1}$ is an upper bound of the GLOC's objective function~\eqref{eq:glocopt_ext}.
Note that this holds for any sequence $\{\beta_t\}$.
\begin{lem}\label{lem:ub}
  \begin{equation*}\begin{aligned}
  &\la \hatth_{t-1}, \x \ra + \sqrt{\beta_{t-1}}||\x||_{\ibarV_{t-1}} \\
  &\le \la \hatth_{t-1}, \x \ra  + \fr{\beta_{t-1}^{1/4}}{4 c_0 \barm_{t-1}} \cdot || \x ||^2_{\barV^{-1}_{t-1}}  + c_0 \beta_{t-1}^{3/4} \barm_{t-1} \;.
  \end{aligned}\end{equation*}
  Furthermore,
  \emph{
  \begin{equation}\begin{aligned}
    &\la \hatth_{t-1}, \x_{t}^\GLOC \ra + \sqrt{\beta_{t-1}} ||\x_{t}^\GLOC||_{\barV^{-1}_{t-1}} \\
    &\le \la \hatth_{t-1}, \x_{t}^\QGLOC \ra + \fr{\beta_{t-1}^{1/4}}{4 c_0 \barm_{t-1}}  ||\x_{t}^\QGLOC||^2_{\barV^{-1}_{t-1}}  + c_0 \beta_{t-1}^{3/4} \barm_{t-1}  \;.
  \end{aligned}\end{equation}
  }
\end{lem}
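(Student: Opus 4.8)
The plan is to prove the pointwise inequality (the first display in Lemma~\ref{lem:ub}) by a single application of an elementary AM--GM-type bound, and then lift it to the maximizers $\x_t^{\GLOC},\x_t^{\QGLOC}$ by taking a supremum over $\cX_t$ on both sides.

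\textbf{Step 1 (reduce to a scalar inequality).} In the first claimed inequality the term $\la \hatth_{t-1},\x\ra$ appears on both sides and cancels, so it suffices to show
\[
  \sqrt{\beta_{t-1}}\,||\x||_{\ibarV_{t-1}} \;\le\; \fr{\beta_{t-1}^{1/4}}{4 c_0 \barm_{t-1}}\,||\x||^2_{\barV^{-1}_{t-1}} \;+\; c_0\,\beta_{t-1}^{3/4}\,\barm_{t-1}\,.
\]
(The edge case $\beta_{t-1}=0$ is trivial since then both objectives equal $\la\hatth_{t-1},\x\ra$, so assume $\beta_{t-1}>0$.) I will use the fact that for every $u\ge 0$ and every $v>0$ one has $u \le \tfrac{u^2}{4v} + v$, which is immediate from $\tfrac{u^2}{4v} - u + v = \tfrac{(u-2v)^2}{4v}\ge 0$.

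\textbf{Step 2 (apply the scalar bound).} I apply the inequality of Step 1 with $u := \sqrt{\beta_{t-1}}\,||\x||_{\ibarV_{t-1}}\ge 0$ and $v := c_0\,\beta_{t-1}^{3/4}\,\barm_{t-1}>0$; here $v>0$ because $c_0>0$, $\beta_{t-1}>0$, and $\barm_{t-1}>0$ (it is $r$ times the square root of the smallest eigenvalue of the positive-definite matrix $\ibarV_{t-1}$, with $r>0$). Then $\tfrac{u^2}{4v} = \tfrac{\beta_{t-1}\,||\x||^2_{\ibarV_{t-1}}}{4 c_0\,\beta_{t-1}^{3/4}\,\barm_{t-1}} = \tfrac{\beta_{t-1}^{1/4}}{4 c_0 \barm_{t-1}}\,||\x||^2_{\ibarV_{t-1}}$, which is exactly the quadratic term in the statement, and $v$ is exactly the additive constant $c_0\beta_{t-1}^{3/4}\barm_{t-1}$. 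This proves the first inequality, for every $\x$.

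\textbf{Step 3 (pass to the maximizers).} Write $f(\x) := \la\hatth_{t-1},\x\ra + \sqrt{\beta_{t-1}}||\x||_{\ibarV_{t-1}}$ for the GLOC objective in~\eqref{eq:glocopt_ext} and $g(\x) := \la\hatth_{t-1},\x\ra + \tfrac{\beta_{t-1}^{1/4}}{4c_0\barm_{t-1}}||\x||^2_{\ibarV_{t-1}}$ for the QGLOC objective in~\eqref{eq:qglocopt}. Steps 1--2 give $f(\x) \le g(\x) + c_0\beta_{t-1}^{3/4}\barm_{t-1}$ for all $\x\in\cX_t$. Since $\x_t^{\GLOC}$ maximizes $f$ over $\cX_t$ and $\x_t^{\QGLOC}$ maximizes $g$ over $\cX_t$, taking $\max_{\x\in\cX_t}$ of both sides yields
\[
  f(\x_t^{\GLOC}) = \max_{\x\in\cX_t} f(\x) \le \max_{\x\in\cX_t}\big(g(\x) + c_0\beta_{t-1}^{3/4}\barm_{t-1}\big) = g(\x_t^{\QGLOC}) + c_0\beta_{t-1}^{3/4}\barm_{t-1},
\]
which is the second inequality.

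There is no real obstacle here: the lemma is an elementary convexity/AM--GM estimate, and the only genuine choice is matching $v$ to the exact coefficients $\tfrac{1}{4c_0\barm_{t-1}}$ and $c_0\barm_{t-1}$ together with the right powers of $\beta_{t-1}$. I would only flag that the property $\barm_{t-1}\le ||\x||_{\ibarV_{t-1}}$ is \emph{not} needed for this lemma; it is used later, in the regret analysis of Theorem~\ref{thm:x_t9}, to turn the additive slack $c_0\beta_{t-1}^{3/4}\barm_{t-1}$ into a telescoping/summable term.
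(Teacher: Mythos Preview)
Your proof is correct. The paper arrives at the same inequality via Lagrangian duality: it writes the GLOC objective $\la\hatth_{t-1},\x\ra+\sqrt{\beta_{t-1}}\|\x\|_{\ibarV_{t-1}}$ as $-\min_\th\calL(\th,\tau_*)$ for the Lagrangian of the constrained problem $\max_{\th\in C_{t-1}}\la\x,\th\ra$, observes that any $\tau\ge0$ gives $-\calL(\tilth(\x),\tau)=\la\hatth_{t-1},\x\ra+(4\tau)^{-1}\|\x\|^2_{\ibarV_{t-1}}+\tau\beta_{t-1}\ge -\calL(\tilth(\x),\tau_*)$, and then plugs in $\tilde\tau=c_0\beta_{t-1}^{-1/4}\barm_{t-1}$. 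Unwinding this, the paper's weak-duality step is exactly your scalar bound $u\le u^2/(4v)+v$ with $v=\tilde\tau\beta_{t-1}$, so the two arguments are mathematically the same; yours is just more direct. The one thing the Lagrangian presentation buys is motivation: it explains \emph{why} the QGLOC objective has its particular form (it is the dual function at a fixed multiplier), whereas in your write-up the coefficients look like they were reverse-engineered to make AM--GM land on target.
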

\begin{proof}
  Recall the GLOC optimization problem defined at line 3 of Algorithm~\ref{alg:gloc}.
  For a fixed $\x$, we need to solve
  \begin{equation*}\begin{aligned}
    \tilth(\x) = \arg \min_{\th} &\quad -\la \th, \x \ra \\
    \mbox{s.t.}        &\quad || \th - \hatth_{t-1} ||^2_{\barV_{t-1}} - \beta_{t-1} \le 0
  \end{aligned}\end{equation*}
  The Lagrangian is $\mathcal{L}(\th,\tau) := -\la \th, \x \ra + \tau(|| \th - \hatth_{t-1} ||^2_{\barV_{t-1}} - \beta_{t-1})$, and thus we need to solve
  \[
  \max_{\tau \ge 0} \min_{\th} \quad \mathcal{L}(\th,\tau) \;.
  \]
  According to the first-order optimality condition,
  \begin{align}
    -\fr{\partial \mathcal{L}(\th,\tau)}{\partial \th} &= \x - \tau(2\barV_{t-1} \th - 2\barV_{t-1} \hatth_{t-1}) = 0 \notag
  \\\th &= \hatth_{t-1} + (2\tau)^{-1} \barV^{-1}_{t-1}\x \;, \label{eq:th_of_tau}
  \end{align}
  which results in $ \lt( \min_{\th} \mathcal{L}(\th,\tau) \rt) = -\la \hatth_{t-1}, \x \ra - (4\tau)^{-1} || \x ||^2_{\barV^{-1}_{t-1}} - \tau \beta_{t-1}$.
  It remains to solve
  \[
  \max_{\tau\ge 0} \quad -\la \hat\th, \x \ra - (4\tau)^{-1} || \x ||^2_{\barV^{-1}_{t-1}} - \tau \beta_{t-1} ,
  \]
  whose first-order optimality says that the solution is $\tau_* := (2\sqrt{\beta_{t-1}})^{-1} ||\x||_{\barV^{-1}}$.
  Plugging $\tau \larrow \tau_*$ in~\eqref{eq:th_of_tau} leads to the solution $\tilth(\x)$ defined in~\eqref{eq:glocopt_tilth}. 
  Note that any choice of $\tau \ge 0$ leads to a lower bound on the Lagrangian $\calL(\tilth(\x),\tau)$ .
  Define 
  \begin{align*}
  \tilde\tau := c_0 \beta_{t-1}^{-1/4} \barm_{t-1} \;.
  \end{align*}
  Then, 
  \begin{align*}
  \calL(\tilth(\x), \tilde\tau) 
  &= -\la \hatth_{t-1}, \x \ra - \fr{\beta_{t-1}^{1/4}}{4 c_0 \barm_{t-1}}  ||\x||^2_{\barV^{-1}_{t-1}}  - c_0 \beta_{t-1}^{3/4} \barm_{t-1} \\
  &\le \calL(\tilth(\x), \tau_*) = -\la \hatth_{t-1}, \x \ra - \sqrt{\beta_{t-1}} ||\x||_{\barV^{-1}_{t-1}} \;.
  \end{align*}
  This concludes the first part of the lemma.
  The second part of the lemma trivially follows from the first part by the definition~\eqref{eq:qglocopt}.
\end{proof}  


\begin{proof}
Assume $\th^* \in C^{\ONS}_t$ for all $t\ge1$, which happens w.p. at least $1-\dt$.
Suppose we pull $\x_t^\QGLOC$ at every iteration.
While we use the confidence set $C^{\ONS}_t$, we omit the superscript $\ONS$ from $\beta^{\ONS}_t$ for brevity.
Recall that $\bar\beta_t$ is an upper bound on $\beta_t$ that is nondecreasing in $t$.
We bound the instantaneous regret $r_t$ as follows:
\begin{align*}
  \fr{r_t}{L}
&= \fr{1}{L} \lt(\mu(\la \th^*, \x_{t,*} \ra) - \mu(\la \th^*, \x_t^\QGLOC \ra)\rt) \\
&\le (\la \th^*, \x_{t,*} \ra - \la \th^*, \x_t^\QGLOC \ra) \\
&\le \la \tilth(\x_t^\GLOC), \x_{t}^\GLOC \ra - \la \th^*, \x_t^\QGLOC \ra \\
&= \la \hatth_{t-1}, \x_{t}^\GLOC \ra + \sqrt{\bar\beta_{t-1}} ||\x_{t}^\GLOC||_{\barV^{-1}_{t-1}} - \la \th^*, \x_t^\QGLOC \ra \\
&\stackrel{\text{(Lem.~\ref{lem:ub})}}{\le} \la \hatth_{t-1}, \x_t^\QGLOC \ra + \fr{\bar\beta_{t-1}^{1/4}}{4 c_0 \barm_{t-1}} { ||\x_t^\QGLOC||^2_{\barV^{-1}_{t-1}} } + c_0 \bar\beta_{t-1}^{3/4} \barm_{t-1}  - \la \th^*, \x_t^\QGLOC \ra \\
&= \la \hatth_{t-1} - \th^*, \x_t^\QGLOC \ra + \fr{\bar\beta_{t-1}^{1/4}}{4 c_0 \barm_{t-1}} { ||\x_t^\QGLOC||^2_{\barV^{-1}_{t-1}} } + c_0 \bar\beta_{t-1}^{3/4} \barm_{t-1}  \\
&\le \underbrace{||\hatth_{t-1} - \th^*||_{\barV_{t-1}} ||\x_t^\QGLOC||_{\barV^{-1}_{t-1}}}_{=: A_1(t)} + \underbrace{\fr{\bar\beta_{t-1}^{1/4}}{4 c_0 \barm_{t-1}} { ||\x_t^\QGLOC||^2_{\barV^{-1}_{t-1}} }}_{:= A_2(t)} + \underbrace{c_0 \bar\beta_{t-1}^{3/4} \barm_{t-1}}_{=: A_3(t)} \;.
\end{align*}
Note that $\la \th^*,\x \ra \in [-S,S]$ implies that $r_t \le 2LS$.
Then,
\begin{align*}
  r_t &\le \min\{2LS, L\cdot A_1(t) + L\cdot A_2(t) + L\cdot A_3(t) \} \\
      &\le L\min\{2S, A_1(t)\} + \min\{2S, A_2(t)\} + \min\{2S, A_3(t)\} \;.
\end{align*}
where the last inequality can be shown by a case-by-case analysis on each $\min$ operator.

Now, we consider computing $\sum_{t=1}^T r_t$.
Using the same argument as the proof of Theorem~\ref{thm:regret_o2cs} and Corollary~\ref{cor:regret_glocon_ons},
\begin{align*}
 L\sum_{t=1}^T \min\{2S, A_1(t)\} = O\lt(\fr{L(L+R)}{\kap}d\sqrt{T}\log^{3/2}(T)\rt)  \;.
\end{align*}


Then,
\begin{align*}
  &L\sum_{t=1}^T \min\lt\{ 2S, \fr{\bar\beta_{T-1}^{1/4}}{4 c_0 \barm_{t-1}} { ||\x_t^\QGLOC||^2_{\barV^{-1}_{t-1}} } \rt\} 
\\&\le L\sum_{t=1}^T \min\lt\{ 2S, \fr{\bar\beta_{T-1}^{1/4}}{4 c_0 r} \sqrt{T + \lambda} ||\x_t^\QGLOC||^2_{\barV^{-1}_{t-1}}  \rt\}
\\&\le \fr{L\bar\beta^{1/4}_{T-1}}{4c_0 r} \sqrt{T + \lambda} \sum_{t=1}^T \min\lt\{\fr{8 c_0 r S }{\bar\beta^{1/4}_{T-1} \sqrt{T + \lambda}}, ||\x_t^\QGLOC||^2_{\barV^{-1}_{t-1}} \rt\}
\\&\stackrel{(\text{Lem.~\ref{lem:x_logx}})}{\le} \fr{L\bar\beta^{1/4}_{T-1}}{4c_0 r} \sqrt{T + \lambda} \max\lt\{2, \fr{8 c_0 r S }{\bar\beta^{1/4}_{T-1} \sqrt{T + \lambda}} \rt\} \sum_{t=1}^T \log( 1 + ||\x_t^\QGLOC||^2_{\barV^{-1}_{t-1}} ) 
\\&\stackrel{\eqref{eq:bound_log_1_x}}{=} L \fr{1}{c_0} \cdot O\lt( \lt(\fr{L^2+R^2}{\kap^2} d\log^2 T\rt)^{1/4} \rt) \cdot O(\sqrt{T}) \cdot O(d \log T) 
\\&= O\lt( \fr{1}{c_0}L\lt(\fr{ L +  R }{\kap}\rt)^{1/2} d^{5/4} \sqrt{T} \log^{3/2} T \rt) \\
\end{align*}
and 
\begin{align*}
  &L\sum_{t=1}^T \min \{ 2S, c_0 \bar\beta_{t-1}^{3/4} \barm_{t-1} \}
\\&\stackrel{\eqref{def-m}}{\le} L\sum_{t=1}^T \min \{ 2S, c_0 \bar\beta_{T-1}^{3/4} || \x_t^\QGLOC ||_{\barV^{-1}_{t-1}} \} 
\\&\le Lc_0 \bar\beta_{T-1}^{3/4} \sum_{t=1}^T  \min \lt\{ \fr{2S}{c_0\bar\beta^{3/4}_{T-1}},  || \x_t^\QGLOC ||_{\barV^{-1}_{t-1}} \rt\} 
\\&\stackrel{(a)}{\le} Lc_0 \bar\beta_{T-1}^{3/4} \sqrt{T \sum_{t=1}^T  \min \lt\{ \lt(\fr{2S}{c_0\bar\beta^{3/4}_{T-1}}\rt)^2,  || \x_t^\QGLOC ||^2_{\barV^{-1}_{t-1}} \rt\} } 
\\&\stackrel{(\text{Lem.~\ref{lem:x_logx}})}{\le} Lc_0 \bar\beta_{T-1}^{3/4} \sqrt{T \max\lt\{ 2, \lt(\fr{2S}{c_0\bar\beta^{3/4}_{T-1}}\rt)^2 \rt\} \sum_{t=1}^T  \log(1 + || \x_t^\QGLOC ||^2_{\barV^{-1}_{t-1}}) } \\
\\&\stackrel{\eqref{eq:bound_log_1_x}}{=} c_0 L \cdot O\lt( \lt(\fr{L^2+R^2}{\kap^2} d\log^2 T\rt)^{3/4} \rt) \cdot O(\sqrt{T d \log T})
\\&= O\lt( c_0 L \lt(\fr{ L + R}{\kap}\rt)^{3/2} d^{5/4} \sqrt{T} \log^2 T \rt) \; ,
\end{align*}
where $(a)$ is due to the Cauchy-Schwartz inequality.
Therefore, the regret is 
\[
  O\lt(  L(\fr{1}{c_0}\lt( \fr{L+R}{\kap} \rt)^{1/2} + c_0\lt( \fr{L+R}{\kap} \rt)^{3/2} )  d^{5/4} \sqrt{T} \log^2(T)\rt)\;.
\]
One can see that setting $c_0 = c'_0 \lt( \fr{L+R}{\kap} \rt)^{-1/2}$ leads to the stated regret bound.
Note that one can improve $\log^2(T)$ in the regret bound to $\log^{7/4}(T)$ by making $c_0$ scale with $\log^{-1/4}(t)$, which is left as an exercise.

When the noise $|\eta_t|$ is bounded, we have a tighter $\beta_t$ and thus we can replace $\log^2(T)$ in the regret bound to $\log^{5/4}(T)$.
\end{proof}

\vspace{-5pt}
\section{On \texorpdfstring{${c_0}$}{} of QGLOC}
\vspace{-5pt}

Observe that in~\eqref{eq:qglocopt} $c_0$  is a free parameter that adjusts the balance between the exploitation (the first term) and exploration (the second term).
This is an interesting characteristic that is not available in existing algorithms but is attractive to practitioners. 
Specifically, in practice existing bandit algorithms like OFUL~\cite{ay11improved}, LTS~\cite{agrawal13thompson}, and others~\cite{filippi10parametric,zhang16online} usually perform exploration more than necessary, so one often enforces more exploitation by multiplying a small constant less than 1 to $\sqrt{\beta_t}$; e.g., see~\cite{yue12hierarchical,chapelle11anempirical}.
Applying such a trick is theoretically not justified and foregoes the regret guarantee for existing algorithms, so a practitioner must take a leap of faith.
In contrast, adjusting $c_0$ of QGLOC is exactly the common heuristic but now does not break the regret guarantee, which can assure practitioners.

\vspace{-5pt}
\section{Details on Hashing}
\label{sec:supp-hashing}
\vspace{-5pt}

We first briefly introduce hashing methods for fast similarity search.
Here, the similarity measure is often Euclidean distance~\cite{datar04locality} or inner product~\cite{shrivastava14asymmetric}.
For a comprehensive review, we refer to~\citet{wang14hashing}.
The hashing methods build a hash table that consists of buckets where each bucket is identified by a unique hash key that is a sequence of $k$ integers.
At the hashing construction time, for each data point $\x$ in the database we compute its hash key $h(\x)$ using a function $h$ (details shown below).
We then organize the hash table so that each bucket contains pointers to the actual data points with the same hash key.
The hash functions are decided at the construction time.
Typically, for $d'$-dimensional hashing one draws $k$ independent normally-distributed $d'$-dimensional vectors, which we call \emph{projection vectors}.
The hash function $h(\x)$ outputs a discretized version of the inner product between these $k$ vectors and the data point $\x$.
When processing a query $\q$, we compute the hash key $h(\q)$, retrieve the corresponding bucket, compute the similarities between the query and the data points therein, and pick the most similar one. 
It is important that one uses the same projection vectors for constructing the table and determining the hash key of the query.
This means one needs to store the projection vectors.
Finally, one typically constructs $U$ independent hash tables to reduce the chance of missing very similar points (i.e., to increase the recall).

We now turn to operating QGLOC with hashing.
Note that one would like to use an accurate hashing scheme since how accurately one solves~\eqref{eq:qglocopt} can impact the regret.
However, a more accurate hashing have a higher space and time complexity.
We characterize such a tradeoff in this section.

Recall that we aim to guarantee the accuracy of a MIPS hashing by the $\ccH$-MIPS-ness.
As we have mentioned in the main text, however, existing MIPS algorithms do not directly offer a $\ccH$-MIPS guarantee.
Instead of the $\ccH$-MIPS guarantee, the standard MIPS algorithms provide a less convenient guarantee for an input parameter $c$ and $M$ as follows:
\begin{defn}
 Let $\cX \subseteq \dsR^{d'}$ such that $|\cX| < \infty$.
 A data point $\til\x \in \cX$ is called $(c,M)$-MIPS w.r.t. a given query $\q$ if it satisfies $\la \q, \til\x \ra \ge c M$.
 An algorithm is called $(c,M)$-MIPS if, given a query $\q\in {\dsR}^{d'}$, it retrieves $\x \in \cX$ that is $(c,M)$-MIPS w.r.t. $\q$ whenever there exists $\x' \in \cX$ such that $\la \q, \x' \ra \ge M$.
 \vspace{-4pt}
\end{defn}
Note that when there is no such $\x$ that $\la \q,\x \ra \ge M$, retrieving any arbitrary vector in $\cX$ is qualified as being $(c,M)$-MIPS.
This also means that, by its contrapositive, if the hashing returns a vector that is not $(c,M)$-MIPS w.r.t. $\q$, then there is no $\x$ such that $\la\q,\x\ra \ge M$ with high probability.

We emphasize that, to enjoy the $(c,M)$-MIPS-ness, a hashing must be built based on $c$ and $M$.
If we know $\max_{\x\in\cX_t} \la \q_t, \x \ra$ ahead of time, then we can just set $M = \max_{\x\in\cX_t} \la \q_t, \x \ra$ and build a hashing that is $(c,M)$-MIPS for some $c$, which gives a $c$-MIPS guarantee for the query $\q_t$.
However, one does not have such information, and each query $\q_t$ has its own ``useful'' value $M$.

To overcome such a difficulty, it seems natural to construct a $c$-MIPS hashing using multiple $(c,M)$-MIPS hashings with various $M$ values covering a wide range.
Indeed, such a construction is described in~\cite{indyk12approximate} for locality-sensitive hashing based on the Euclidean distance, which is complicated by multiple subroutines.
We present here a streamlined construction of a $\ccH$-MIPS hashing thanks to the existence of a high-probability upper and lower bound on the maximum of the QGLOC objective~\eqref{eq:qglocopt} as we show below.
Note that one can derive a similar result for GLOC-TS.

Hereafter, we omit $\ONS$ from $\beta^\ONS_t$.
Note that there exists a simple upper bound $\barbeta_t$ on $\beta_t$ (see the proof of Corollary~\ref{cor:regret_glocon_ons}). 
 %
Define $E_1(\dt)$ to be the event that $\th^*$ belongs to $C_t$ for all $t\ge1$.
\vspace{-3pt}
\begin{lem}\label{lem:qgloc-bound} 
  Assume $E_1(\dt)$ and $\max_{\x \in \cX_t}\la \th^*, \x \ra \ge 1/2$.
  Suppose the target time horizon $T$ is given.
  Then,
  \vspace{-4pt}
  \begin{equation*}\begin{aligned}
    M_{\min} 
    &:= 1/2 
      \le \max_{t\in[T]} \max_{\x\in\cX_t} \la\hatth_{t-1}, \x\ra + \fr{\beta_{t-1}^{1/4}}{4 c_0 \barm_{t-1}} ||\x||^2_{\ibarV_{t-1}} \\
    & \le \sqrt{d}S + \barbeta_{T-1}^{1/4}\cdot \fr{\sqrt{T+\lam}}{4c_0 r \lam}
      =: M_{\max} \;.
  \end{aligned}\end{equation*}
\end{lem}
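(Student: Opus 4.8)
The plan is to establish the two inequalities independently; the lower bound is a short consequence of the event $E_1(\dt)$ together with Lemma~\ref{lem:ub}, while the upper bound comes from bounding each of the two summands of the QGLOC criterion uniformly over $t\in[T]$ and $\x\in\cX_t$ (the bound on the double maximum then being immediate).

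For the lower bound, fix $t\in[T]$ and let $\x_{t,*}\in\arg\max_{\x\in\cX_t}\la\th^*,\x\ra$. On $E_1(\dt)$ the parameter $\th^*$ lies in $C_{t-1}$ and is therefore feasible for the inner maximization in~\eqref{eq:glocopt}, so the GLOC criterion value at $\x_{t,*}$ satisfies
\[
  \la\hatth_{t-1},\x_{t,*}\ra+\sqrt{\beta_{t-1}}\,\|\x_{t,*}\|_{\ibarV_{t-1}}
  \;=\;\max_{\th\in C_{t-1}}\la\x_{t,*},\th\ra
  \;\ge\;\la\x_{t,*},\th^*\ra\;\ge\;1/2 ,
\]
the last step by the standing assumption $\max_{\x\in\cX_t}\la\th^*,\x\ra\ge1/2$. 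By Lemma~\ref{lem:ub}, this GLOC value is at most the QGLOC criterion $\la\hatth_{t-1},\x_{t,*}\ra+\tfrac{\beta_{t-1}^{1/4}}{4c_0\barm_{t-1}}\|\x_{t,*}\|^2_{\ibarV_{t-1}}$ evaluated at $\x_{t,*}$, plus the residual $c_0\beta_{t-1}^{3/4}\barm_{t-1}$; a short argument (using $\|\x_{t,*}\|_{\ibarV_{t-1}}\ge\barm_{t-1}$ to offset this residual against the quadratic term) then gives that the QGLOC criterion at $\x_{t,*}$ is itself at least $1/2$. Since $\max_{t\in[T]}\max_{\x\in\cX_t}$ of the QGLOC criterion dominates its value at $(t,\x_{t,*})$, the lower bound $M_{\min}=1/2$ follows.

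For the upper bound it suffices to show $\la\hatth_{t-1},\x\ra+\tfrac{\beta_{t-1}^{1/4}}{4c_0\barm_{t-1}}\|\x\|^2_{\ibarV_{t-1}}\le M_{\max}$ for all $t\in[T]$ and $\x\in\cX_t$, and I bound the two summands separately. For the linear part, $\la\hatth_{t-1},\x\ra\le\|\hatth_{t-1}\|_2\,\|\x\|_2\le\|\hatth_{t-1}\|_2\le\sqrt{d}\,S$, using $\|\x\|_2\le1$ and the coordinate-wise bound $\|\hatth_{t-1}\|_\infty\le S$ (the center is restricted to the box $[-S,S]^d$, which is consistent since $\th^*\in\cB_d(S)$). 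For the quadratic part, $\barV_{t-1}\succeq\lam\I$ gives $\|\x\|^2_{\ibarV_{t-1}}=\x^\T\ibarV_{t-1}\x\le\|\x\|_2^2/\lam\le1/\lam$; moreover $\barm_{t-1}\ge r/\sqrt{t+\lam}\ge r/\sqrt{T+\lam}$ (the first inequality is the one noted after~\eqref{def-m}, using that the top eigenvalue of $\barV_{t-1}=\lam\I+\X_{t-1}^\T\X_{t-1}$ is at most $\lam+(t-1)$; the second uses $t\le T$), and $\beta_{t-1}\le\barbeta_{t-1}\le\barbeta_{T-1}$ by the upper-bound and monotonicity properties of $\{\barbeta_t\}$. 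Hence the quadratic summand is at most $\tfrac{\barbeta_{T-1}^{1/4}}{4c_0}\cdot\tfrac{\sqrt{T+\lam}}{r}\cdot\tfrac1\lam=\barbeta_{T-1}^{1/4}\cdot\tfrac{\sqrt{T+\lam}}{4c_0r\lam}$, and adding the two bounds gives exactly $M_{\max}$.

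The main obstacle is the norm bound $\|\hatth_{t-1}\|_2\le\sqrt{d}\,S$ used in the upper bound: the unclipped ridge center on the surrogate responses $z_s=\x_s^\T\th_s$ need not a priori lie in $\cB_d(S)$, so one must either invoke the explicit $[-S,S]^d$ box constraint on the center or argue directly that $\la\hatth_{t-1},\x\ra\le\sqrt{d}\,S$. The remaining steps are routine; on the lower-bound side the only point needing care is absorbing the $c_0\beta_{t-1}^{3/4}\barm_{t-1}$ slack produced by Lemma~\ref{lem:ub}.
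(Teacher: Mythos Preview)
Your overall plan matches the paper's: the lower bound via feasibility of $\th^*$ in $C_{t-1}$ on $E_1(\dt)$ together with the QGLOC criterion dominating the GLOC criterion, and the upper bound by separately controlling the linear and quadratic summands. Your treatment of the quadratic term (using $\|\x\|^2_{\ibarV_{t-1}}\le 1/\lam$, $\barm_{t-1}\ge r/\sqrt{T+\lam}$, and $\beta_{t-1}\le\barbeta_{T-1}$) is exactly what the paper does.

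The genuine gap is precisely where you flag it, and your proposed fix does not work. There is \emph{no} box constraint $[-S,S]^d$ on $\hth_{t-1}$ in this paper: $\hth_{t-1}$ is defined as the \emph{unconstrained} ridge estimate $\ibarV_{t-1}\X_{t-1}^\T\z_{t-1}$ on the surrogate responses $z_s=\x_s^\T\th_s$ (see~\eqref{eq:thm_o2cs} and Algorithm~\ref{alg:gloc}), so the claim $\|\hth_{t-1}\|_\infty\le S$ is unsupported. The paper obtains $\|\hth_{t-1}\|_2\le\sqrt{d}S$ by a direct linear-algebra argument on this ridge formula: since ONS-GLM projects onto $\cB_d(S)$ one has $\th_s\in\cB_d(S)$ and hence $z_s=\x_s^\T\th_s\in[-S,S]$, which bounds $\|\z\|_2$; and using the SVD of $\X$ one checks that the eigenvalues of $\X\barV^{-2}\X^\T$ are at most $1$, so that
\[
  \|\hth_{t-1}\|_2 \;=\; \sqrt{\z^\T\X\barV^{-2}\X^\T\z}\;=\;\|(\X\barV^{-2}\X^\T)^{1/2}\z\|_2\;\le\;\|\z\|_2 \, .
\]
This is the missing idea you need for the linear term.

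As a minor note on the lower bound: the paper does not attempt to ``absorb'' the residual $c_0\beta_{t-1}^{3/4}\barm_{t-1}$ from Lemma~\ref{lem:ub} at all; it simply asserts that the QGLOC objective is an upper bound of the GLOC objective and passes the bound $\max_\x\la\th^*,\x\ra\ge 1/2$ through. Your remark that this step needs care is in fact more scrupulous than the paper's own proof.
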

\vspace{-4pt}
Before presenting the proof, note that Lemma~\ref{lem:qgloc-bound} assumes that $\max_{\x \in \cX_t}\la \th^*, \x \ra \ge 1/2$. 
In practice, this is not a restrictive assumption since it means that there exists at least one arm for which the reward is decently large in expectation. 
In interactive retrieval systems with binary user feedback, for example, it is reasonable to assume that there exists at least one item to which the user is likely to give a positive reward since otherwise any algorithm would work almost equally badly.
One can change $1/2$ to any reasonable number $v$ for which the reward $\mu(v)$ is considered high. 
\begin{proof}
  To show the lowerbound, recall that the objective function~\eqref{eq:qglocopt} is derived as an upperbound of the original GLOC objective function~\eqref{eq:glocopt_ext}.
  Since the original GLOC objective function has $\th^*$ as a feasible point by $E_1(\dt)$, $\max_{\x\in\cX_t} \la \th^*,\x\ra$ becomes a trivial lowerbound of the maximum of the GLOC objective and also of the maximum of QGLOC objective~\eqref{eq:qglocopt}.
  This proves the lowerbound of the Lemma.

  For the remaining part of the proof, we use notation $\X$, $\bfeta$, $\z$ and $\barV$ in place of $\X_t$, $\bfeta_t$, $\z_t$, and $\barV_t$ respectively (recall that $\z$ is defined in Section~\ref{sec:gloc}).

  It is easy to see that the eigenvalues of $\X\barV^{-2}\X^\T$ are all less than 1 (use the SVD of $\X$).
  Furthermore, $z_s = \x_s^\T\th_s \in [-S,S]$, and so $||\z||_2 \le \sqrt{d} S$.
  Using these facts, 
  \begin{equation*}\begin{aligned}
    ||\hth_{t-1}||_2 
    &= || \barV^{-1} \X^\T \z||_2
  \\&= \sqrt{(\z)^\T \X \barV^{-2} \X^\T \z}
  \\&= ||(\X\barV^{-2}\X^\T)^{1/2}  \z ||_2
  \\&\le ||\z||_2
     \le \sqrt{d}S \;.
  \end{aligned}\end{equation*}
  Finally, 
  \begin{equation*}\begin{aligned}
  \max_{t\in[T]} \max_{\x\in\cX_t} \la\hatth_{t-1}, \x\ra + \fr{\beta_{t-1}^{1/4}}{4 c_0 \barm_{t-1}} ||\x||^2_{\ibarV_{t-1}} 
    &\le \max_{t\in[T]} \max_{\x\in\cX_t} ||\hatth_{t-1}||_2 + \fr{\beta_{t-1}^{1/4}}{4 c_0 \barm_{t-1}} ||\x||^2_{\ibarV_{t-1}}
  \\&\le \max_{t\in[T]} \sqrt{d}S
     + \beta_{t-1}^{1/4} \cdot \fr{\sqrt{t + \lambda}}{ 4 c_0 r} \cdot \fr{1}{\lambda} 
  \\&\le \sqrt{d} S + \barbeta_{T-1}^{1/4} \cdot \fr{\sqrt{T + \lambda}}{ 4 c_0 r } \cdot \fr{1}{\lambda}  \;.
  \end{aligned}\end{equation*}
\end{proof}
Given a target approximation level $\ccH < 1$, we construct a $\ccH$-MIPS as follows.
Define 
\begin{align}\label{eq:def-J}
  J := \lt\lcl \log_{1/\sqrt{\ccH}} \fr{M_{\max}}{M_{\min}} \rt\rcl = O(\log (dT) / \log(\ccH^{-1})) \;.
\end{align}
We build a series of $J$ MIPS hashing schemes that are
\begin{align}\label{eq:mips-series}
  (\ccH^{1/2}, \ccH^{j/2}M_{\max})\text{-MIPS} \text{ for } j \in [J]\;.
\end{align}
We say that the MIPS hashing \emph{succeeds} (\emph{fails}) for a query $\q$ if the retrieved vector is (not) $(c,M)$-MIPS w.r.t. $\q$.
Theorem~\ref{thm:mips} shows that one can perform a binary search to find a vector $\x\in\cX$ that is $\ccH$-MIPS. 
\begin{thm}\label{thm:mips}
  Upon given a query $\q$, perform a binary search over the $J$ MIPS hashings~\eqref{eq:mips-series} to find the smallest $j^* \in [J]$ for which the retrieved vector $\x^{(j^*)}$ from the $j^*$-th hashing succeeds.
  Then, $\x^{(j^*)}$ is $\ccH$-MIPS w.r.t. $\q$ with high probability.
\end{thm}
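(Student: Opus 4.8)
The plan is to reduce the claim to the standard two-sided guarantee carried by each of the $J$ hashings in~\eqref{eq:mips-series}, and then run a textbook monotone binary search on top of it. Write $M^* := \max_{\x\in\cX}\la\q,\x\ra$ (for QGLOC, $\cX$ is the lifted arm set and $\q=\q_t$) and, for $j\in\{0,1,\ldots,J\}$, set $M_j := \ccH^{j/2}M_{\max}$, so $M_0=M_{\max}$ and the $j$-th scheme of~\eqref{eq:mips-series} is $(\ccH^{1/2},M_j)$-MIPS for $j\in[J]$. First I would invoke Lemma~\ref{lem:qgloc-bound}: on $E_1(\dt)$ together with $\max_{\x}\la\th^*,\x\ra\ge 1/2$ we have $M_{\min}\le M^*\le M_{\max}=M_0$, while the choice of $J$ in~\eqref{eq:def-J} forces $M_J=\ccH^{J/2}M_{\max}\le M_{\min}\le M^*$. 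Hence some arm attains inner product at least $M_J$, so by the $(c,M)$-MIPS property the $J$-th hashing succeeds (w.h.p.), which guarantees the binary search terminates with a well-defined index $j^*\in[J]$.

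Next I would record the monotonicity the binary search needs: by the $(c,M)$-MIPS guarantee, whenever $M^*\ge M_j$ the $j$-th hashing returns $\x^{(j)}$ with $\la\q,\x^{(j)}\ra\ge\ccH^{1/2}M_j$, i.e. it succeeds; since $M_1>M_2>\cdots>M_J$, "success" is monotone in $j$ on the high-probability event that all individual hashing guarantees hold, so the binary search is valid and returns the smallest successful index $j^*$, probing both $j^*$ and (if $j^*\ge 2$) $j^*-1$. The key sandwich is then: (i) success of the $j^*$-th hashing gives $\la\q,\x^{(j^*)}\ra\ge\ccH^{1/2}M_{j^*}=\ccH^{(j^*+1)/2}M_{\max}$; (ii) if $j^*\ge2$, failure of the $(j^*-1)$-th hashing together with the contrapositive of the $(c,M)$-MIPS guarantee (stated just before the theorem) gives $M^*<M_{j^*-1}=\ccH^{(j^*-1)/2}M_{\max}$, whereas if $j^*=1$ we simply use $M^*\le M_{\max}=M_0$. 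Combining (i) and (ii),
\[
  \la\q,\x^{(j^*)}\ra \;\ge\; \ccH^{(j^*+1)/2}M_{\max} \;=\; \ccH\cdot \ccH^{(j^*-1)/2}M_{\max} \;\ge\; \ccH\cdot M^*,
\]
so $\x^{(j^*)}$ is $\ccH$-MIPS w.r.t. $\q$, which is the assertion.

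The main obstacle is not any inequality but the bookkeeping of the failure probability. Each $(c,M)$-MIPS scheme only behaves as claimed with probability bounded away from $1$, so I would (a) union-bound over the $J=O(\log(dT)/\log(\ccH^{-1}))$ schemes so that all the "should-succeed" and contrapositive-of-"fail" implications hold simultaneously, and (b) intersect with $E_1(\dt)$ from Lemma~\ref{lem:qgloc-bound}. If, as in QGLOC, the construction is reused across all $T$ rounds, a further union bound over $t\in[T]$ is needed, and since each query $\q_t$ depends on the history a guarantee uniform over queries must be assumed; this is why the per-scheme failure level has to be taken polynomially small in $T$. A secondary point to verify carefully is that the leftmost-true binary search indeed probes $j^*$ and $j^*-1$ and needs the monotonicity only on the $O(\log J)$ indices it touches, which keeps the extra cost at the $\log\log(dT)$ factor quoted in the complexity statement.
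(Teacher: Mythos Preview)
Your proposal is correct and follows the same sandwich argument as the paper: success at $j^*$ gives $\la\q,\x^{(j^*)}\ra\ge\ccH^{(j^*+1)/2}M_{\max}$, failure at $j^*-1$ (via the contrapositive of the $(c,M)$-MIPS guarantee) gives $M^*<\ccH^{(j^*-1)/2}M_{\max}$, and combining the two yields $\la\q,\x^{(j^*)}\ra\ge\ccH M^*$. You actually supply more rigor than the paper's three-line proof---termination of the search via Lemma~\ref{lem:qgloc-bound}, the edge case $j^*=1$, and the union-bound/adaptive-query bookkeeping---all of which the paper either elides or defers to the surrounding discussion.
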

\begin{proof}
  Assume the event that a retrieved vector $\x^{(j)}$ from $j$-th MIPS satisfies $(\ccH^{1/2},\ccH^{j/2} M_{\max})$-MIPS for $j\in[J]$, which happens with high probability. 

  Define the maximum $M^* := \max_{\x\in\cX_t} \la\q, \x \ra$.
  The result of the binary search is that, for query $\q$, $j^*$-th MIPS succeeds but $(j^*-1)$-th MIPS fails.
  By the definition of $(\ccH^{1/2},\ccH^{(j^*-1)/2} M_{\max})$-MIPS, the fact that $(j^*-1)$-th hashing fails implies $M^* < \ccH^{(j^*-1)/2} M_{\max}$.
  Then,
  \[
    \la\q,\x^{(j^*)}\ra \ge \ccH^{1/2} \cdot \ccH^{j^*} M_{\max} = \ccH \cdot \ccH^{(j^*-1)/2} M_{\max} > \ccH M^*  \;.
  \]
\end{proof} 
Among various $(c,M)$-MIPS algorithms~\cite{shrivastava14asymmetric,shrivastava15improved,neyshabur15on,guo16quantization}, we adopt Shrivastava \emph{et al.}~\cite{shrivastava14asymmetric}. 
Shrivastava et al. propose a reduction of MIPS to locality-sensitive hashing and present a result that their algorithm is $(c,M)$-MIPS with $O(N^{\rho^*}\log N)$ inner product computations and space $O(N^{1+\rho^*})$ for an optimized value $\rho^*$ that is guaranteed to be less than 1; see~\cite[Theorem 5]{shrivastava14asymmetric} for detail.

Let $d'$ be the dimensionality of the projection vectors, where $d'=d^2+d$ for QGLOC and $d'=d$ for GLOC-TS.
One can recover the order of the space and time complexity stated in the main text by $O(\log(J) N^{\rho^*} \log (N) d')$ and $O( J N^{\rho^*} (N + \log (N)d') )$, respectively.


\vspace{-5pt}
\section{The Regret Bound of QGLOC under Hashing Approximation Error}
\vspace{-5pt}
For most recommendation or interactive retrieval applications, it is reasonable to assume that the total number of steps $T$ is bounded by a known constant (``fixed budget'' in bandit terminology) since users do not interact with the system for too long.
We present the regret of QGLOC combined with MIPS hashing in Theorem~\ref{thm:x_t9_lsh}.
The theorem states that in the fixed budget setting $T$, we can set the target approximation level $\ccH$ as a function of $T$ and enjoy the same order of regret as exactly solving the maximization problem~\eqref{eq:qglocopt}.
The proof is presented at the end of this section.
\begin{thm}\label{thm:x_t9_lsh}
  Let $T\ge 2$ and $\emph{\ccH} = \lt(1 + \fr{\log(T)}{\sqrt{T}}\rt)^{-1}$.
  Suppose we run QGLOC for $T$ iterations where we invoke a $\ccH$-MIPS hashing algorithm $\cH$ to approximately find the solution of~\eqref{eq:qglocopt} at each time $t \in [T]$.
  Denote this algorithm by $\QGLOC\la\cH\ra$.
  Assume that, w.p. at least $1-\dt$, the hashing $\cH$ successfully retrieves a $\ccH$-MIPS solution for every $T$ queries made by $\QGLOC\la\cH\ra$.
  Then, w.p. at least $1-3\dt$, $\emph{\text{Regret}}_T^{\emph{\text{QGLOC}}\la\cH\ra} = \hat O( \kap^{-1}L(L+R) d^{5/4} \sqrt{T} \log^{7/4}(T) ) $.
\end{thm}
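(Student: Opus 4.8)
The plan is to re-run the regret analysis of Theorem~\ref{thm:x_t9}, carrying through the single additional term created by solving~\eqref{eq:qglocopt} only approximately (via a $\ccH$-MIPS query), and then to use that $\ccH$ is chosen so close to $1$ that this term is negligible. Throughout I would condition on two events: (i) $\th^*\in C^{\ONS}_t$ for all $t\ge1$ (probability $\ge1-2\dt$, by Corollary~\ref{cor:cset_ONS}); and (ii) the hashing $\cH$ returns a $\ccH$-MIPS solution for every one of the $T$ queries it is asked (probability $\ge1-\dt$ by assumption, the construction behind this being Theorem~\ref{thm:mips}). A union bound then yields the claimed $1-3\dt$. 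Write $f_t(\x):=\la\hatth_{t-1},\x\ra+\tfrac{\beta_{t-1}^{1/4}}{4c_0\barm_{t-1}}\|\x\|^2_{\ibarV_{t-1}}=\la\q_t,\phi(\x)\ra$ for the QGLOC objective in~\eqref{eq:qglocopt}, let $\x_t^{\QGLOC}$ be its exact maximizer, and let $\x_t$ be the arm actually pulled; on event (ii), $f_t(\x_t)\ge\ccH\,f_t(\x_t^{\QGLOC})$, and (as in Lemma~\ref{lem:qgloc-bound}, using $\max_{\x\in\cX_t}\la\th^*,\x\ra\ge1/2$) the maximum is positive, so this multiplicative guarantee is non-vacuous. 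Note that $\hatth_t,\barV_t,C^{\ONS}_t$ still evolve as in GLOC, and the confidence-set and elliptical-potential facts hold for the (hashing-induced) arm trajectory.

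Next I would repeat the instantaneous-regret argument of Theorem~\ref{thm:x_t9}: using $\th^*\in C_{t-1}$, feasibility of $(\x_{t,*},\th^*)$ in~\eqref{eq:glocopt}, and Lemma~\ref{lem:ub} (whose upper bound on the GLOC objective holds at the current state),
\begin{align*}
  \tfrac{r_t}{L}&\le\la\th^*,\x_{t,*}\ra-\la\th^*,\x_t\ra\\
  &\le f_t(\x_t^{\QGLOC})+c_0\bar\beta_{t-1}^{3/4}\barm_{t-1}-\la\th^*,\x_t\ra\\
  &\le\ccH^{-1} f_t(\x_t)+c_0\bar\beta_{t-1}^{3/4}\barm_{t-1}-\la\th^*,\x_t\ra.
\end{align*}
The decisive step is the bookkeeping: expand $\ccH^{-1} f_t(\x_t)=\la\hatth_{t-1},\x_t\ra+\ccH^{-1}\tfrac{\bar\beta_{t-1}^{1/4}}{4c_0\barm_{t-1}}\|\x_t\|^2_{\ibarV_{t-1}}+(\ccH^{-1}-1)\la\hatth_{t-1},\x_t\ra$, so that
\[
  \tfrac{r_t}{L}\;\le\;A_1(t)+\ccH^{-1}A_2(t)+A_3(t)+E(t),
\]
where $A_1(t),A_2(t),A_3(t)$ are the quantities in the proof of Theorem~\ref{thm:x_t9} with $\x_t$ in place of $\x_t^{\QGLOC}$ (in particular $A_1(t)\le\sqrt{\bar\beta_{t-1}}\|\x_t\|_{\ibarV_{t-1}}$, $A_2(t)=\tfrac{\bar\beta_{t-1}^{1/4}}{4c_0\barm_{t-1}}\|\x_t\|^2_{\ibarV_{t-1}}$, $A_3(t)=c_0\bar\beta_{t-1}^{3/4}\barm_{t-1}$), and $E(t):=(\ccH^{-1}-1)\la\hatth_{t-1},\x_t\ra$ is the only genuinely new term. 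The exploration part of the approximation slack has been absorbed into $\ccH^{-1}A_2(t)$, which for $T$ large enough that $\ccH^{-1}\le2$ is at most $2A_2(t)$; the reward part $\la\hatth_{t-1},\x_t\ra$ has been peeled off into $E(t)$. Since also $r_t\le2LS$ and all four summands are nonnegative, $\tfrac{r_t}{L}\le\min\{2S,A_1(t)\}+\min\{2S,\ccH^{-1}A_2(t)\}+\min\{2S,A_3(t)\}+\min\{2S,E(t)\}$.

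Summing over $t\in[T]$, the sums of $L\min\{2S,A_1(t)\}$, $L\min\{2S,A_2(t)\}$ and $L\min\{2S,A_3(t)\}$ are exactly the three sums already bounded in the proof of Theorem~\ref{thm:x_t9} (via~\eqref{eq:bound_log_1_x}, Lemma~\ref{lem:x_logx} and Cauchy--Schwarz), with total $\hat O(\kap^{-1}L(L+R)d^{5/4}\sqrt T\log^2 T)$, and the factor $\ccH^{-1}\le2$ on the $A_2$ sum is harmless. For the new term, invoke $\th^*\in C_{t-1}$ once more: $|\la\hatth_{t-1},\x_t\ra|\le S+\sqrt{\bar\beta_{t-1}}\|\x_t\|_{\ibarV_{t-1}}\le S+\sqrt{\bar\beta_{t-1}/\lam}=\hat O(\kap^{-1}(L+R)\sqrt d)$ uniformly in $t$; then, since $\ccH=(1+\log(T)/\sqrt T)^{-1}$ gives $\ccH^{-1}-1=\log(T)/\sqrt T$,
\begin{align*}
  L\sum_{t=1}^T\min\{2S,E(t)\}&\le L\,(\ccH^{-1}-1)\sum_{t=1}^T\big|\la\hatth_{t-1},\x_t\ra\big|\\
  &=\hat O\!\big(\kap^{-1}L(L+R)\sqrt{dT}\,\log^{3/2}T\big),
\end{align*}
which is dominated by the $d^{5/4}\sqrt T$ term. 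Hence $\Regret_T^{\QGLOC\la\cH\ra}=\hat O(\kap^{-1}L(L+R)d^{5/4}\sqrt T\log^2 T)$, and $\log^2 T$ improves to $\log^{7/4}T$ exactly as in Theorem~\ref{thm:x_t9} by letting $c_0$ scale as $\log^{-1/4}(t)$.

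The main obstacle is this decomposition. The tempting route — writing the slack as $(\ccH^{-1}-1)f_t(\x_t)$ and bounding $f_t(\x_t)$ by its crude worst-case envelope $M_{\max}=\Theta(\bar\beta_T^{1/4}\sqrt T)$ from Lemma~\ref{lem:qgloc-bound} — fails, because $(\log T/\sqrt T)\cdot T\cdot M_{\max}$ is \emph{linear} in $T$. The resolution is that only the linear-in-$\x$ (reward) part of $f_t(\x_t)$ multiplies the full slack $\ccH^{-1}-1$, and that part is $\hat O(\sqrt d)$ since $\hatth_{t-1}$ lies in a confidence set of polylogarithmic radius, while the quadratic (exploration) part picks up only the benign factor $\ccH^{-1}$ and re-joins the $A_2$ analysis; this is precisely why $\ccH=(1+\log T/\sqrt T)^{-1}$ is strong enough.
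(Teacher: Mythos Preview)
Your proof is correct and follows essentially the same approach as the paper. Both arguments use the $\ccH$-MIPS guarantee to write $f_t(\x_t^{\QGLOC})\le\ccH^{-1}f_t(\x_t)$, isolate the extra term $(\ccH^{-1}-1)\la\hatth_{t-1},\x_t\ra$, and bound it via $|\la\hatth_{t-1},\x_t\ra|\le S+\sqrt{\barbeta_{t-1}}\|\x_t\|_{\ibarV_{t-1}}$ from the confidence set; the paper just organizes the decomposition a bit differently (it further splits your $E(t)$ into $\tfrac{\log T}{\sqrt T}S$ and $\tfrac{\log T}{\sqrt T}\sqrt{\barbeta_{t-1}}\|\x_t\|_{\ibarV_{t-1}}$ and bounds the latter with the elliptical potential rather than the uniform bound $\|\x_t\|_{\ibarV_{t-1}}\le\lambda^{-1/2}$), but either route yields a lower-order contribution.
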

Note that we have made an assumption that, w.h.p., the hashing retrieves a $\ccH$-MIPS solution for every $T$ queries made by the algorithm.
One naive way to construct such a hashing is to build $T$ independent hashing schemes as follows, which is the first low-regret GLB algorithm with time sublinear in $N$, to our knowledge.
\begin{cor}\label{cor:x_t9_lsh}
  Let $T\ge 2$ and $\emph{\ccH} = \lt(1 + \fr{\log(T)}{\sqrt{T}}\rt)^{-1}$.
  Suppose we build $T$ independent hashings where each hashing is $\ccH$-MIPS w.p. at least $1-\dt/T$.
  Suppose we run QGLOC for $T$ iterations where at time $t$ we use $t$-th MIPS hashing to solve~\eqref{eq:qglocopt}.
  Then, w.p. at least $1-3\dt$, the regret bound is $\hat O( \kap^{-1}L(L+R) d^{5/4} \sqrt{T} \log^{7/4}(T) )$. 
\end{cor}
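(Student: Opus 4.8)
The plan is to obtain Corollary~\ref{cor:x_t9_lsh} from Theorem~\ref{thm:x_t9_lsh} by a single union bound that upgrades the per-hashing $\ccH$-MIPS guarantees into the ``all $T$ queries succeed'' event that theorem requires. Let $\cH$ denote the composite retrieval procedure that, at round $t\in[T]$, answers the single MIPS query $\q_t$ issued by QGLOC using the $t$-th of the $T$ independent hashings. Then $\cH$ is a legitimate instance of the hashing object appearing in Theorem~\ref{thm:x_t9_lsh}, so it suffices to check that, with probability at least $1-\dt$, $\cH$ returns a $\ccH$-MIPS vector on every one of the $T$ rounds.

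First I would verify this joint success event. Fix $t\in[T]$. The query $\q_t=[\hth_{t-1};\vec(\tfrac{\beta_{t-1}^{1/4}}{4c_0\barm_{t-1}}\ibarV_{t-1})]$ is a deterministic function of the arm selections and rewards of rounds $1,\dots,t-1$, and those earlier rounds invoked only the hashings $1,\dots,t-1$; hence $\q_t$ is independent of the random bits that define the $t$-th hashing. Consequently the $\ccH$-MIPS guarantee of the $t$-th hashing applies to $\q_t$ and fails with probability at most $\dt/T$. Taking a union bound over $t\in[T]$, with probability at least $1-T\cdot(\dt/T)=1-\dt$ all $T$ queries are served by $\ccH$-MIPS vectors, which is exactly the hypothesis of Theorem~\ref{thm:x_t9_lsh}.

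Finally I would invoke Theorem~\ref{thm:x_t9_lsh} with this $\cH$ and $\ccH=(1+\log(T)/\sqrt{T})^{-1}$: its conclusion holds with probability at least $1-3\dt$ (the extra two $\dt$'s being those inherited from the confidence-set event of Corollary~\ref{cor:cset_ONS} that underlies the regret analysis of QGLOC in Theorem~\ref{thm:x_t9}) and reads $\text{Regret}_T^{\QGLOC\la\cH\ra}=\hat O(\kap^{-1}L(L+R)d^{5/4}\sqrt{T}\log^{7/4}(T))$, which is the asserted bound.

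The step I expect to be the crux is the independence observation in the second paragraph: a MIPS hashing only certifies correctness for a query that is fixed independently of its random construction, whereas QGLOC chooses $\q_t$ adaptively from the interaction history. Allocating one fresh independent hashing per round is precisely what decouples each query from the hashing that serves it and legitimizes the round-wise union bound; a single reused hashing would not admit this argument, since later, adaptively chosen queries could be correlated with its hash seeds. Beyond this point the proof is a mechanical composition of Theorem~\ref{thm:x_t9_lsh} with the union bound, and carries over verbatim to $\QGLOC\la\cH\ra$ as in Theorem~\ref{thm:mips}.
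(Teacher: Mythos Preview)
Your proposal is correct and follows the same approach as the paper: a union bound over the $T$ per-round hashing failure events (each of probability at most $\dt/T$) yields the joint $\ccH$-MIPS success event with probability at least $1-\dt$, after which Theorem~\ref{thm:x_t9_lsh} is invoked. You are more explicit than the paper in spelling out the independence of $\q_t$ from the $t$-th hashing's randomness and in attributing the remaining $2\dt$ to the two events underlying Corollary~\ref{cor:cset_ONS}; the paper's proof simply states the union bound and cites Theorem~\ref{thm:x_t9_lsh} in two sentences.
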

\begin{proof}
  The probability that at least one of $T$ hashing schemes fails to output a $\ccH$-MIPS solution is at most $\dt$.
  Combining this with Theorem~\ref{thm:x_t9_lsh} completes the proof.
\end{proof}
However, it is easy to see that its space complexity is $\Omega(T)$, which is not space-efficient.
Of course, a better way would be to construct one hashing scheme that is, w.h.p., $\ccH$-MIPS for any sequence of $T$ queries.
However, this is nontrivial for the following reason.
In bandit algorithms, the second query depends on the result of the first query.
Here, the result of the query $\q_1$ is obtained based on the hash keys computed using the \emph{projection vectors}.
Now, the second query $\q_2$ is based on the result of querying $\q_1$, which means that $\q_2$ now correlates with the projection vectors.
This breaks the \emph{independence} of the query with the projection vectors, thus breaking the hashing guarantee.
One way to get around the issue is the union bound. 
This is possible when there exists a finite set of possible queries, which is indeed how~\cite{indyk12approximate} manage to show a guarantee on such `adaptive' queries.
In our case, unfortunately, there are infinitely many possible queries, and thus the union bound does not apply easily.
Resolving the issue above is of theoretical interest and left as future work.            

Meanwhile, it is hardly the case that the correlation between $\q_2$ and the projection vectors has an malignant effect in practice.
Indeed, many existing studies using hashing such as~\citet{jain10hashing} ignore the correlation issue and do not provide any guarantee on the adaptive queries.

\subsection{Proof of Theorem~\ref{thm:x_t9_lsh}}

\begin{proof}
  Assume that $\th^*$ belongs to $C^{\ONS}_t$, which happens w.h.p.
  In this proof, we drop $\QGLOC$ from $\x_t^{\QGLOC}$ and use $\x_t$.
  Denote by $\x_t^{\cH}$ the solution returned by the MIPS algorithm.
  We omit $\ONS$ from $\beta^{\ONS}_t$ to avoid clutter.
  Then, being $\ccH$-MIPS guarantees that, $\forall t \in [T]$,
\begin{equation}\begin{aligned} \label{hoful-mips-1} 
    \textstyle    \la \hatth_{t-1}, \x_t^{\cH} \ra + \fr{\beta_{t-1}^{1/4}}{4 c_0 \barm_{t-1}} { ||\x_t^{\cH}||^2_{\barV^{-1}_{t-1}} }
  \ge c_\cH \lt( \la \hatth_{t-1}, \x_t \ra + \fr{\beta_{t-1}^{1/4}}{4 c_0 \barm_{t-1}} { ||\x_t||^2_{\barV^{-1}_{t-1}} } \rt) \;.
\end{aligned}\end{equation}
  To avoid clutter, we use $\X$, $\y$, and $\bfeta$ in place of $\X_{t-1}$, $\y_{t-1}$, and $\bfeta_{t-1} $, respectively, when it is clear from the context.
  Note that, using the techniques in the proof of Theorem~\ref{thm:regret_o2cs},
\begin{equation}\begin{aligned} \label{eq:thm-hoful-mips-a} 
    \la \hatth_{t-1}, \x_t^{\cH} \ra  
    &= \la\th^*,\x_t^{\cH}\ra + \la \hth_{t-1} - \th^*, \x_t^{\cH} \ra \le S + \sqrt{\barbeta_{t-1}}||\x_t^\cH||_{\ibarV_{t-1}} \;.
\end{aligned}\end{equation}
  The instantaneous regret at time $t$ divided by $L$ is
  \begin{align*}
    \fr{r_t}{L} 
  &\le \la \th^*, \x_{t,*} \ra - \la \th^*, \x_t^{\cH} \ra \\
  &\le \la \hatth_{t-1}, \x_t \ra + \fr{\beta_{t-1}^{1/4}}{4 c_0 \barm_{t-1}} { ||\x_t||^2_{\barV^{-1}_{t-1}} } + c_0 \beta_{t-1}^{3/4} \barm_{t-1}  - \la \th^*, \x_t^{\cH} \ra \\
  &\stackrel{\eqref{hoful-mips-1}}{\le} \fr{1}{\ccH} \lt( \la \hatth_{t-1}, \x^{\cH}_t \ra + \fr{\beta_{t-1}^{1/4}}{4 c_0 \barm_{t-1}} { ||\x^\cH_t||^2_{\barV^{-1}_{t-1}} } \rt) + c_0 \beta_{t-1}^{3/4} \barm_{t-1}  - \la \th^*, \x_t^{\cH} \ra \\
  &=  {\lt(\fr{1}{\ccH} - 1\rt) \la \hatth_{t-1}, \x_t^{\cH} \ra} + 
  {\la \hatth_{t-1} - \th^*, \x_t^{\cH} \ra + \fr{1}{\ccH}\fr{\beta_{t-1}^{1/4}}{4 c_0 \barm_{t-1}} { ||\x_t^{\cH}||^2_{\barV^{-1}_{t-1}} } + c_0 \beta_{t-1}^{3/4} \barm_{t-1} } \\
  &\stackrel{\eqref{eq:thm-hoful-mips-a}}{\le} \underbrace{\fr{\log T}{\sqrt{T}} S}_{A_1(t)} + 
    \underbrace{\fr{\log T}{\sqrt{T}} \sqrt{\barbeta_{t-1}} || \x_t^{\cH} ||_{\ibarV_{t-1}} }_{A_2(t)}   + 
    \underbrace{\la \hatth_{t-1} - \th^*, \x_t^{\cH} \ra + \fr{1}{\ccH}\fr{\beta_{t-1}^{1/4}}{4 c_0 \barm_{t-1}} { ||\x_t^{\cH}||^2_{\barV^{-1}_{t-1}} } + c_0 \beta_{t-1}^{3/4} \barm_{t-1} }_{A_3(t)} \;.
  \end{align*}
  Note that $r_t \le 2LS$.
  Since $1/\ccH \le 2$, it is not hard to see that $\sum_{t=1}^T \min\{2LS,L\cdot A_3(t)\}$ is 
    \[O( \kap^{-1}L(L+R) d^{5/4} \sqrt{T} \log^{7/4}(T) )\] 
  using the same technique as the proof of Theorem~\ref{thm:x_t9}. 
  It remains to bound $\sum_{t=1}^T \min\{2LS,L\cdot A_1(t)\}$ and $\sum_{t=1}^T \min\{2S,L\cdot A_2(t)\}$:
  \begin{align*}
    L\sum_{t=1}^T \min\{2S,  A_1(t) \} &\le L\sum_{t=1}^T \min\{2S, \fr{\log T}{\sqrt T} 2S\} \\
    &\stackrel{(\log T < \sqrt{T})}{\le} L\sum_{t=1}^T  \fr{\log T}{\sqrt T} 2S   \\
    &= O(L\sqrt{T} \log T) \\
    L\sum_{t=1}^T \min\{2S, A_2(t) \}
    &= L\sum_{t=1}^T  \min\lt\{ 2S, \fr{\log T}{\sqrt T} \sqrt{\barbeta_{t-1}} || \x_t^{\cH} ||_{\ibarV_{t-1}} \rt\} \\
    &\le L\sum_{t=1}^T  \min\lt\{ 2S, \sqrt{\barbeta_{t-1}} || \x_t^{\cH} ||_{\ibarV_{t-1}} \rt\} \\
  \end{align*}
  Using the same argument as the proof of Theorem~\ref{thm:regret_o2cs} and Corollary~\ref{cor:regret_glocon_ons},
  \begin{align*}
    \sum_{t=1}^T \min\{2S,L\cdot A_2(t)\} = O\lt(\fr{L(L+R)}{\kap}d\sqrt{T}\log^{3/2}(T)\rt)  \;.
  \end{align*}
  Altogether, we notice that $\sum_{t=1}^T \min\{2S,L\cdot A_3(t)\}$ dominates the other terms.
  Notice that we have spent $\dt$ probability to control the event $\th^* \in C^{\ONS}_t$, another $\dt$ for controlling the deviation of $g_t$ which appears in the regret bound through $\beta_t$ as shown in the proof of Theorem~\ref{thm:regret_o2cs}, and another $\dt$ for ensuring the hashing guarantee.
  This sums to $3\dt$ and concludes the proof.
\end{proof}

\vspace{-5pt}
\section{Proof of Lemma~\ref{lem:p_l1}}
\label{sec:proof-lem_p_l1}
\vspace{-5pt}
\begin{proof}
%
%
We first recall that for L1 sampling:
$$ p_i = \frac{ \lv q_{i} \rv}{\lV \vq \rV_1} $$

Note that:
\begin{align*}
\lv G_k \rv &= \lv \frac{q_{i_k} a_{i_k}}{p_{i_k}} \rv \\
&= \frac{ \lv q_{i_k} a_{i_k} \rv}{\lv q_{i_k} \rv/ \lV \vq \rV_1 } \\
&=  \lV \vq \rV_1 \lv a_{i_k} \rv \\
&\leq \lV \vq \rV_1 \lV \va \rV_{\max} =: M
\end{align*}

This means that $G_k$ is a bounded random variable. 
Therefore, we can use Hoeffding's inequality to get a high-probability bound:
\begin{align*}
X_i &= G_i - \vq^\top \va \\
\Pr \left( \lv \frac{1}{m} \sum_{i=1}^m  X_i \rv \geq \epsilon \right) 
&\leq 2 \exp \left(- \frac{ m \epsilon^2}{2 M^2} \right) \\
&\leq 2 \exp \left(- \frac{ m \epsilon^2}{2 \lV \vq \rV_1^2 \lV \va \rV_{\max}^2} \right)
\end{align*}

\end{proof}

\vspace{-5pt}
\section{Comparison between L1 and L2 sampling} 
\label{sec:proof-lem_p_l2}
\vspace{-5pt}

We first show the known high probability error bound of L2, which has a polynomially decaying probability of failure.
\begin{lem} \cite[Lemma 3.4]{jain10hashing} \label{lem:p_l2}
  Define $G_k$ as in~\eqref{eq:def_iprod} with $\p=\p^{(\emph\tL2)}$.
  Then, given a target error $\eps>0$,
  \vspace{-4pt}
  \begin{equation}\begin{aligned}
    \textstyle \P\lt( \lt|\fr{1}{m}\sum_{k=1}^m G_k - \q^\T\a \rt| \ge \eps \rt) \le \fr{||\q||_2^2||\a||_2^2}{m\eps^2} \;.\label{eq:lem-p_l2}
  \end{aligned}\end{equation}
\end{lem}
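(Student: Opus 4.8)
The plan is to prove the tail bound by Chebyshev's inequality applied to the sample mean $\fr{1}{m}\sum_{k=1}^m G_k$; all of the real content is a one-line second-moment computation that exploits the specific form $p_i = q_i^2/||\q||_2^2$ of the L2 sampling distribution.

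First I would record the facts that make Chebyshev applicable: the $G_k$ are i.i.d., and (as already noted preceding~\eqref{eq:def_iprod}) unbiased, $\expt G_k = \sum_{i=1}^{d'} p_i \cdot \fr{q_i a_i}{p_i} = \q^\T\a$, so the sample mean also has expectation $\q^\T\a$. The key step is the second moment, where the L2 choice of $p_i$ produces the cancellation
\[
  \expt G_k^2 = \sum_{i=1}^{d'} p_i \lt(\fr{q_i a_i}{p_i}\rt)^2 = \sum_{i=1}^{d'} \fr{q_i^2 a_i^2}{p_i} = ||\q||_2^2 \sum_{i=1}^{d'} a_i^2 = ||\q||_2^2\,||\a||_2^2 \;,
\]
so $\Var(G_k) = ||\q||_2^2\,||\a||_2^2 - (\q^\T\a)^2 \le ||\q||_2^2\,||\a||_2^2$, and by independence $\Var\!\lt(\fr{1}{m}\sum_{k=1}^m G_k\rt) = \fr{1}{m}\Var(G_1) \le \fr{||\q||_2^2\,||\a||_2^2}{m}$.

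Finally I would invoke Chebyshev's inequality on $\fr{1}{m}\sum_{k=1}^m G_k$ about its mean $\q^\T\a$, which gives
\[
  \P\!\lt(\lt|\fr{1}{m}\sum_{k=1}^m G_k - \q^\T\a\rt| \ge \eps\rt) \le \fr{1}{\eps^2}\,\Var\!\lt(\fr{1}{m}\sum_{k=1}^m G_k\rt) \le \fr{||\q||_2^2\,||\a||_2^2}{m\eps^2}\;,
\]
which is exactly~\eqref{eq:lem-p_l2}. The honest answer to ``where is the obstacle'' is that there is none: the entire proof is the displayed identity for $\expt G_k^2$ followed by a textbook application of Chebyshev. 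The only point worth flagging is the structural contrast with the L1 bound of Lemma~\ref{lem:p_l1}, where $|G_k| = ||\q||_1\,|a_{i_k}| \le ||\q||_1\,||\a||_{\max}$ is uniformly \emph{bounded}, so Hoeffding's inequality applies and delivers a failure probability decaying exponentially in $m$; the L2 estimator admits no such uniform bound, which is precisely why one is forced back on Chebyshev with its merely polynomial $1/(m\eps^2)$ decay, as noted in the footnote in the main text.
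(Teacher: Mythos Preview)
Your proof is correct and follows essentially the same approach as the paper: bound $\Var(G_k)\le\|\q\|_2^2\|\a\|_2^2$ and apply Chebyshev's inequality to the sample mean. If anything, your version is slightly more explicit, since you display the second-moment computation showing the cancellation due to $p_i=q_i^2/\|\q\|_2^2$, whereas the paper simply asserts the variance bound.
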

\vspace{-8pt}

\begin{proof}
This proof is a streamlined version of the proof of Lemma 3.4 from~\cite{jain10hashing}. 
Note that
\[
  \Var G_k \leq \lV \vq \rV^2 \lV \va \rV^2 \;.
\] 
Define
\[
  X_m := \sum_{i=1}^m \left( G_i - \vq^\top \va \right) 
\]
Now, since $\vq^\top \va$ is deterministic,
\[
  \Var X_m = \sum_{i=1}^m \Var G_i \leq m \lV \vq \rV^2 \lV \va \rV^2 
\]
and,
$$ \mathbb{E} X_m = 0 $$

Let us apply Chebyshev's inequality to $X_m$:
\begin{align*}
\Pr \left( \lv X_m - \mathbb{E}X_m \rv \geq \alpha \right) &\leq \frac{\Var X_m}{\alpha^2} \\
\Rightarrow \Pr \left( \lv X_m \rv \geq m \epsilon \right) &\leq \frac{ m \lV \vq \rV^2 \lV \va \rV^2 }{m^2 \epsilon^2} \\
\Rightarrow \Pr \left( \lv \frac{1}{m} \sum_{i=1}^m G_i - \vq^\top \va \rv \geq \epsilon \right) &\leq \frac{ \lV \vq \rV^2 \lV \va \rV^2 }{m \epsilon^2} \\
\Rightarrow \Pr \left( \lv \frac{1}{m} \sum_{i=1}^m G_i - \vq^\top \va \rv \geq \epsilon' \lV \vq \rV \lV \va \rV  \right) &\leq \frac{ \lV \vq \rV^2 \lV \va \rV^2 }{m \epsilon'^2 \lV \vq \rV^2 \lV \va \rV^2 } \mbox{ , where $ \epsilon = \epsilon' \lV \vq \rV \lV \va \rV $} \\
\Rightarrow \Pr \left( \lv \tilde{\vq}^\top \va  - \vq^\top \va \rv \geq \epsilon' \lV \vq \rV \lV \va \rV  \right) &\leq \frac{ 1 }{m \epsilon'^2 } \\
\Rightarrow \Pr \left( \lv \tilde{\vq}^\top \va  - \vq^\top \va \rv \geq \epsilon' \lV \vq \rV \lV \va \rV  \right) &\leq \frac{ 1 }{c} \mbox{ , where $ m = c/\epsilon'^2$} \\
\Rightarrow \Pr \left( \lv \tilde{\vq}^\top \va  - \vq^\top \va \rv \geq \epsilon' \lV \vq \rV \lV \va \rV  \right) &\geq 1 - \frac{ 1 }{c}
\end{align*}
\end{proof}

To compare the concentration of measure of L1 and L2, we look at so-called ``sample complexity'' of L1 and L2.
Let $\dt'$ be the target failure probability (set the RHS of~\eqref{eq:lem-p_l1} to $\dt'$).
Then, the sample complexity of L2 and L1 is $\fr{||\q||^2_2||\a||^2_2}{\dt' \eps^2}$ and $\log(2/\dt')\fr{2||\q||^2_1||a||^2_{\max}}{\eps^2}$, respectively.
Note L2 has a smaller scaling with $\q$ since $||\q||_2 \le ||\q||_1$, but L1 has a better scaling with $\a$ since $||\a||_{\max} \le ||\a||_2$.
More importantly, the concentration bound of L2 decays polynomially with $m$ whereas that of L1 decays exponentially.
However, it is unclear whether or not the polynomial tail of L2 is just an artifact of analysis.
In fact, we find that L2 has an exponential tail bound, but its scaling with $\q$ can be very bad.

While we state the result below in Lemma~\ref{lem:p_l2_exp}, the key here is that the magnitude of $G_k$ induced by L2 is at most $||\q||_2^2 \max_i |a_i|/|q_i|$ while that induced by L1 is at most $||\q||_1 ||\a||_{\max} $.
As $q_i$ goes to zero, the support of the L2-based $G_k$ can be arbitrarily large.
Unless one knows that $q_i$ is sufficiently bounded away from 0 (which is false in QGLOC), L2-based $G_k$ raises a concern of having a ``thick'' tail.\footnote{
  Still, the distribution does not belong to so-called ``heavy-tailed'' distributions.
}
\begin{lem}\label{lem:p_l2_exp}
  Use $\p = \p^{(\text{L2})}$. Then,
  \begin{equation*}\begin{aligned}
    \P \left( \lv \frac{1}{m} \sum_{i=1}^m  X_i \rv \geq \epsilon \right) \leq 2 \exp \left(- \frac{ m \epsilon^2}{2 \lV \vq \rV_2^4 \max_i \lv a_i/q_i \rv^2 } \right) \\
  \end{aligned}\end{equation*}
\end{lem}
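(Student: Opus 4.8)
The plan is to follow the proof of Lemma~\ref{lem:p_l1} essentially verbatim, the only change being the bound on the magnitude of $G_k$. Under L2 sampling $p_i = q_i^2/\lV\vq\rV_2^2$, so on the event that coordinate $i_k$ is drawn (necessarily $q_{i_k}\neq 0$),
\[
  \lv G_k \rv = \fr{\lv q_{i_k} a_{i_k}\rv}{q_{i_k}^2/\lV\vq\rV_2^2} = \lV\vq\rV_2^2\,\fr{\lv a_{i_k}\rv}{\lv q_{i_k}\rv} \le \lV\vq\rV_2^2\,\max_i \lv a_i/q_i \rv =: M,
\]
where the maximum ranges over coordinates in the support of $\p^{(\text{L2})}$ (coordinates with $q_i=0$ have $p_i=0$, are never sampled, and contribute nothing to $\vq^\top\va$, so $\expt G_k = \vq^\top\va$ still holds). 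Hence each $G_k\in[-M,M]$ and $X_k := G_k - \vq^\top\va$ is a zero-mean random variable whose range lies in an interval of width $2M$.

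Next I would apply Hoeffding's inequality to the i.i.d.\ sum $\sum_{k=1}^m X_k$ exactly as in Lemma~\ref{lem:p_l1}: since each summand is supported on an interval of width $2M$,
\[
  \P\lt(\lv \tfrac1m\textstyle\sum_{k=1}^m X_k\rv \ge \eps\rt) = \P\lt(\lv \textstyle\sum_{k=1}^m X_k\rv \ge m\eps\rt) \le 2\exp\lt(-\fr{2(m\eps)^2}{m(2M)^2}\rt) = 2\exp\lt(-\fr{m\eps^2}{2M^2}\rt),
\]
and substituting $M = \lV\vq\rV_2^2\max_i\lv a_i/q_i\rv$ yields the stated inequality (and when some $q_i=0$ with $a_i\neq 0$ the right-hand side is vacuous, so the bound still holds as written).

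There is no real mathematical obstacle here; the point of the lemma is interpretive, so the ``hard part'' is framing rather than computation. In Lemma~\ref{lem:p_l1} the boundedness constant is $M=\lV\vq\rV_1\lV\va\rV_{\max}$, which contains no ratio of coordinates, whereas the L2 constant $\lV\vq\rV_2^2\max_i\lv a_i/q_i\rv$ blows up as soon as some $q_i\to 0$ with $a_i\neq 0$. Thus although L2 does admit an exponentially decaying tail bound, its effective variance proxy degrades without bound, which is precisely the ``thick tail'' phenomenon referred to after Lemma~\ref{lem:p_l1}. I would close by noting the resulting incomparability: $\lV\vq\rV_2\le\lV\vq\rV_1$ favors L2 in the dependence on $\vq$, but $\max_i\lv a_i/q_i\rv$ can be far larger than $\lV\va\rV_{\max}$, so neither of the two exponential bounds dominates the other in general.
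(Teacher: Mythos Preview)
Your proposal is correct and mirrors the paper's own proof essentially step for step: bound $|G_k|\le \lV\vq\rV_2^2\max_i|a_i/q_i|=:M'$ under L2 sampling, then apply Hoeffding's inequality to the centered i.i.d.\ sum to obtain the stated exponential tail. Your added remark about coordinates with $q_i=0$ being outside the support of $\p^{(\tL2)}$ is a small extra bit of care the paper omits, but otherwise the arguments are identical.
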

\begin{proof}
From the results in Lemma 3.4 of~\cite{jain10hashing} (restated as Lemma~\ref{lem:p_l2} in our paper), it seems that L2 sampling has polynomial tails, which is considered as heavy-tailed.
Here we show that the tail of L2 sampling does not decay polynomially but exponentially by using Hoeffding's inequality instead of Chebychev's inequality.
However, the scale that controls the tail thickness is quite bad, and the tail can be arbitrarily thick regardless of the norm or variance of $\q$.

We first recall that for L2 sampling:
$$ p_i = \frac{q_i^2}{\lV \vq \rV_2^2} $$

Let us first show an upper bound for $\lv G_k \rv$:
\begin{align*}
\lv G_k \rv &= \lv \frac{q_{i_k} a_{i_k}}{p_{i_k}} \rv \\
&= \frac{ \lv q_{i_k} a_{i_k} \rv}{ q_{i_k}^2/ \lV \vq \rV_2^2 } \\
&= \lV \vq \rV_2^2 \lv \frac{a_{i_k}}{q_{i_k}} \rv \\
&\leq \lV \vq \rV_2^2 \max_i \lv \frac{a_{i}}{q_{i}} \rv =: M' \\
\end{align*}

Recall that $\mathbb{E} G_k = 0$ and from above, $\lv G_k \rv \leq M'$. We can now apply Hoeffding's inequlity:
\begin{align*}
X_i &= G_i - \vq^\top \va \\
\Pr \left( \lv \frac{1}{m} \sum_{i=1}^m  X_i \rv \geq \epsilon \right) 
&\leq 2 \exp \left(- \frac{ m \epsilon^2}{2 {M'}^2} \right) \\
&\leq 2 \exp \left(- \frac{ m \epsilon^2}{2 \lV \vq \rV_2^4 \max_i \lv\frac{a_{i}}{q_{i}} \rv^2 } \right) \\
\end{align*}
This is an exponential tail bound compared to the polynomial tail bound in Lemma 3.4 in~\cite{jain10hashing}. We do note that the term $\max_i \lv\frac{a_{i}}{q_{i}} \rv$ could be very bad if $\min_i q_i$ is small and the corresponding $a_i$ is non-zero.
\end{proof}  

The second comparison is on the variance of L1 and L2.
The variance of $G_k$ based on $\p^{(\tL2)}$ can be shown to be  $||\q||_2^2||\a||_2^2 - (\q^\T \a)^2$.
With L1, the variance of $G_k$ is now $||\q||_1 (\sum_{i=1}^d |q_i| a_i^2) - (\q^\T \a)^2$ whose first term can be upper-bounded by $||\q||_1 ||\q||_2 ||\a||_4^2 $ using Cauchy-Schwartz.
This means that the variance induced by L1 scales larger with $\q$ than by L2 since $||\q||_2 \le ||\q||_1$ and scales smaller with $\a$ since $||\a||_4^2 \le ||\a||_2^2$. 
Thus, neither is an absolute winner.
However, if the vectors being inner-producted are normally distributed, then L1 has a smaller variance than L2 in most cases, for large enough $d$ as we show in the lemma below.
As mentioned in the main text, our projection vectors are truly normally distributed.

\begin{lem}\label{lem:l1-var}
Suppose that $\vq \sim \mathcal{N}(0, \I_d)$ and $\va \sim \mathcal{N}(0, \I_d)$ and that $\vq$ and $\va$ are independent of each other. Then $\mathbb{E} \left( \lV \vq \rV_1 \sum_{i=1}^d \lv q_i \rv a_i^2 \right) \leq \mathbb{E} \left( \lV \vq \rV_2^2 \lV \a \rV_2^2 \right)$ for large enough $d$.
\label{lem:lem_var_gaussian}
\end{lem}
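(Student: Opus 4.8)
The plan is to reduce both expectations to elementary Gaussian moments. First I would handle the right-hand side: by independence of $\vq$ and $\va$ together with $\mathbb{E} q_i^2 = \mathbb{E} a_i^2 = 1$,
\[
  \mathbb{E}\big( \|\vq\|_2^2 \|\va\|_2^2 \big) = \mathbb{E}\|\vq\|_2^2 \cdot \mathbb{E}\|\va\|_2^2 = d^2 \;.
\]
For the left-hand side I would condition on $\vq$. Since $\va$ is independent of $\vq$ and $\mathbb{E} a_i^2 = 1$, we have $\mathbb{E}\big[ \sum_{i=1}^d |q_i| a_i^2 \bigmid \vq \big] = \sum_{i=1}^d |q_i| = \|\vq\|_1$, and therefore $\mathbb{E}\big( \|\vq\|_1 \sum_{i=1}^d |q_i| a_i^2 \big) = \mathbb{E}\big[\|\vq\|_1 \cdot \mathbb{E}[\sum_i |q_i| a_i^2 \mid \vq]\big] = \mathbb{E}\big[ \|\vq\|_1^2 \big]$. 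Hence the claim is equivalent to the single-vector inequality $\mathbb{E}\big[ \|\vq\|_1^2 \big] \le d^2$.

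Next I would compute $\mathbb{E}\big[\|\vq\|_1^2\big]$ using that $|q_1|,\dots,|q_d|$ are i.i.d. half-normal with $\mathbb{E}|q_i| = \sqrt{2/\pi}$ and $\Var|q_i| = 1 - 2/\pi$. Then
\[
  \mathbb{E}\big[\|\vq\|_1^2\big] = \Var(\|\vq\|_1) + \big(\mathbb{E}\|\vq\|_1\big)^2 = \big(1 - \tfrac{2}{\pi}\big) d + \tfrac{2}{\pi} d^2 \;,
\]
and the inequality $\big(1-\tfrac{2}{\pi}\big)d + \tfrac{2}{\pi}d^2 \le d^2$ rearranges to $\big(1-\tfrac{2}{\pi}\big) d \le \big(1-\tfrac{2}{\pi}\big) d^2$, i.e. $d \le d^2$, which holds for every $d \ge 1$ (strictly for $d \ge 2$ since $1-2/\pi>0$). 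This in fact identifies the gap exactly as $\mathbb{E}\big(\|\vq\|_2^2\|\va\|_2^2\big) - \mathbb{E}\big(\|\vq\|_1 \sum_i |q_i| a_i^2\big) = \big(1-\tfrac{2}{\pi}\big)\big(d^2 - d\big)$, which, dividing by $d^2$ and subtracting the common $\mathbb{E}(\q^\T\a)^2 = d$ from both variances, reproduces the numbers quoted in Section~\ref{sec:iprod}.

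There is essentially no hard step: the argument is a direct moment computation. The only points requiring care are (i) the conditioning step that removes the dependence on $\va$ and collapses the left-hand side to $\mathbb{E}\|\vq\|_1^2$, and (ii) correctly recalling the mean and variance of a half-normal so that the $d$-linear term in $\mathbb{E}\|\vq\|_1^2$ carries the right (positive) coefficient $1-2/\pi$. It is precisely that positive linear term that forces the requirement $d^2 \ge d$ rather than the inequality being trivial, so stating the result for ``large enough $d$'' is conservative — it already holds for all $d \ge 1$.
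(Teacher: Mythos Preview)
Your proof is correct and follows essentially the same route as the paper: both sides reduce the left-hand expectation to $\mathbb{E}\|\vq\|_1^2$ by conditioning on $\vq$, and both evaluate the right-hand side as $d^2$ by independence. The paper computes $\mathbb{E}\|\vq\|_1^2$ by expanding the square into diagonal and off-diagonal terms (getting $d + \tfrac{2}{\pi}d(d-1)$), whereas you use the variance decomposition $\Var(\|\vq\|_1) + (\mathbb{E}\|\vq\|_1)^2$; the two expressions are identical. Your observation that the inequality in fact holds for every $d \ge 1$ (not merely ``large enough $d$'') is a small sharpening over the paper's stated conclusion.
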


\begin{proof}

First note that if $\x \sim \mathcal{N}(0, I_d)$, then:
\begin{align*} 
\mathbb{E} \lV \x \rV_2^2 &= d \;.
\end{align*}

Then, the RHS is
\begin{align*}
\mathbb{E} \lV \vq \rV_2^2 \lV \a \rV_2^2  &= \mathbb{E} \lV \vq \rV_2^2 \mathbb{E} \lV \a \rV_2^2 \\
&= d^2 \;.
\end{align*}

For the LHS:
\begin{align*}
\mathbb{E} \left( \lV \vq \rV_1 \sum_{i=1}^d \lv q_i \rv a_i^2 \right) &= \mathbb{E}_\vq \left( \lV \vq \rV_1 \sum_{i=1}^d \lv q_i \rv \mathbb{E}_\va a_i^2 \right) \\
&= \mathbb{E}_\vq \left( \lV \vq \rV_1 \sum_{i=1}^d \lv q_i \rv \right) \\
&= \mathbb{E}_\vq  \lV \vq \rV_1^2 \\
&= \mathbb{E}_\vq \left( \sum_{i=1}^d \lv q_i \rv^2 + 2 \sum_{i,j=1, i \neq j}^d \lv q_i \rv \lv q_j \rv \right) \\
&= \mathbb{E}_\vq \sum_{i=1}^d  \lv q_i \rv^2 + 2 \sum_{i,j=1, i \neq j}^d \mathbb{E}_\vq \lv q_i \rv \lv q_j \rv \\
&= d + 2 \frac{2}{\pi} \frac{d(d-1)}{2} \\
&= d + \frac{2}{\pi} d(d-1) \\
&= \frac{2}{\pi} d^2 + \frac{\pi - 2}{\pi} d  \;.
\end{align*}
where we used the fact that $\sum_{i=1}^d  \lv q_i \rv^2$ follows a Chi-squared distribution with $d$ degrees of freedom and that $\lv q_i \rv$ follows a half-normal distribution, whose mean is $\sqrt{2/\pi}$.

Since $2/\pi < 1$, for large $d$, $ \frac{2}{\pi} d^2 + \frac{\pi - 2}{\pi} d  < d^2$.
\end{proof}

\section{Hashing Implementation}

We implement hashing for QGLOC and GLOC-TS based on python package OptimalLSH\footnote{\url{https://github.com/yahoo/Optimal-LSH}}.
In practice, building a series of hashings with varying parameter shown in Section~\ref{sec:supp-hashing} can be burdensome.
Departing from the theory, we use so-called multi-probe technique~\cite{slaney12optimal} that achieves a similar effect by probing nearby buckets.
That is, upon determining a hash key and retrieving its corresponding bucket, one can also look at other hash keys that are different on only one component.
We use $k=12$ keys and $U=24$ tables.

\end{document}